\newtheorem{mydef}{Definition}[section]
\newtheorem{lemm}{Lemma}[section]
\newtheorem{proposition}{Proposition}
\newtheorem{property}{Property}[section]
\begin{document}

\begin{frontmatter}

\title{New advances in enumerative biclustering algorithms with online partitioning}

\author[myaddress1]{Rosana~Veroneze\corref{mycorrespondingauthor}}
\cortext[mycorrespondingauthor]{Corresponding author}
\ead{veroneze@dca.fee.unicamp.br / rveroneze@gmail.com}

\author[myaddress1]{Fernando~J.~Von~Zuben}

\address[myaddress1]{University of Campinas (DCA/FEEC), 400 Albert Einstein Street, Campinas, SP, Brazil}

\begin{abstract}
This paper further extends RIn-Close\_CVC, a biclustering algorithm capable of performing an efficient, complete, correct and non-redundant enumeration of maximal biclusters with constant values on columns in numerical datasets. By avoiding a priori partitioning and itemization of the dataset, RIn-Close\_CVC implements an online partitioning, which is demonstrated here to guide to more informative biclustering results. The improved algorithm is called RIn-Close\_CVC3, keeps those attractive properties of RIn-Close\_CVC, as formally proved here, and is characterized by: a drastic reduction in memory usage; a consistent gain in runtime; additional ability to handle datasets with missing values; and additional ability to operate with attributes characterized by distinct distributions or even mixed data types. The experimental results include synthetic and real-world datasets used to perform scalability and sensitivity analyses. As a practical case study, a parsimonious set of relevant and interpretable mixed-attribute-type rules is obtained in the context of supervised descriptive pattern mining.
\end{abstract}

\begin{keyword}
Enumerative biclustering \sep Online partitioning of numerical datasets \sep Efficient enumeration \sep Quantitative class association rules \sep Supervised descriptive pattern mining
\end{keyword}

\end{frontmatter}

\section{Introduction}
\label{sec:intro}

Biclustering is a powerful data analysis technique that, essentially,  consists in discovering submatrices indicating some regularities in a data matrix. It has been successfully applied in various domains, such as analysis of gene expression data, collaborative filtering, text mining, treatment of missing data, and dimensionality reduction. However, due to the complexity of the biclustering problems, most of the proposed biclustering algorithms are heuristic-based, leading to suboptimal solutions.

Nonetheless, in the areas of Formal Concept Analysis (FCA), Frequent Pattern Mining (FPM), and graph theory (GT), we have plenty of algorithms for enumerating all maximal biclusters with constant values (CTV) in a binary dataset. These maximal CTV biclusters are called formal concepts in FCA, closed frequent itemsets in FPM (being more specific, a closed frequent itemset corresponds to the column-set of a bicluster), and maximal bicliques in graph theory. Some examples of these enumerative algorithms are: Close-by-One (CbO) \cite{Kuznetsov1999}, In-Close \cite{Andrews2009}, In-Close2 \cite{Andrews2011}, FCbO \cite{Krajca2010,OutrataEtAl2012}, and LCM \cite{UnoEtAL2004}. Their enumeration process is characterized by being:

\begin{enumerate}
\item Efficient: it has time complexity linear in the number of biclusters and polynomial in the input size.
\item Complete: it finds all maximal biclusters in a dataset. Remark that a complete enumeration incorporates the results produced by any other biclustering solution (given the same restrictions of internal consistency and size). So, such biclustering solution is at least of equal quality when compared with the solution provided by any other contender. 
\item Correct: all biclusters obey the user-defined measure of internal consistency. For instance, in the case of the aforementioned enumerative algorithms, all biclusters are submatrices of ones.
\item Non-redundant: all biclusters are maximal and it does not enumerate the same maximal bicluster more than once. It is a very important property because the number of biclusters produced from a dataset can be very large. So, it is useful to identify the smallest representative set of biclusters from which all other biclusters can be derived \cite{TanEtAl2005}. The set of all maximal biclusters is necessary and sufficient to capture all the information about the biclusters, and has a much smaller cardinality than the set of all attainable biclusters \cite{Zaki2000}. It is important to note that the algorithm must have a smart solution to avoid redundancy, otherwise it will not be efficient. For instance, a procedure to be avoided is to check if a new bicluster is not redundant by comparing with all previously mined biclusters.
\end{enumerate}

Recently, Veroneze \textit{et al.} \cite{VeronezeEtAl2017} proposed a family of algorithms, called RIn-Close, also exhibiting these four key properties when enumerating biclusters directly in numerical (not only binary, but also integer or real-valued) data matrices. It may be considered a significant achievement, given that, before the RIn-Close family of algorithms, finding biclusters in numerical data matrices was accomplished by algorithms not exhibiting those four properties, or by a priori partitioning the numerical matrix followed by \textit{itemization}\footnote{Itemization is a terminology used in \cite{HenriquesEtAl2015}. In the FCA literature, a more common terminology is \textit{scaling} \cite{Ganter1997}.}. RIn-Close algorithms, in turn, perform an online partitioning of the numerical attributes, attenuating the information loss implied by any a priori partitioning operation.

\subsection{Contributions}

Even though Veroneze \textit{et al.} \cite{VeronezeEtAl2017} have proposed a number of contributions to the enumeration of biclusters in numerical datasets, there is still room for improvements that would bring many benefits to the users of RIn-Close algorithms. We focus here on the RIn-Close\_CVC algorithm \cite{VeronezeEtAl2017}, which enumerates biclusters with constant values on columns (CVC). This algorithm has already been applied with great success in gene ontology enrichment analysis \cite{Veroneze2016}, screening and identification of biomarkers \cite{Veroneze2016}, and identification of discriminative patterns in labeled datasets \cite{VeronezeVonZuben2017report}. We propose here a new version of RIn-Close\_CVC \cite{VeronezeEtAl2017}, named RIn-Close\_CVC3\footnote{We are naming our new algorithm \textit{RIn-Close\_CVC3} because we have used the name \textit{RIn-Close\_CVC2} \cite{VeronezeVonZuben2018} in an arXiv report where we first explained our strategy to remove the necessity of the symbol table present in RIn-Close\_CVC \cite{VeronezeEtAl2017}. That report does not encompass the other novelties presented here. We use the RIn-Close\_CVC2 nomenclature in our experimental section to highlight the breakthrough promoted by the elimination of the symbol table.}, with the following novelties:
\begin{itemize}
\item RIn-Close\_CVC \cite{VeronezeEtAl2017} is efficient in terms of runtime, but has a high computational cost in terms of memory usage. It must keep a symbol table in memory, whose keys are the row-sets of each found bicluster, to prevent a maximal bicluster to be found more than once. RIn-Close\_CVC3 does not use a symbol table to prevent redundant biclusters because the redundancy is avoided by further exploring the lexicographic order \cite{Ganter1984}. Nevertheless, this new version keeps the four aforementioned key properties of an enumerative biclustering algorithm. The experimental results show that in addition to a large reduction in memory usage, the new algorithm also brings an overall significant runtime gain \cite{VeronezeVonZuben2018}. This achievement opens new possibilities of application of enumerative algorithms with online partitioning. 
\item RIn-Close\_CVC \cite{VeronezeEtAl2017} is conceptually based on the FCA algorithm In-Close2 \cite{Andrews2011}, but since then we have new versions of the baseline algorithm In-Close2: In-Close3 \cite{Andrews2015}, In-Close4 \cite{Andrews2017}, and In-Close5 \cite{Andrews2018}.
In-Close2 was used as the starting point for In-Close4\footnote{In-Close3 is not competitive and is not going to influence our new proposal.}, which achieves a better performance than In-Close2 by making use of empty intersections (or intersections with size less than a user-defined parameter) in the computation of formal concepts. In-Close5 improves In-Close4 by adding the feature of inheriting canonicity test failures. It is done in a simpler way when compared to what is done in previous existing algorithms, such as FCbO. Here, we generalize the contributions proposed in \cite{Andrews2017, Andrews2018} to incorporate them into RIn-Close\_CVC3, as will be explained along the manuscript. It promotes  an even better runtime, specially for datasets with many attributes.
\item It handles datasets with missing values by looking for biclusters exclusively in regions of the dataset with no missing values \cite{Veroneze2016}. In this way, the sparser the matrix, the faster the enumeration \cite{Veroneze2016}. This new ability can also be explored to avoid uninteresting biclusters, speeding up the enumeration process (for instance, if the user is not interested in biclusters with some specific values, these values can be interpreted as missing values). 
\item It handles datasets characterized by attributes with very distinct distributions, and even mixed data types. For instance, RIn-Close\_CVC3 can enumerate biclusters from a mixed-attribute dataset, exhibiting numerical (discrete or continuous) and categorical (ordinal or nominal) attributes concomitantly \cite{VeronezeVonZuben2017report}.
\end{itemize}


Other contributions of this work are:
\begin{itemize}
\item By the first time, we formally demonstrated that RIn-Close\_CVC and its new version, RIn-Close\_CVC3, have the four aforementioned key properties.
\item More general types of biclusters, such as biclusters with coherent values (CHV) and order-preserving submatrices (OPSM), can be mined from CVC biclusters, as stated in \cite{HenriquesMadeira2014, Veroneze2016}. So, advances in the enumeration of CVC biclusters bring immediate benefits to the enumeration of other types of biclusters.
\item We provide a didactic example of how to handle mixed-attribute datasets using our new algorithm RIn-Close\_CVC3.
\item Following the formalization of enumerative biclustering approaches for non-binary datasets provided by Henriques et al. \cite{HenriquesEtAl2015}, which divides the proposals into two classes (a priori and online partitioning), we by the first time provide an extensive didactic example that highlights strengths and peculiarities of the two classes of approaches. The conclusions that we draw from this example may help many analysts to choose the most indicated approach for their applications.
\item We further explore the strong connection between biclustering and FPM and show that a bicluster provides all necessary information to build a (quantitative) itemset, which is a component of a (quantitative) association rule \cite{AgrawalEtAl1993, SrikantAgrawal1996} or the antecedent of a (quantitative) class association rule \cite{LiuEtAl1998, VeronezeVonZuben2017report}. Furthermore, rules are simple and interpretative formats to present biclusters to the user, indicating that enumerative biclustering techniques can bring great benefits to \textit{supervised descriptive pattern mining} \cite{NovakEtAl2009, VenturaLuna2018sdpm}.
\end{itemize}

Our RIn-Close implementation is available at \url{https://github.com/rveroneze/rinclose}. We have updated all algorithms of the RIn-Close family to handle missing values and to incorporate the contributions proposed in \cite{Andrews2017, Andrews2018}.

\subsection{Structure of the paper}

The remainder of the paper is organized as follows. Section~\ref{sec:bic} introduces definitions and a mathematical notation for biclustering, and gives a short view of related areas. Section~\ref{sec:RelWorks} is devoted to connected works. Section~\ref{sec:recap} reviews the algorithms In-Close5 \cite{Andrews2018} and RIn-Close\_CVC \cite{VeronezeEtAl2017}. Section~\ref{sec:rinclosecvc3} presents and analyzes the main contribution of this paper: the algorithm RIn-Close\_CVC3. Experimental results are discussed in Section~\ref{sec:expresults}, and we conclude in Section~\ref{sec:conclusion}. \ref{sec:appendix_ma} illustrates how to mine biclusters in a mixed-attribute dataset, and \ref{sec:appendix_class1x2} compares the enumerative biclustering approaches with a priori and online partitioning.

\section{Biclustering}
\label{sec:bic}
The formalism used here to describe a bicluster is based on \cite{VeronezeEtAl2017}. Let $\mathbf{A}_{n \times m}$ be a data matrix with the row index set $X = \left \{ 1, 2,..., n \right \}$ and the column index set $Y = \left \{ 1, 2, ...,m \right \}$. Each row represents an object, and each column represents an attribute. Each element $a_{ij} \in \mathbf{A}$ holds the relationship between object $i$ and attribute $j$. We use $(X,Y)$ to denote the entire matrix $\mathbf{A}$. Considering that $I \subseteq X$ and $J \subseteq Y$, $\mathbf{A}_{IJ} = (I, J)$ corresponds to the submatrix of $\mathbf{A}$ with the row index subset $I$ and column index subset $J$.

\begin{mydef}
A bicluster is a submatrix $(I,J)$ of the data matrix $\mathbf{A}_{n \times m}$ such that the rows in the index subset $I = \left \{ i_1,..., i_k \right \}$ ($I \subseteq X$ and $k \leq n$) exhibit a consistent pattern across the columns in the index subset $J = \left \{ j_1,..., j_s \right \}$ ($J \subseteq Y$ and $s \leq m$). Simultaneously, the columns of the same submatrix $(I,J)$ also exhibit a consistent pattern across the rows.
\label{def:bic}
\end{mydef}

Thus, a bicluster $(I,J)$ is a $k \times s$ submatrix of the matrix $\mathbf{A}$, not necessarily with contiguous rows and columns, such that it meets a certain internal consistency criterion \cite{MadeiraOliveira2004}. A biclustering algorithm looks for a set of biclusters $\mathfrak{B} = (I_l, J_l)$, $l = 1, ..., q$, with the total number of biclusters, $q$, being dependent on the characteristics of the selected biclustering algorithm, on the constraints imposed (for instance, size and consistency criterion), and on the behavior of the dataset being analyzed. The coverage of a biclustering solution $\mathfrak{B}$ is given by $cov(\mathfrak{B}) = \left | \bigcup_{(I_l,J_l)} I_l \times J_l \right |.$ The coverage metric indicates the portions of the dataset that is explored by a biclustering solution. The biclusters extract and summarize information from the covered areas.

Depending on the pattern of internal consistency, there are four major types of biclusters \cite{MadeiraOliveira2004}: ($i$) biclusters with constant values (CTV), ($ii$) biclusters with constant values on columns (CVC) or rows (CVR), ($iii$) biclusters with coherent values (CHV), and ($iv$) biclusters with coherent evolutions (CHE). There are many subtypes of CHE biclusters, with the order-preserving submatrix (OPSM) biclusters being the most popular among them.

Since we propose here a new version of RIn-Close\_CVC, we are focusing on CVC biclusters. Also, as CTV biclusters are a special case of CVC biclusters, we will give these two definitions in what follows. See \cite{MadeiraOliveira2004, VeronezeEtAl2017} for the definitions and examples of all types of biclusters. In our definitions, a user-defined parameter $\epsilon \geq 0$ determines the maximum perturbation (residue) allowed in a \textbf{correct} bicluster.

\begin{mydef}[CTV biclusters]
A \emph{CTV bicluster} is a submatrix $(I, J)$ of matrix $\mathbf{A}_{n \times m}$ such that

\vspace*{-0.6cm}

\begin{equation}
\max_{i \in I, j \in J} (a_{ij}) - \min_{i \in I, j \in J} (a_{ij}) \leq \epsilon \text{, with } \epsilon \ge 0.
\label{eq:ctvbic}
\end{equation}
\label{def:ctvbic}
\end{mydef}

\vspace*{-1cm}

\begin{mydef}[CVC biclusters]
A \emph{CVC bicluster} is a submatrix $(I, J)$ of matrix $\mathbf{A}_{n \times m}$ such that

\vspace*{-0.6cm}

\begin{equation}
\max_{i \in I} (a_{ij}) - \min_{i \in I} (a_{ij}) \leq \epsilon, \forall j \in J \text{, with } \epsilon \ge 0.
\label{eq:cvcbic}
\end{equation}
\label{def:cvcbic}
\end{mydef}

\vspace*{-1cm}

Biclusters that can be mined using $\epsilon = 0$ are named here \textit{perfect biclusters}, otherwise they are named \textit{perturbed biclusters}. The RIn-Close family of algorithms \cite{VeronezeEtAl2017} has specialized algorithms for mining perfect biclusters that are simpler and faster than their versions for mining perturbed biclusters.

The definition of a CVR bicluster is the equivalent transpose of the definition of a CVC bicluster. So, we can mine CVR biclusters by transposing the original data matrix and using an algorithm to mine CVC biclusters. Note that CVC and CVR biclusters 
are generalizations of CTV biclusters, as demonstrated in Lemma \ref{lem:bicCTVtoCVC}.

\begin{lemm}
A CTV bicluster with residue $\epsilon$ is a CVC (CVR) bicluster with residue $\epsilon'$ such that $\epsilon' \leq \epsilon$.
\label{lem:bicCTVtoCVC}
\end{lemm}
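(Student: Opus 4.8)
The plan is to exhibit, for an arbitrary CTV bicluster $(I,J)$ with residue $\epsilon$, an explicit value $\epsilon'$ for which $(I,J)$ satisfies Definition~\ref{def:cvcbic}, and then verify $\epsilon' \le \epsilon$. The natural candidate is the smallest admissible CVC residue for this submatrix, namely
\[
\epsilon' = \max_{j \in J}\left(\max_{i \in I}(a_{ij}) - \min_{i \in I}(a_{ij})\right).
\]

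First I would observe that, for every fixed column $j \in J$, the per-column extrema are taken over a subset of the entries ranged over by the global extrema appearing in~\eqref{eq:ctvbic}; hence $\max_{i \in I}(a_{ij}) \le \max_{i \in I,\, j \in J}(a_{ij})$ and $\min_{i \in I}(a_{ij}) \ge \min_{i \in I,\, j \in J}(a_{ij})$. Subtracting these two inequalities gives $\max_{i \in I}(a_{ij}) - \min_{i \in I}(a_{ij}) \le \max_{i \in I,\, j \in J}(a_{ij}) - \min_{i \in I,\, j \in J}(a_{ij}) \le \epsilon$ for every $j \in J$, where the final inequality is exactly the CTV hypothesis~\eqref{eq:ctvbic}. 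Taking the maximum over $j \in J$ yields $\epsilon' \le \epsilon$; and by construction $\max_{i \in I}(a_{ij}) - \min_{i \in I}(a_{ij}) \le \epsilon'$ for all $j \in J$, so $(I,J)$ is a CVC bicluster with residue $\epsilon'$ in the sense of Definition~\ref{def:cvcbic}. Moreover $\epsilon' \ge 0$, since each per-column range is nonnegative, so $\epsilon'$ is an admissible residue.

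For the CVR statement I would invoke the remark made just before the lemma: the CVR definition is the transpose of the CVC definition, and the CTV condition~\eqref{eq:ctvbic} is invariant under transposition, since the global maximum and minimum over $I \times J$ are unchanged. Hence applying the CVC argument to the transposed matrix gives the CVR conclusion verbatim, with $\epsilon' = \max_{i \in I}\left(\max_{j \in J}(a_{ij}) - \min_{j \in J}(a_{ij})\right) \le \epsilon$.

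There is essentially no hard step here; the only point requiring care is the logical reading of ``a bicluster with residue $\epsilon$'': a submatrix satisfying~\eqref{eq:cvcbic} for a given bound also satisfies it for every larger bound, so the real content of the lemma is the sharp inequality $\epsilon' \le \epsilon$ for the \emph{tightest} $\epsilon'$, which is precisely what the per-column-versus-global extrema comparison delivers. (If desired, one can note the inequality is typically strict: whenever the global maximal and minimal entries of the CTV bicluster lie in different columns, $\epsilon' < \epsilon$. But the lemma claims only $\epsilon' \le \epsilon$, which the argument above establishes.)
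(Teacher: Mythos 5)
Your proof is correct and follows essentially the same idea as the paper's: the per-column range is bounded by the global range of the submatrix, hence the tightest CVC residue $\epsilon'$ satisfies $\epsilon' \le \epsilon$. You merely make explicit (with the formula for $\epsilon'$ and the extrema inequalities) what the paper states informally, and handle CVR by transposition exactly as the paper indicates.
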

\begin{proof}
If a CTV bicluster has residue $\epsilon$, it means that the maximum pairwise variation between its elements is $\epsilon$. Given that the elements are not restricted to be part of the same column, the maximum variation per column, considering all the columns, may be less than $\epsilon$. Therefore, $\epsilon' \leq \epsilon$.

The proof for a CVR bicluster is equivalent.
\end{proof}

\subsection{Maximality and monotonicity}
\label{subsec:bicMaxProp}

\begin{mydef}[Maximal bicluster]
Let $(I,J)$ be a correct bicluster of any type, it is called a \emph{maximal bicluster} if and only if:
\begin{itemize}
\item $\forall x \in X \setminus I$, $(I \cup \{x\}, J)$ is not a \emph{correct} bicluster (there is no additional row that could be included in the bicluster without violating the desired pattern of internal consistency), and
\item $\forall y \in Y \setminus J$, $(I, J \cup \{y\})$ is not a \emph{correct} bicluster (the same for any additional column).
\end{itemize}
\label{def:maximal}
\end{mydef}

\begin{property}[Anti-Monotonicity]
Let $(I,J)$ be a correct bicluster. Any submatrix $(I', J')$, where $I' \subseteq I$ and $J' \subseteq J$, is also a \emph{correct} bicluster.
\end{property}

\begin{property}[Monotonicity]
Let $(I,J)$ be a correct and maximal bicluster. Any supermatrix $(I',J')$, where $I' \times J' \supset I \times J$, is not a \emph{correct} bicluster.
\end{property}

Usually, the efficient enumerative algorithms of FCA and FPM areas are based on the (anti-~)~monotonicity property \cite{Besson2007}, as well as the RIn-Close family of algorithms. In fact, we do not know any efficient enumerative biclustering algorithm that is not based on these properties. Our given definitions of CTV and CVC biclusters are in accordance with these two properties.

\subsection{Generalization of the definition of a CVC bicluster}
\label{ssec:ma}

We propose a generalization of the definition of a CVC bicluster \cite{VeronezeEtAl2017} that allows us to handle datasets characterized by attributes with distinct distributions, and even attributes with distinct data types:

\begin{mydef}[CVC biclusters]
A \emph{CVC bicluster} is a submatrix $(I, J)$ of matrix $\mathbf{A}_{n \times m}$ such that

\vspace*{-0.6cm}

\begin{equation}
\max_{i \in I} (a_{ij}) - \min_{i \in I} (a_{ij}) \leq \epsilon_j, \forall j \in J,
\label{eq:cvcbic2}
\end{equation}

\vspace*{-0.25cm}

\noindent where $\epsilon_j \geq 0$ is the user-defined maximum allowed perturbation for attribute $j$.
\label{def:cvcbic2}
\end{mydef}

In this way, attributes with different distributions, and even attributes of different data types can be jointly treated. An example of how to mine biclusters in a mixed-attribute dataset is provided in \ref{sec:appendix_ma}. Remarkably, this new definition of CVC biclusters also meets the (anti)-monotonicity property, which is fundamental for enumerative algorithms.

\subsection{Biclustering and formal concept analysis}
\label{subsec:fca}

Let $\mathbf{A}_{n \times m} = (X,Y)$ be a binary data matrix. In FCA, any binary matrix can be referred as a \emph{formal context} in the form:

\begin{mydef}[Formal Context]
A \emph{formal context} is a triple $(X, Y, Z)$ of two sets $X$ and $Y$, and a binary relation $Z \subseteq X \times Y$.
\end{mydef}

For a subset $I \subseteq X$, we define $I^{\uparrow} = \{y \in Y| a_{iy} = 1, \forall i \in I\}$ as the set of attributes common to all the objects in $I$. Similarly, for a subset $J \subseteq Y$, we define $J^{\downarrow} = \{x \in X| a_{xj} = 1, \forall j \in J\}$ as the set of objects common to all the attributes in $J$.

\begin{mydef}[Formal Concept]
A formal concept of the formal context $(X, Y, Z)$ is a pair $(I,J)$ with $I \subseteq X$, $J \subseteq Y$, and such that $I^{\uparrow} = J$, and $J^{\downarrow}=I$.
\end{mydef}

The row-set $I$ of a formal concept $(I,J)$ is called \emph{extent}, and the column-set $J$ is called \emph{intent}. By the definition, though many subsets $I$ can generate the same subset $J$, only the largest (closed) subset $I$ is part of a formal concept. The same reasoning is valid for $J$. Therefore, a formal concept is a maximal CTV bicluster of 1's in a binary data matrix.

Formal concepts are partially ordered by $(I_1, J_1) \leq (I_2, J_2) \Leftrightarrow I_1 \subseteq I_2 (\Leftrightarrow J_2 \subseteq J_1)$. With respect to this partial order, the set of all formal concepts forms a complete lattice called the \emph{concept lattice} of the formal context $(X,Y,Z)$.

\subsection{Biclustering and association mining}

The association mining \cite{ceglarEtAl2006} problem is divided in two sub-problems: (\emph{i}) the frequent itemset (pattern) mining problem, and (\emph{ii}) the problem of mining the association rules from these itemsets.

Let $\mathbf{A}_{n \times m} = (X,Y)$ be a binary matrix with each row representing a \emph{transaction}, and each column representing an \emph{item}.

\begin{mydef}
A subset $J = \left \{ j_1,..., j_s \right \} \subseteq Y$ is called an \emph{itemset}.
\end{mydef}

\noindent The support of an itemset $J$ is given by $sup(J) = |J^{\downarrow}|$.

The problem of mining all frequent itemsets can be described as follows: determine all subsets $J \subseteq Y$ such that the support of $J$ is above a user-defined parameter. Apriori \cite{AgrawalEtAl1993} is an example of algorithm that performs this task.


An option to reduce the computational cost of the frequent pattern mining problem without loss of information is to mine only the \emph{closed frequent itemsets}. A frequent itemset $J$ is called \emph{closed} if there exists no superset $H \supset J$ with $H^{\downarrow} = J^{\downarrow}$, i.e., a closed frequent itemset is the intent of a formal concept. The closed frequent itemsets are also called \emph{frequent concept intents}. For any itemset $J$, its concept intent is given by $J^{\downarrow\uparrow}$. Remarkably, a concept lattice contains all necessary information to derive the support of all (frequent) itemsets \cite{LakhalStumme2005}. So, the set of closed frequent itemsets uniquely determines the exact frequency of all frequent itemsets, and it can be orders of magnitude smaller than the set of all frequent itemsets \cite{ZakiEtAL2002}. Therefore, this approach drastically reduces the number of rules that have to be presented to the user, without any information loss \cite{LakhalStumme2005}. 

\begin{mydef}
	An \emph{association rule} (AR) is an expression of the form $J \Rightarrow H$, where $J$ and $H$ are itemsets, and $H \cap J = \emptyset$.
\end{mydef}


A user can be interested in a more specific set of association rules, where the consequences of the rules describe a target variable. These rules are known as class association rules (CARs).

\begin{mydef}
	A \emph{class association rule} (CAR) is an expression of the form $J \Rightarrow c$, where $J$ is an itemset and $c$ is a class label.
\end{mydef}

As far as we known, Srikant \& Agrawal \cite{SrikantAgrawal1996} provided the first attempt to address the problem of mining quantitative ARs. Their proposal relies on the partitioning of the numerical (quantitative) attributes and the usage of an Apriori-based algorithm. Here, we will generalize the concepts of itemsets, ARs, and CARs as follows.

\begin{mydef}
A \emph{quantitative itemset} $\mathfrak{J}$ is a set of attributes and their domain of interest, i.e., $\mathfrak{J} = \{j_1 \in D_1, j_2 \in D_2, ..., j_s \in D_s \}$, where $j_\mathfrak{i}$ is an attribute and $D_\mathfrak{i}$ is its domain of interest. If $j_\mathfrak{i}$ is a discrete attribute, $D_\mathfrak{i}$ is a finite set of values; if $j_\mathfrak{i}$ is a continuous attribute, $D_\mathfrak{i}$ is an interval.
\end{mydef}

\begin{mydef}
A \emph{quantitative association rule} (QAR) is an expression of the form $\mathfrak{J} \Rightarrow \mathfrak{H}$, where $\mathfrak{J}$ and $\mathfrak{H}$ are quantitative itemsets, and the intersection of the attributes of $\mathfrak{J}$ and $\mathfrak{H}$ is empty.
\label{def:qar}
\end{mydef}

\begin{mydef}
A \emph{quantitative class association rule} (QCAR) is an expression of the form $\mathfrak{J} \Rightarrow c$, where $\mathfrak{J}$ is a quantitative itemset and $c$ is a class label.
\end{mydef}

Let $J$ be equal to the column indexes of the attributes in $\mathfrak{J}$.  Let $I$ be the set of objects that meets the requirement imposed by the quantitative itemset $\mathfrak{J}$: $I = \{i \in X| a_{ij} \in D, \; \; \forall (j \in D) \in \mathfrak{J} \}$. Thus, the pair $(I,J)$ is a CVC bicluster. So, a CVC bicluster $(I,J)$ provides all the necessary information to build a quantitative itemset (that is a component of a QAR or of a QCAR), and vice versa.

We highlight that quantitative association rules are generally divided into two classes in the literature: frequent rules and distributional rules \cite{zhu2009}. Our definitions are based on \emph{frequent rules} because it is the case that has a direct relation with the biclustering problem. See \cite{zhu2009} for more details about distributional rules.

\subsection{Biclustering and supervised descriptive pattern mining}
Data analysis is generally divided into two main groups: predictive and descriptive tasks. Predictive tasks involve models used to estimate a target variable. Descriptive tasks aim at finding comprehensive patterns on unlabeled data. However, several application domains sometimes require that the detected patterns somehow match previously intended patterns, giving rise to the concept of \textit{supervised descriptive pattern mining} \cite{NovakEtAl2009, VenturaLuna2018sdpm} (also called \textit{supervised descriptive rule discovery}). 


Supervised descriptive discovery gathers the following main tasks \cite{VenturaLuna2018sdpm}: contrast set mining, emerging pattern mining, subgroup discovery, class association rules mining, and exceptional model mining.

Given the strong connection between biclustering and FPM, with the possibility of mining QCARs by means of biclustering, we claim that enumerative biclustering algorithms can be successfully applied in supervised descriptive pattern mining. We will provide some examples in Section~\ref{sec:expresults} that illustrate the ability of our approach in finding rules that properly describe the variable of interest.

We also remark that, once it was properly characterized \cite{LiuEtAl1998}, CARs have gained an increasing attention due to their descriptive power and many research studies have considered these rules not only for descriptive purposes but also to form accurate classifiers \cite{VenturaLuna2018sdpm}, known as associative classifiers.

\section{Related works}
\label{sec:RelWorks}

Given that biclustering can be interpreted as a challenging combinatorial optimization task, most of the existing algorithms are heuristic-based, potentially producing suboptimal biclustering solutions \cite{HenriquesEtAl2015}. On the other hand, enumerative approaches based on FCA, FPM, or GT promote an exhaustive yet efficient search. As already mentioned, in these areas we have plenty of algorithms for enumerating all maximal CTV biclusters of ones in a binary dataset. These algorithms are also explored for the enumeration of biclusters in non-binary datasets. In their survey, Henriques et al. \cite{HenriquesEtAl2015} divided the enumerative biclustering approaches in two classes: with a priori partitioning (named here Class1) and with online partitioning (named here Class2). \ref{sec:appendix_class1x2} provides an extensive didactic example that highlights strengths and peculiarities of the two classes of approaches.

Class1 relies on the usage of traditional FPM / FCA / GT enumerative algorithms by means of preprocessing the original dataset in order to obtain an \textit{itemized dataset} (see \ref{sec:appendix_class1x2} for details), which is essentially a binary dataset. Thus, any traditional enumerative algorithm can be used to mine the biclusters. Class2 is composed of enumerative biclustering algorithms that handles directly non-binary datasets by extending / generalizing traditional methods based on the (anti-)monotonicity property. This is the case of the RIn-Close family of biclustering algorithms.

The main drawback of Class1 approaches is that even if one is able to perform an optimized partitioning, which is generally not the case and tends to guide to a combinatorial explosion of possibilities, there will be loss of information, simply because Class1 algorithms resort to an a priori partitioning operation. Any a priori and feasible partitioning involves loss of information and can not perform what an online partitioning procedure would produce. Working directly on the numerical dataset, proposals from Class2, such as RIn-Close\_CVC, perform an online checking for correctness at the time the biclusters are being built.

Hence, Class2 approaches are more flexible than Class1 approaches. Besides, according to the convenience of the user, pre-processing (such as normalization, scaling, or partitioning) of any attribute is fully admissible for the Class2 approaches as well. Nonetheless, \textbf{the extension / generalization of traditional FPM / FCA algorithms implies an extra computational cost, and reducing this burden is one of the main purposes here}. Although we have many efficient enumerative algorithms in traditional FCA / FPM, this is yet not true for online partitioning. In what follows, we list all known Class2 algorithms for enumerating CVC biclusters that are direct competitors of RIn-Close\_CVC algorithms.

RAP \cite{PandeyEtAl2009} is an algorithm to mine CVR (CVC) biclusters that is based on Apriori \cite{AgrawalSrikant1994}. Apriori-based algorithms mine all frequent itemsets, not only the closed frequent itemsets. We emphasize that the usage of algorithms for mining closed frequent itemsets instead of frequent itemsets drastically reduces the number of biclusters due to non-redundancy, and without any information loss. The authors of RAP did not describe their strategy to avoid redundancy, but we conjecture that the best that can be done is a pairwise comparison of biclusters with $k$ and $k+1$ columns or checking the maximality of each bicluster in the original dataset (by trying to add more columns).

ET-bicluster \cite{GuptaEtAl2011} extends the previous approach, RAP, to discover noisy CVR (CVC) biclusters, although an exhaustive enumeration of biclusters is not guaranteed. ET-bicluster also adopts an Apriori-like strategy, generating $(k+1)$-level biclusters from $k$-level biclusters.

In \cite{CodocedoNapoli2014a, CodocedoNapoli2014b}, the authors proposed a method based on \emph{partition pattern structures} (PPS) \cite{BaixeriesEtAl2014}. 
While FCA algorithms mine formal concepts, PPS algorithms mines \textit{partition pattern concepts} (\textit{pp-concepts}). In \cite{CodocedoNapoli2014a}, the CVC biclusters extracted from the pp-concepts are \textbf{perfect}. The proposal in \cite{CodocedoNapoli2014b} generalizes it by using equivalence relations \cite{Besson2007}, so it can mine perturbed CVC biclusters as well. However, the authors imposed that the partitions should be disjoint. Other important aspect is that, while a pp-concept $(B, d)$ (where $B$ is the columns and $d$ is the partitions of the pp-concept) is maximal by definition, a bicluster $(p, B)$ (with $p \in d$) mined from the pp-concept $(B, d)$ can be non-maximal.

In \cite{KaytoueEtAl2014}, the authors revisited the proposals of mining CVC biclusters using \emph{interval pattern structures} (IPS) \cite{Kaytoue2011} and PPS \cite{CodocedoNapoli2014a, CodocedoNapoli2014b} (in this work, the partitions are not restricted to be disjoint). They also proposed an approach based on Triadic Concept Analysis (TCA) \cite{Lehmann1995}. Again we have that interval pattern concepts and triadic concepts are maximal by definition, but not all CVC biclusters extracted from them are maximal.

So, the difference among PPS, IPS, and TCA techniques is not given by the final number of maximal biclusters, but by the number of concepts found and their post-processing complexity to extract the maximal biclusters from them \cite{KaytoueEtAl2014}. For instance, according to Codocedo and Napoli \cite{CodocedoNapoli2014b}, their experiments showed that less than 20\% of the pp-concepts within the pp-lattice actually hold a maximal bicluster. Thus, the efficiency of these proposals depends on the efficiency of the additional step of redundancy suppression. According to Kaytoue \textit{et al.} \cite{KaytoueEtAl2014}, the post-processing of TCA is linear w.r.t. the number of triadic concepts found, while for IPS is linear w.r.t. the number of interval pattern concepts times the number of columns of the dataset squared, and for PPS is linear w.r.t. the number of super-sub pp-concept relations in the tolerance block pattern concept lattice.

However, in \cite{CodocedoNapoli2014a, CodocedoNapoli2014b}, the authors proposed a much faster strategy to suppress redundancy in PPS by breaking the pp-lattice during its construction. They show that it is possible to know if a bicluster $(p, B)$ mined from the pp-concept $(B, d)$ is maximal by means of looking to the super pp-concepts of the pp-concept $(B, d)$. It is still necessary to mine the super-sub pp-concepts in order to promote the verifications, so the overall worst-case of this proposal will not have time complexity linear in the number of maximal biclusters. Nevertheless, according to Kaytoue \textit{et al.} \cite{KaytoueEtAl2014}, this strategy cuts in half the runtime, being the fastest one among the three approaches. Therefore, as PPS is the fastest competitor, we will compare its performance against RIn-Close\_CVC algorithms in the experimental section of this work. Still according to Kaytoue \textit{et al.} \cite{KaytoueEtAl2014}, similar improvements for IPS and TCA could also be implemented but are still a matter of research.

Table~\ref{tab:comparison_cvc} summarizes the above considerations. As far as we known, RIn-Close\_CVC algorithms are the only ones that exhibit the four key properties (efficiency, completeness, correctness and non-redundancy) when enumerating CVC biclusters with a maximum perturbation $\epsilon$.

\begin{table}[!htb]
\centering
\small
\caption{Comparison between RIn-Close\_CVC and its Class2 competitors.}
\begin{tabular}{ccccc}
\toprule
& Complete	& Correct & Non-Redundant & Efficient \\
\midrule
RAP \cite{PandeyEtAl2009} & $\checkmark$ & $\checkmark$ &  &  \\
ET-bicluster \cite{GuptaEtAl2011} & & $\checkmark$ &  &  \\
PPS \cite{KaytoueEtAl2014} & $\checkmark$ & $\checkmark$ & $\checkmark$ &  \\
IPS \cite{KaytoueEtAl2014} & $\checkmark$ & $\checkmark$  & $\checkmark$ &  \\
TCA \cite{KaytoueEtAl2014} & $\checkmark$ & $\checkmark$  & $\checkmark$ &  \\
RIn-Close\_CVC & $\checkmark$	& $\checkmark$ & $\checkmark$ & $\checkmark$ \\
\bottomrule
\end{tabular}
\label{tab:comparison_cvc}
\end{table}

\subsection{Important issues of enumerative biclustering}

Some important aspects of enumerative biclustering are out of the scope of this work, so we strongly recommend the reading of the survey of Henriques et al. \cite{HenriquesEtAl2015} and our previous work \cite{VeronezeEtAl2017} for a more complete overview of this area. Here, we will talk briefly about some of these issues and indicate some additional references.

It is well known in FCA and FPM that the number of biclusters in an enumerative solution can be huge, even using condensed representations such as maximal biclusters. Besides, a bicluster may have a high internal consistency but not be relevant for some user's applications. In the literature, there are many general metrics that can help the user to filter a biclustering solution, for instance see \cite{HenriquesMadeira2018BSig, HortaCampello2014, KuznetsovEtAl2018, LeeEtAl2015, MartinezEtAl2014, ZakiMeira2014, Zimmermann2015}, but also the user can incorporate domain knowledge to the filter \cite{HenriquesMadeira2016}. Moreover, a biclustering solution can have many biclusters with large overlap between them, which makes it even harder to pick the ones that matter. Noise is one of the responsible for fragmenting a single bicluster into several ones with high overlapping \cite{ZhaoZaki2005}. Proposals for post-processing the biclustering solution based on the overlap between the biclusters can be found in \cite{OliveiraEtAl2015,YanEtAl2005,ZhaoZaki2005}. We highlight that it is also possible to mitigate the noise effect on the enumeration by preprocessing the input data, see \cite{HenriquesMadeira2014} for details.

Choosing the parameters of enumerative biclustering algorithms, with a priori or online partitioning, may not be an easy task, especially for new users. For this reason, BicPAM \cite{HenriquesMadeira2014} makes available a data-driven parameterization that relies on estimation procedures and on convergence criteria based on thresholds such as coverage, thus providing a good background for the parameterization of enumerative biclustering.

When focused on handling missing values, our proposal is essentially the same as that one proposed in BicPAM \cite{HenriquesMadeira2014} when considering the \textit{restrictive} alternative. As BicPAM works with a priori partitioning, in the less constrained alternatives, Henriques and Madeira also proposed the usage of \textit{additional items} handled according to a level of relaxation defined by the user. Thus, the biclusters can carry estimations of the missing values, but we can have biclusters with different estimations for the same missing value. Anyway, it is another possible approach to handle the missing values. Naturally, standard techniques, such as imputation, can also be used.

\section{Recapping principles from biclustering searches}
\label{sec:recap}

In this section, we will review In-Close5 \cite{Andrews2018} and RIn-Close\_CVC \cite{VeronezeEtAl2017}, the starting points for RIn-Close\_CVC3. As In-Close5 \cite{Andrews2018} incorporates the contribution of In-Close4 \cite{Andrews2017}, it is sufficient to review it to exhibit the contributions that will be generalized and incorporated into RIn-Close\_CVC. Respectively, the problems solved by In-Close5 and RIn-Close\_CVC are:

\noindent \textbf{Problem solved by In-Close5}: Mine all maximal CTV biclusters of 1's from a binary data matrix $\mathbf{A}_{n \times m}$, with the enumeration process being (1) efficient, (2) complete, (3) correct, and (4) non-redundant.

\noindent \textbf{Problem solved by RIn-Close\_CVC}: Given a user-defined parameter $\epsilon \ge 0$, mine all maximal CVC biclusters with maximum perturbation $\epsilon$ from a numerical data matrix $\mathbf{A}_{n \times m}$, with the enumeration process being (1) efficient, (2) complete, (3) correct, and (4) non-redundant.


\subsection{In-Close5}
\label{subsec:inclose5}

In discrete mathematics, combinations have a lexicographical order, for instance, \{1, 2, 3\} comes before \{1, 2, 4\}, and also before \{1, 3\} \cite{Andrews2009}. Ganter \cite{Ganter1984} showed how the lexicographical order of formal concepts can be used to avoid the search of repeated results. The CbO algorithm \cite{Kuznetsov1999} and its variants, which is the case of In-Close5 \cite{Andrews2018} and its predecessors \cite{Andrews2009,Andrews2011,Andrews2015,Andrews2017}, implement it in an efficient way, which was named \textit{canonicity test} \cite{Kuznetsov1996}.



Algorithm \ref{alg:inclose5} shows In-Close5 pseudocode. It is invoked with an initial pair $(I,J) = (X,\emptyset)$ (which is called the \textit{supremum} formal concept), an initial attribute index $y = 1$, and two empty sets $P = \emptyset$ and $N = \emptyset$. The difference between In-Close2 and In-Close5 are related to the sets $P$ and $N$, which do not exist in In-Close2 (more details will be given in what follows).

In In-Close algorithms, each formal concept $(I,J)$ is incrementally closed, i.e., its column-set $J$ is incrementally completed with all possible columns for the row-set $I$. So, during the closure of a formal concept, In-Close5 iterates across the attributes (line 1). If the current attribute $j$ is not skipped (line 2), In-Close5 computes the extent of a candidate new formal concept $G$ (line 3). If the size of $G$ is less than the value of the user-defined parameter $minRow$, the descendants of the current formal concept $(I,J)$ can skip the current attribute $j$ because: \textbf{If a parent extent intersected with an attribute-extent results in a set with less than $minRow$ elements, then any subset (child) of the parent extent intersected with the same attribute-extent will also result in a set with less than $minRow$ elements} \cite{Andrews2017}. The set $P$ stores  such attributes (line 15). If the size of $G$ is equal to the size of $I$ (line 5), this means that the row-set $G$ is equal to the current row-set $I$, then the attribute $j$ is added to the current column-set $J$ (line 6). Otherwise, In-Close5 tests if $G$ is canonical (line 8). If yes, the current formal concept $(I, J)$ will give rise to a child formal concept, which is placed in a queue (line 9). After the closure of the current formal concept $(I, J)$, In-Close5 starts closing its descendants (lines 17 to 19). Notice that the descendants of the bicluster $(I, J)$ inherit its intent $J$ (line 18).

The canonicity test is as follows. Letting $J$ be the current column-set, and $j$ be the current attribute, the row-set $G$ of a candidate new formal concept is not canonical if

\vspace*{-0.6cm}

\begin{equation}
\exists k \in Y \setminus J \: \mid \: [k < j] \: \wedge \: [a_{gk} = 1], \forall g \in G.
\label{eq:incloe2_iscan}
\end{equation}

\vspace*{-0.25cm}

\noindent This means that there is an attribute $k < j$, where the submatrix $(G, J \cup \{k\})$ is a CTV bicluster of 1's.

Concerning the update of the set $N$ in the pseudocode. Let $y$ be the starting attribute for the current cycle. Let us say, in a failed canonicity test, that the smallest attribute where the test failed is $k$ (line 11). If $k \ge y$ then an extent, $H$, where $G \subseteq H$, has been discovered in the current cycle at $k$ (and is waiting in the current queue). And there may be other extents discovered after $k$ but before $j$ that are also supersets of $G$ and are also in the queue. Thus, if $k \ge y$, the current attribute, $j$, will be required at the next level to be examined by the children in the queue: $G$ may be canonical with respect to one of the children or $j$ may be an attribute in the intent of a child and thus required to be added. However, if $k < y$, the formal concept with extent $G$ and its children has already been computed and processed. Thus no child in the current queue, or subsequent child, needs to examine $j$. In other words, if $k < y$ then $j$ can be \textbf{inherited as a canonicity test failure} - all subsequent children can skip $j$ in the cycle. The set $N$ stores such attributes (line 13).

\begin{algorithm}[!htb]
\caption{In-Close5$((I,J),y, P, N, \mathbf{A}_{n \times m}, minRow)$}
\label{alg:inclose5}
\begin{algorithmic}[1]
  \small
  \FOR{$j \leftarrow y$ to $m$}
    \IF{$j \notin J$ \AND $j \notin P$ \AND $j \notin N$}
	  \STATE $G \leftarrow I \cap \{j\}^\downarrow$
      \IF{$|G| \geq minRow$}
        \IF{$|G| = |I|$}
          \STATE $J \leftarrow J \cup \{j\}$
        \ELSE
          \IF {$G$ is canonical}
            \STATE PutInQueue($G, j$)
          \ELSE
            \STATE $k \leftarrow $ {\scriptsize lowest column index where the canonicity test fails}
            \IF{$k < y$}
              \STATE $N \leftarrow N \cup \{j\}$
            \ENDIF
          \ENDIF
        \ENDIF
      \ELSE
        \STATE $P \leftarrow P \cup \{j\}$
      \ENDIF
    \ENDIF
  \ENDFOR
  \STATE ProcessConcept($(I,J)$)
  \WHILE{GetFromQueue($G, j$)}
    \STATE $H \leftarrow J \cup \{j\}$
    \STATE In-Close5$((G,H),j+1,P,N, \mathbf{A}_{n \times m}, minRow)$
  \ENDWHILE
\end{algorithmic}
\end{algorithm}

In addition to the minimum number of rows $minRow$, we can easily add a minimum number of columns $minCol$ to In-Close5. While In-Close5 loops through the attributes, a formal concept $(I, J)$ can be discarded if, even adding all remaining attributes to its column-set, it will not meet the minimum number of columns $minCol$ (therefore, its next descendants will not meet the minimum number of columns $minCol$ as well). Although this restriction can be checked only during the closure of a formal concept, it will also prune the search space and save computational resources because ($i$) it stops the construction of a formal concept that will be discarded later, given that it does not meet $minCol$, and ($ii$) it avoids generating descendants that will not meet $minCol$ as well. The same is true for the RIn-Close algorithms \cite{VeronezeEtAl2017}.

\textbf{Worst-case complexity} - The number of recursive calls is equal to the number of formal concepts in the data matrix, $q$. Inside the loop throughout the attributes (line 1 of Algorithm~\ref{alg:inclose5}), the most costly part is the canonicity test. Its worst-case time is $O(nm)$. Thus, the overall worst-case of In-Close5 is $O(qnm^2)$ \cite{Kuznetsov1999}.

In-Close5 has two more structures than In-Close2 (the sets $P$ and $N$). However, it has exactly the same memory usage of In-Close2 because of the strategies adopted by Andrews in his implementation (see the details in his paper \cite{Andrews2018}).

\subsection{RIn-Close\_CVC}
\label{subsec:rinclosecvc}

The generalization of In-Close algorithms to enumerate all maximal perfect CVC biclusters is straightforward and is denoted RIn-Close\_CVCP. We have only one major difference: in In-Close algorithms, each bicluster $(I, J)$ can generate just one descendant per attribute, whereas in RIn-Close\_CVCP, each bicluster $(I, J)$ can generate multiple descendants per attribute \cite{VeronezeEtAl2017}. It happens because In-Close algorithms looks for blocks of 1's, whereas RIn-Close\_CVCP looks for maximal blocks of constant values on columns.

However, the generalization of In-Close algorithms to enumerate all maximal perturbed CVC biclusters is not so simple because not only a bicluster $(I, J)$ is able to generate multiple descendants per attribute, but also \textbf{overlap may occur between their row-sets} \cite{VeronezeEtAl2017}.


In this scenario, since a bicluster $(I, J)$ is able to generate multiple descendants per attribute, \textbf{with a possible superposition among their row-sets} (sharing $minRow$ rows or more in their row-sets), it is necessary to take some actions to avoid the generation of redundant biclusters (either enumerating the same bicluster more than once or mining non-maximal biclusters in their row-sets).

To solve the first problem, i.e., enumerating the same bicluster more than once, the first version of RIn-Close\_CVC \cite{VeronezeEtAl2017} tracks the row-sets that have already been generated using efficient symbol table implementations, such as hash tables (HTs) or balanced search trees (BSTs). The symbol table's keys are given by the row-sets, in such a way that the rows in a row-set are in their ascending (or descending) order. \textbf{Notice that two distinct maximal CVC biclusters must have two distinct row-sets} (see Property~\ref{prop:maxcvc}). Thus, RIn-Close\_CVC \cite{VeronezeEtAl2017} does not mine the same bicluster more than once.

\begin{property}
Let $(I_1,J_1)$ and $(I_2,J_2)$ be two distinct maximal CVC biclusters of a data matrix $\mathbf{A}_{n \times m}$, then $I_1 \neq I_2$ .
\label{prop:maxcvc}
\end{property}
\begin{proof}
Suppose, by way of contradiction, that $(I_1,J_1)$ and $(I_1,J_2)$ are two distinct maximal CVC biclusters of the data matrix $\mathbf{A}_{n \times m}$. Then there is not a CVC bicluster $(I_3, J_3)$ where $I_1 \times J_1 \subset I_3 \times J_3$ or $I_1 \times J_2 \subset I_3 \times J_3$. However, $(I_1,J_1 \cup J_2)$ is a CVC bicluster, which contradicts our assumption.
\end{proof}

To solve the second problem, i.e., mining non-maximal biclusters in their row-sets, RIn-Close\_CVC \cite{VeronezeEtAl2017} verifies if the bicluster is row-maximal. Considering the Definition~\ref{def:maximal}, we see that it can be done by testing all rows in $X \setminus G$, where $G$ is the row-set of the candidate new bicluster. However, it is possible to test a much smaller number of rows. 
We call $\Gamma$ the set of rows that must be checked to verify the row-maximality of the descendants of a bicluster. We will explain how to compute $\Gamma$ in what follows.

Algorithm~\ref{alg:rinclosecvc} presents the pseudocode of RIn-Close\_CVC \cite{VeronezeEtAl2017}. It is invoked with an initial pair $(I, J) = (X, \emptyset)$, an initial attribute $y = 1$, and an empty set $\Gamma = \emptyset$. The symbol table that tracks the biclusters that have already been mined is called $ST$ in the pseudocode.

\begin{algorithm}[!htb]
\caption{RIn-Close\_CVC$((I,J),y,\Gamma, \mathbf{A}_{n \times m}, minRow, \epsilon)$}
\label{alg:rinclosecvc}
\begin{algorithmic}[1]
  \small
  \FOR{$j \leftarrow y$ to $m$}
	  \IF{$j \notin J$}
		  \IF{$\max_{i \in I}(a_{ij}) - \min_{i \in I}(a_{ij}) \leq \epsilon$}
			  \STATE $J \leftarrow J \cup \{j\}$
			\ELSE
			  \STATE Compute the possible new row-sets \COMMENT{Eq.~\ref{eq:rinc_cvc_compExt}}
			  \FOR{each possible new row-set $G$}
					\IF{$|G| \geq minRow$ \AND $G \notin ST$ \AND $G$ is canonical \AND $G$ is row-maximal}
					    \STATE Insert $G$ in the symbol table $ST$
						\STATE $\Omega \leftarrow ComputeRM(G, j, \Gamma, I, \mathbf{A}_{n \times m}, minRow, \epsilon)$ \COMMENT{Algorithm~\ref{alg:ComputeRM}}
						\STATE PutInQueue($G, j, \Omega$)
					\ENDIF					
				\ENDFOR
			\ENDIF
		\ENDIF
	\ENDFOR
    \STATE ProcessBicluster($(I,J)$)
	\WHILE{GetFromQueue($G, j, \Omega$)}
	  \STATE $H \leftarrow J \cup \{j\}$
		\STATE RIn-Close\_CVC($(G,H),j+1, \Omega, \mathbf{A}_{n \times m}, minRow, \epsilon$)
	\ENDWHILE
\end{algorithmic}
\end{algorithm}

Although RIn-Close\_CVC is more elaborate than In-Close algorithms, the pseudocode has the same structure. So, each bicluster $(I,J)$ is incrementally closed, i.e., its column-set $J$ is completed with all possible columns for the row-set $I$. If the attribute $j$ is not an inherited attribute and it cannot be added to the column-set $J$, the possible new row-sets are computed (line 6). Given that $I$ is the current row-set, and $j$ is the current attribute, the possible new row-sets are given by

\vspace*{-0.6cm}

\begin{equation}
 \{G \: | \: [G \subseteq I] \; \wedge \; [\max_{i \in G}(\{a_{ij}\}) - \min_{i \in G}(\{a_{ij}\}) \leq \epsilon] \; \wedge \; [G \; \mathrm{is \; maximal}]\}.
 \label{eq:rinc_cvc_compExt}
\end{equation}

\vspace*{-0.25cm}

\noindent It is easily achieved by \textbf{sorting} the values of the data matrix $\mathbf{A}$ in rows $I$ and column $j$. After ordering, just scroll through the vector looking for the sets of values that meet the user-defined maximum perturbation $\epsilon$.


Letting $J$ be the current column-set, and $j$ be the current attribute, the row-set $G$ of a possible new bicluster is not canonical if

\vspace*{-0.6cm}

\begin{equation}
    \exists k \in Y \setminus J \: | \: [k < j] \: \wedge \: [\max_{i \in G}(a_{ik}) - \min_{i \in G}(a_{ik}) \leq \epsilon],
	\label{eq:rinc_cvc_iscan}
\end{equation}

\vspace*{-0.25cm}

\noindent i.e., if there is an attribute $k < j$ that we can add to the bicluster $(G, J)$ and it remains a CVC bicluster. Also, the candidate new bicluster with row-set $G$ is not row-maximal if there is an object $g \in \Gamma$ that we can add to the bicluster $(G,J \cup \{j\})$ and it remains a CVC bicluster, i.e.,

\vspace*{-0.6cm}

\begin{equation}
\exists g \in \Gamma \: | \: \max_{i \in \{G \cup \{g\}\}}(a_{ik}) - \min_{i \in \{G \cup \{g\}\}}(a_{ik}) \leq \epsilon, \forall k \in J \cup \{j\}. 
\label{eq:cvc_ismaximal}
\end{equation}
 
 \vspace*{-0.25cm}

\begin{algorithm}[!htb]
\caption{ComputeRM$(G, j, \Gamma, I, \mathbf{A}_{n \times m}, minRow, \epsilon)$}
\label{alg:ComputeRM}
\begin{algorithmic}[1]
  \small
  \ENSURE new set of rows to check the row-maximality $\Gamma_{new}$
  \STATE $p1 \leftarrow minRow$-$th$ smaller element of $\{a_{ij}\}_{i \in G}$ \COMMENT{pivot value 1}
  \STATE $p2 \leftarrow minRow$-$th$ larger element of $\{a_{ij}\}_{i \in G}$ \COMMENT{pivot value 2}
  \STATE $\Gamma_{new} \leftarrow \Gamma \cup \{i \in I \setminus G \mid [p1 - a_{ij} \leq \epsilon] \;  \vee \; [a _{ij} -p2 \leq \epsilon] \}$
\end{algorithmic}
\end{algorithm}

Now, let us explain the function $ComputeRM$ of Algorithm~\ref{alg:ComputeRM}. The pivot elements are the $minRow$-$th$ smaller and larger elements of $\{a_{ij}\}_{i \in G}$. To exemplify, let us suppose that $\{a_{ij}\}_{i \in G} = \{3, 3, 4, 4.5, 5, 6\}$, $minRow = 2$, and $\epsilon = 3$.  So, $p1 = 3$ and $p2 = 5$. Considering the current attribute $j$, rows $i \in I \setminus G$ with values greater than or equal to 0 ($p1 - \epsilon$) or less than or equal to 8 ($p2 + \epsilon$) must comprise $\Gamma_{new}$. In addition, 
the set of rows to check the row-maximality from its parent must be inherited. 
See our original paper \cite{VeronezeEtAl2017} for an illustration that exemplifies it intuitively.

\textbf{Worst-case complexity} - Since the pseudocode of RIn-Close\_CVC has the same structure of In-Close algorithms, let us examine the differences. RIn-Close\_CVC loops through the possible new row-sets, having worst-case complexity $O(n)$. The worst-case time of checking if a candidate new bicluster is row-maximal is the same as the canonicity test: $O(nm)$. The worst-case time to insert and search in a BST is $O(\log q)$, where $q$ is its number of elements. The worst-case time to insert and search in a HT is $O(1)$ and $O(q)$, respectively. However, under reasonable assumptions, the average time to search in a HT is $O(1)$ \cite{CormenEtAl2009}. We sort the row-set to insert it into the symbol table, and its worst-case time is $O(n \log n)$. The worst-case time of the function \textit{ComputeRM} is $O(n)$.  Thus, the overall worst-case time of RIn-Close\_CVC is $O(qnm(n \log n + nm + x))$,  where $x$ is the worst-case time of searching in the symbol table. Remark that usually $\log n \ll m$.

\begin{proposition}
RIn-Close\_CVC is an (1) efficient, (2) complete, (3) correct, and (4) non-redundant algorithm for mining all maximal CVC biclusters, with maximum perturbation $\epsilon$, from a numerical data matrix $\mathbf{A}_{n \times m}$.
\end{proposition}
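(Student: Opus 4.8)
The plan is to establish the four properties separately, leaning on the structural facts already in hand: the (anti-)monotonicity of the CVC definition, Property~\ref{prop:maxcvc}, and the worst-case analysis carried out just above. For \textbf{correctness} I would prove, by induction on the recursion depth, the stronger invariant that every pair $(I,J)$ passed to \textsc{ProcessBicluster} is a correct CVC bicluster that is moreover \emph{maximal}. A column enters $J$ only at line~4, whose guard is exactly the inequality of \eqref{eq:cvcbic} for the current row-set; a column inherited from a parent $(I_p,J_p)$ with $I\subseteq I_p$ retains its bound by anti-monotonicity; and the attribute $j$ along which a child is spawned satisfies the bound on the child's row-set by the construction in \eqref{eq:rinc_cvc_compExt}. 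Column-maximality then holds because the closure loop inspects every remaining attribute and adds each admissible one, while the canonicity test \eqref{eq:rinc_cvc_iscan} rules out any admissible attribute below the starting index having been skipped. Row-maximality rests on the test \eqref{eq:cvc_ismaximal}, and for that I need an auxiliary lemma: the set $\Gamma$ produced by \textsc{ComputeRM} (Algorithm~\ref{alg:ComputeRM}) contains every row capable of extending $G$ or any descendant of $G$; I would derive this from the pivot choice (the $minRow$-th smallest and largest values of column $j$ over $G$), using that any descendant has at least $minRow$ rows, so that its column-$j$ extremes lie between the two pivots and the ``reachable'' window of an outside row is contained in the pivot window of the algorithm. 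A possibly loose $\Gamma$ is harmless, since no row at all is addable to a genuinely maximal bicluster.

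\textbf{Efficiency} is immediate from the bound $O(qnm(n\log n + nm + x))$ established above: the running time is linear in the number $q$ of recursive calls, hence in the output size, and polynomial in $n$, $m$ and in the symbol-table cost $x$, which matches the notion of efficiency stated in Section~\ref{sec:intro}. For the ``no bicluster twice'' part of \textbf{non-redundancy}, Property~\ref{prop:maxcvc} says distinct maximal CVC biclusters have distinct row-sets; since $ST$ is keyed on row-sets and a row-set never again satisfies the guard $G\notin ST$ once inserted at line~9, no row-set --- hence no maximal bicluster --- is processed, and so emitted, more than once.

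\textbf{Completeness} is the delicate part, to be argued as in the completeness proof of CbO / In-Close. I would prove by induction on the recursion tree (concretely on $m-y$ for the current starting attribute $y$, with a loop invariant for how $J$ absorbs the target intent) the statement: for every recursive call RIn-Close\_CVC$((I,J),y,\Gamma,\ldots)$ that occurs and every maximal CVC bicluster $(I^\star,J^\star)$ with $|I^\star|\ge minRow$, $I^\star\subseteq I$ and $J=J^\star\cap\{1,\dots,y-1\}$, the pair $(I^\star,J^\star)$ is emitted somewhere in the subtree of that call; applied to the root call $((X,\emptyset),1,\emptyset,\dots)$ this gives the result. In the inductive step one verifies that columns below $\min(J^\star)$ and outside $J^\star$ trigger only branchings whose row-sets miss $I^\star$ (because, by maximality, $I^\star$ violates the $\epsilon$-bound on every column outside $J^\star$), so $J$ stays put until the least attribute $j^\dagger$ of $J^\star$ not below $y$ is reached; at $j^\dagger$ either $j^\dagger$ is admissible for all of $I$ and is simply added to $J$, or $I^\star$ lies inside one of the maximal $\epsilon$-constrained sub-row-sets produced by \eqref{eq:rinc_cvc_compExt}; that candidate passes the canonicity test, since an admissible attribute below $j^\dagger$ and outside $J$ would, by anti-monotonicity, be admissible for $I^\star$ and hence belong to $J^\star$, contradicting minimality of $j^\dagger$; it passes the row-maximality test because $(I^\star,J^\star)$ is row-maximal; and it is fresh in $ST$ (or, if not, the earlier occurrence closes to $(I^\star,J^\star)$ by the column-maximality argument above). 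The induction hypothesis, applied to the resulting child, then completes the argument.

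The step I expect to be the real obstacle is completeness: formulating the loop invariant correctly --- keeping track of how $J$ grows while attributes outside $J^\star$ spawn irrelevant branches --- and proving once and for all that the canonicity test, the row-maximality test and the $ST$ guard never discard the sought bicluster. The correctness and efficiency parts, together with Property~\ref{prop:maxcvc}, then make the remaining claims follow routinely, the only other non-trivial ingredient being the soundness lemma for \textsc{ComputeRM}.
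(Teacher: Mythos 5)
Your decomposition into the four properties, and the ingredients you lean on --- Property~\ref{prop:maxcvc} for the ``no bicluster twice'' half of non-redundancy, the worst-case bound for efficiency, the three-way split of the intent ($\{j\}$, inherited $J$, columns gained during closure) plus anti-monotonicity for correctness, and a soundness lemma for the $\Gamma$ set of Algorithm~\ref{alg:ComputeRM} --- coincide with the paper's proof. Where you genuinely diverge is completeness: the paper disposes of it in a few lines by observing that RIn-Close\_CVC ``has the same search engine as In-Close2'' and that the extra machinery (multiple candidates per attribute via Eq.~\ref{eq:rinc_cvc_compExt}, the row-maximality test, the symbol table) only ever discards non-maximal or duplicate biclusters, whereas you propose a self-contained CbO-style induction over the recursion tree with an explicit invariant. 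Your route is more work but buys an actual proof rather than an appeal to In-Close2; the paper's route buys brevity at the cost of leaving the inductive bookkeeping implicit.

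Two spots in your completeness plan need repair, though. First, the invariant $J=J^\star\cap\{1,\dots,y-1\}$ cannot hold with equality: a child spawned at attribute $j$ inherits its parent's \emph{fully closed} intent $J\cup\{j\}$, which may contain attributes $\ge y=j+1$ added later in the parent's closure loop, so the invariant must be the two-sided containment $J^\star\cap\{1,\dots,y-1\}\subseteq J\subseteq J^\star$. Second, the claim that the candidate at $j^\dagger$ ``passes the row-maximality test because $(I^\star,J^\star)$ is row-maximal'' argues about the wrong object: the test of Eq.~\ref{eq:cvc_ismaximal} is applied to $(G,J\cup\{j^\dagger\})$ with $G\supseteq I^\star$ and $J\cup\{j^\dagger\}\subseteq J^\star$, and this pair can fail to be row-maximal even when $(I^\star,J^\star)$ is, since a row $g$ may satisfy the $\epsilon$-bound on the smaller column-set $J\cup\{j^\dagger\}$ without satisfying it on all of $J^\star$. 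To close this you must show that whenever a candidate containing $I^\star$ is discarded by that test, some other branch of the search produces, at the same attribute, a surviving candidate whose row-set contains $G\cup\{g\}\supseteq I^\star$, and carry the induction forward from there. (The paper's own one-sentence treatment of this point is no more rigorous, so this is a refinement your more ambitious argument forces you to confront, not evidence that your approach is wrong.)
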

\begin{proof}
RIn-Close\_CVC is a generalization of In-Close2, so we need only to show that the generalization steps keep these 4 properties:

\noindent (1) Efficiency: RIn-Close\_CVC has also at most polynomial time in the input size. If it is using a BST, it has worst-case time complexity quasi-linear, more specifically, linearithmic in the number of biclusters. If it is using a HT, its worst-case time is quadratic in the number of biclusters, however, under reasonable assumptions, it will have linear time in the number of biclusters.

\noindent (2) Completeness: RIn-Close\_CVC has the same search engine as In-Close2, being the biclusters incrementally closed. The main difference is that each bicluster $(I, J)$ can generate more than one descendant per attribute, possibly with overlaps between their row-sets. But this difference does not affect the search engine. Once the row-sets of the candidate new biclusters are computed according to Eq.~\ref{eq:rinc_cvc_compExt}, the RIn-Close\_CVC's procedure for each candidate new bicluster is the same as the In-Close2's procedure. Also, the row-maximal checking will only discard non-maximal biclusters in their row-sets. Lastly, since each maximal CVC bicluster has a unique row-set, the usage of the symbol table $ST$ will only avoid the mining of the same maximal bicluster more than once. So, these features of RIn-Close\_CVC do not interfere in the completeness of the algorithm.

\noindent (3) Correctness: From the search engine of In-Close2 (inherited by RIn-Close\_CVC), we can observe that a descendant bicluster of a bicluster $(I,J)$ has row-set $G \subset I$, and column-set $H \supset J$ (notice that the descendants inherit the column-set $J$ of their parent). For all $h \in H$, we must check if

\vspace*{-0.6cm}

\begin{equation*}
\max_{i \in G}(\{a_{ih}\}) - \min_{i \in G}(\{a_{ih}\}) \le \epsilon.
\end{equation*}

\vspace*{-0.25cm}

\noindent Note that we can divide the column-set $H$ of the new bicluster in three disjoint subsets: $\{j\}$, $J$, and $H \setminus \{J \cup \{j\}\}$, where $j$ is the attribute where the bicluster was created, $J$ are the set of inherited columns, and $H \setminus \{J \cup \{j\}\}$ are the set of columns that the bicluster gained during its closure. For the subsets $\{j\}$ and $H \setminus \{J \cup \{j\}\}$, this check is ok by construction (see Eq.~\ref{eq:rinc_cvc_compExt} and line 3 of Algorithm~\ref{alg:rinclosecvc}, respectively). Supposing that the parent bicluster $(I,J)$ is correct, so the bicluster $(G,J)$ is also correct because $G \subset I$. We can affirm that the parent bicluster $(I,J)$ is correct by induction, since ($i$) a bicluster having the column in which it was generated and the columns it gained during its closure is correct by construction, and ($ii$) the supremum bicluster (ancestor of all biclusters) starts with no columns to pass to its descendants.

\noindent (4) Non-Redundancy:
As in In-Close2, the column-set of a bicluster is maximal after its closure. However, we must check if a candidate new bicluster with row-set $G$ is row-maximal. We could do this by testing all rows in $X \setminus G$, but it is sufficient to test the rows in the set $\Gamma$. Why? The rows in $\Gamma$ are computed taking into account that each bicluster must have at least $minRow$ rows, and all its columns must have at most perturbation $\epsilon$. Thus, the attribute $j$ of the new bicluster, where it was generated, is used as reference to compute a set of rows that must belong to the set $\Gamma$: based on the $minRow$-$th$ smaller and $minRow$-$th$ larger values of the column $j$ of the new bicluster, rows that could generated a CVC bicluster are added to $\Gamma$. Furthermore, the set $\Gamma$ of a new bicluster inherits the set of rows to check the row-maximality from all its ancestors. With this inheritance, all rows that could possibly generate a CVC bicluster are kept tracked.

In addition to the inherited canonicity test of In-Close2, RIn-Close\_CVC also uses a symbol table for the prevention of mining the same (maximal) bicluster more than once. The keys of the symbol table are the ordered row-sets of the mined biclusters. Thus, as each maximal CVC bicluster has a unique row-set, RIn-Close\_CVC will not enumerate the same bicluster more than once.
\end{proof}

\section{RIn-Close\_CVC3}
\label{sec:rinclosecvc3}

We saw in Subsection~\ref{subsec:rinclosecvc} that during the closure of a CVC bicluster $(I,J)$, it can be generated multiple descendants per attribute with overlap between their row-sets. Therefore, without some extra care, an enumerative algorithm based on the search engine of In-Close2 could return redundant biclusters (either enumerating the same bicluster more than once or mining non-maximal biclusters in their row-sets), thus bringing forth the \textit{overlapping row-sets problem}.

RIn-Close\_CVC3 deals with the \textit{overlapping row-sets problem} in a more efficient way than RIn-Close\_CVC \cite{VeronezeEtAl2017}, thereby gathering relevant improvements to the task of biclusters' enumeration. Besides that, we prepare this new version to handle datasets with missing values, attributes with distinct distributions, and mixed data types, fully exploring the definition of a CVC bicluster given by Definition~\ref{def:cvcbic2}. Lastly, we generalize the contributions proposed in \cite{Andrews2017,Andrews2018} to incorporate them into our new version. Notice that RIn-Close\_CVC \cite{VeronezeEtAl2017} is conceptually based on In-Close2 \cite{Andrews2011}.

Algorithm~\ref{alg:rinclosecvc3} shows the pseudocode of RIn-Close\_CVC3. Unlike RIn-Close\_CVC, it does not have a symbol table $ST$ and neither the checking of row-maximality. Instead, it has a checking that we call \textit{row-canonical} to deal with the \textit{overlapping row-sets problem}. The expression $a_{ij} \neq mv$ means: the element $a_{ij}$ is not a missing value. It is invoked with an initial pair $(I, J) = (X, \emptyset)$, an initial attribute $y = 1$, and two empty sets $\Gamma = \emptyset$ and $PN = \emptyset$.

In this new version, the possible new row-sets are given by

\vspace*{-1.2cm}

\begin{equation}
\{G \mid [G \subseteq I] \; \wedge \; [\max_{i \in G}(\{a_{ij}\}) - \min_{i \in G}(\{a_{ij}\}) \leq \epsilon_j] \; \wedge \; [a_{ij} \neq mv, \forall i \in G] \; \wedge \; [G \; \mathrm{is \; maximal}]\},
\label{eq:rinc_cvc3_compExt}
\end{equation}

\vspace*{-0.25cm}

\noindent given that $I$ is the current row-set, and $j$ is the current attribute. As we already know, it is easily achieved by ordering the values of the data matrix $\mathbf{A}_{n \times m}$ in rows $I$ and column $j$. The user should use a large number to represent the missing values ($mv$), so the missing values will be at the bottom of the ordered list, and this whole portion of the list can be ignored.

The canonicity test is update as follows. Letting $J$ be the current column-set, and $j$ be the current attribute, a possible new row-set $G$ of a bicluster is not canonical if

\vspace*{-0.6cm}

\begin{equation}
\exists k \in Y \setminus J \mid [k < j] \: \wedge \: [\max_{i \in G}(a_{ik}) - \min_{i \in G}(a_{ik}) \leq \epsilon_k] \: \wedge \: [a_{ik} \neq mv, \forall i \in G].
\label{eq:rinc_cvc3_iscan}
\end{equation} 

\vspace*{-0.25cm}


\begin{algorithm}[!htb]
\caption{RIn-Close\_CVC3$((I,J),y,\Gamma,PN,\mathbf{A}_{n \times m}, minRow, \bm{\epsilon})$}
\label{alg:rinclosecvc3}
\begin{algorithmic}[1]
\small
\FOR{$j \leftarrow y$ to $m$}
 \IF{$j \notin J$ \AND $j \notin PN$}
   \IF{$\max_{i \in I}(a_{ij}) - \min_{i \in I}(a_{ij}) \leq \epsilon_j$ \AND $a_{ij} \neq mv, \forall i \in I$}
    \STATE $J \leftarrow J \cup \{j\}$
   \ELSE
    \STATE $a1 \leftarrow a2 \leftarrow true$, $a3 \leftarrow false$
    \STATE Compute the possible new row-sets \COMMENT{Eq.~\ref{eq:rinc_cvc3_compExt}}
	\FOR{each possible new row-set $G$}
	 \IF{$|G| \geq minRow$}
      \STATE $a1 \leftarrow false$
      \IF {$G$ is canonical}
       \STATE $a2 \leftarrow false$
       \IF {$G$ is row-canonical}
  	    \STATE $\Omega \leftarrow ComputeRM(G, j, \Gamma, I, \mathbf{A}_{n \times m}, minRow, \epsilon_j)$ \COMMENT{Algorithm~\ref{alg:ComputeRM}}
	    \STATE PutInQueue($G, j, \Omega$)
       \ENDIF
      \ELSE
       \STATE $k \leftarrow $ {\scriptsize lowest column index where the canonicity test fails}
       \IF{$k \ge y$}
        \STATE $a2 \leftarrow false$
       \ELSE
        \STATE $a3 \leftarrow true$
       \ENDIF
      \ENDIF
	 \ENDIF					
    \ENDFOR
    \IF{$a1 = true$ \OR ($a2 = true$ \AND $a3 = true$)}
     \STATE $PN \leftarrow PN \cup \{j\}$
    \ENDIF
   \ENDIF
  \ENDIF
 \ENDFOR
 \STATE ProcessBicluster($(I,J)$)
 \WHILE{GetFromQueue($G, j, \Omega$)}
  \STATE $H \leftarrow J \cup \{j\}$
  \STATE RIn-Close\_CVC3($(G,H),j+1, \Omega, PN,\mathbf{A}_{n \times m}, minRow, \bm{\epsilon}$)
 \ENDWHILE
\end{algorithmic}
\end{algorithm}

The new row-canonicity works as follows. Letting $J$ be the current column-set, $j$ the current attribute, $J^{<j}$ the set of all attributes of $J$ up to $j$, $H = J^{<j} \cup \{j\}$, and $\Gamma$ the set of rows that must be checked to verify the row-canonicity, the candidate new bicluster with row-set $G$ is not row-canonical if (1) there is an object $g \in \Gamma$ that we can add to the bicluster $(G,H)$ so that it remains a CVC bicluster, i.e.,

\vspace*{-0.6cm}

\begin{equation}
  \exists g \in \Gamma \: | \: [\max_{i \in \{G \cup \{g\}\}}(a_{ik}) - \min_{i \in \{G \cup \{g\}\}}(a_{ik}) \leq \epsilon_k] \: \wedge \: [a_{gk} \neq mv], \forall k \in H,
	\label{eq:cvc_row-iscan1}
\end{equation}

\vspace*{-0.25cm}

\noindent or (2) there is another bicluster with \textbf{lower lexicographic order} in the rows that could be the parent of the candidate new bicluster, i.e.,

\vspace*{-0.6cm}

\begin{equation}
\exists g \in \Gamma \: | \: [\max_{i \in \{I^{<g} \cup \{g\} \cup G\}}(a_{ik}) - \min_{i \in \{I^{<g} \cup \{g\} \cup G\}}(a_{ik}) \leq \epsilon_k] \: \wedge \: [a_{gk} \neq mv], \forall k \in J^{<j},
\label{eq:cvc_row-iscan2}
\end{equation}


\noindent where $I^{<g}$ is the set of all objects of $I$ up to $g$ ($g \notin I$ by definition).

The idea behind the row-canonical function is as follows. If a bicluster can be created from more than one bicluster on different columns, it must be created in the most posterior column. If a bicluster can be created from more than one bicluster in the same column $j$, it must be created by the bicluster with the lowest lexicographic order in its row-set. Besides, the first part of the function row-canonical also ensures that a new bicluster is row-maximal.

The set of rows that must be checked to verify the row-canonicity is also computed by Algorithm~\ref{alg:ComputeRM}. However, line 3 of Algorithm~\ref{alg:ComputeRM} must be updated to

\vspace*{-0.6cm}

\begin{equation*}
\Gamma_{new} \leftarrow \Gamma \cup \{i \in I \setminus G \mid [[p1 - a_{ij} \leq \epsilon_j] \;  \vee \; [a _{ij} -p2 \leq \epsilon_j]] \; \wedge \; [a_{ij} \neq mv]\},
\end{equation*}

\vspace*{-0.25cm}

\noindent in order to handle missing values and Definition~\ref{def:cvcbic2}.

Fig.~\ref{fig:rowCanon} exhibits four synthetic datasets. We are going to use them to illustrate how the row-canonical test works. In what follows, let us consider the following parameters: $minRow = 2$ and $\epsilon = 1$. 

Concerning the dataset of Fig.~\ref{fig:rowCanon}(a). There is a bicluster with row-set $I_a = \{1, 2, 3, 4, 5\}$, whose set of rows to check the row-maximality is $\Gamma_a = \{6, 7\}$. During its closure, this bicluster generates a possible descendant bicluster in attribute $3$ with row-set $\{3, 4, 5\}$, but the row-canonical test fails when testing row $6 \in \Gamma_a$ since it is a non-row-maximal bicluster. During the closure of the bicluster with row-set $I_b = \{3, 4, 5, 6, 7\}$, whose set of rows to check the row-maximality is $\Gamma_b = \{1, 2\}$, it generates a descendant bicluster in attribute $3$ with row-set $I_c = \{3, 4, 5, 6\}$.

Considering the dataset of Fig.~\ref{fig:rowCanon}(b). There is a bicluster with row-set $I_b = \{3, 4, 5, 6, 7\}$, whose set of rows to check the row-maximality is $\Gamma_b = \{1, 2\}$. During its closure, it generates a possible descendant bicluster in attribute $3$ with row-set $I_c = \{3, 4, 5\}$, but the row-canonical test fails when testing row $1 \in \Gamma_b$ (or $2 \in \Gamma_b$) since there is another bicluster with lower lexicographic order in its row-set that can generate it. So, during the closure of the bicluster with row-set $I_a = \{1, 2, 3, 4, 5\}$, whose set of rows to check the row-maximality is $\Gamma_a = \{6, 7\}$, it creates the bicluster with row-set $I_c$ in attribute $3$. This bicluster inherits the set of rows to check the row-maximality from its parent, so $\Gamma_c \subseteq \Gamma_a$. During its closure, this bicluster generates a possible descendant bicluster in attribute $4$ with row-set $\{4, 5\}$, but the row-canonical test fails when testing row $7 \in \Gamma_c$ since it is a non-row-maximal bicluster. During the closure of the bicluster with row-set $I_b$, it generates a descendant bicluster in attribute $3$ with row-set $I_d = \{4, 5, 7\}$. This example illustrates why the set of rows to check the row-maximality of a bicluster must be inherited by its descendants.

In the case of the dataset of Fig.~\ref{fig:rowCanon}(c), there is a bicluster with row-set $I_b = \{3, 4, 5, 6, 7\}$, whose set of rows to check the row-maximality is $\Gamma_b = \{1, 2\}$. During its closure, it generates a possible descendant bicluster in attribute $3$ with row-set $I_c = \{3, 4\}$, but the row-canonical test fails when testing row $1 \in \Gamma_b$ (or $2 \in \Gamma_b$) since there is another bicluster with lower lexicographic order in its row-set that can generate it. So, during the closure of the bicluster with row-set $I_a = \{1, 2, 3, 4, 5\}$, it creates the bicluster with row-set $I_c$ in attribute $3$.

Concerning the dataset of Fig.~\ref{fig:rowCanon}(d). There is a bicluster with row-set $I_a = \{1, 2, 3, 4, 5\}$, whose set of rows to check the row-maximality is $\Gamma_a = \{6, 7\}$. During its closure, this bicluster generates a possible descendant bicluster in attribute $3$ with row-set $I_d = \{4, 5\}$, but the row-canonical test fails when testing row $6 \in \Gamma_a$ (or $7 \in \Gamma_a$) since there is another bicluster that can generate it in a posterior column. So, during the closure of the bicluster with row-set $I_b = \{3, 4, 5, 6, 7\}$, it creates the bicluster with row-set $I_c = \{4, 5, 6, 7\}$ in attribute $3$, and the bicluster with row-set $I_c$ generates the bicluster with row-set $I_d$ in attribute $4$.

\begin{figure}[!htb]
\centering
\includegraphics[trim=2.5cm 13cm 7cm 3.5cm, clip, scale=.5]{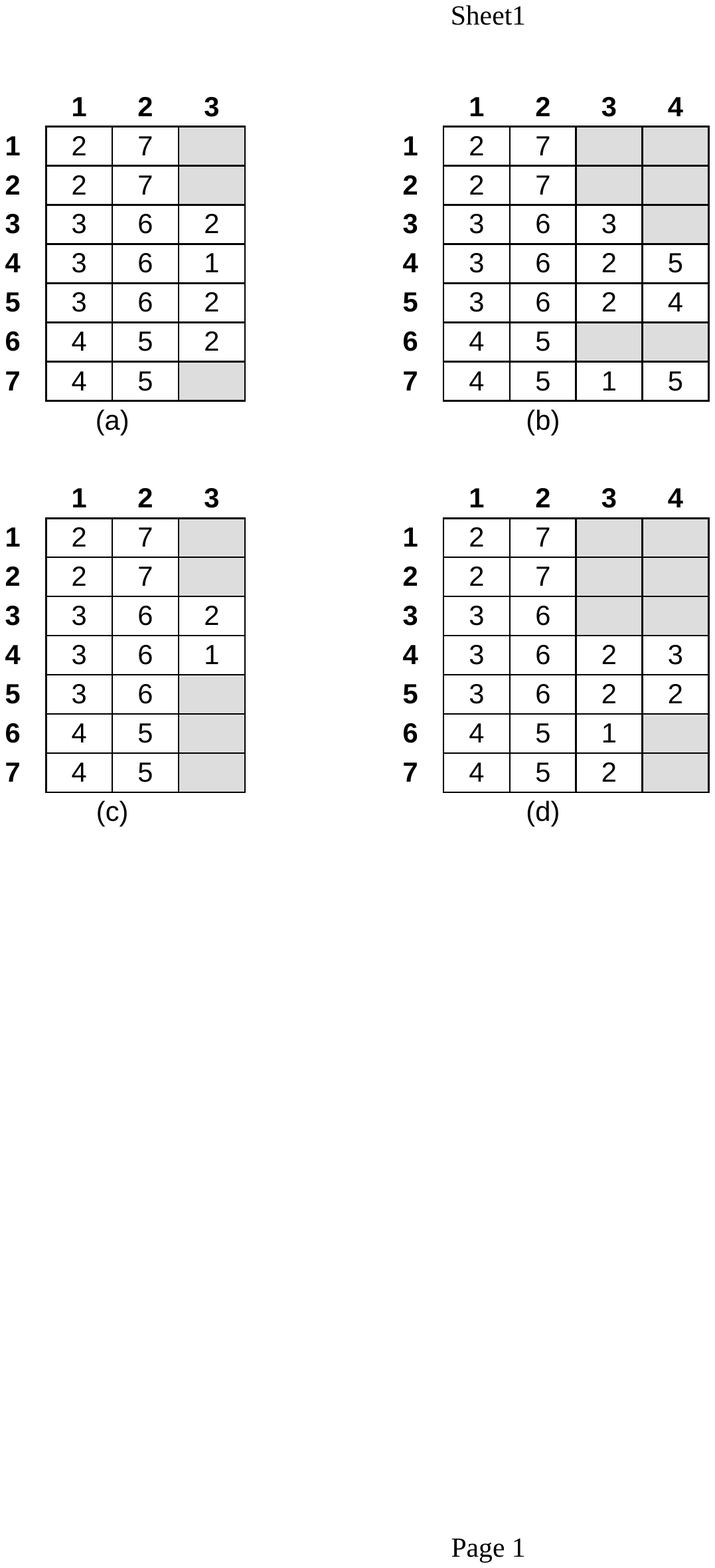}
\caption{Four synthetic datasets. Hatched cells represent missing values.}
\label{fig:rowCanon}
\end{figure}

So far we covered the main differences between the RIn-Close\_CVC3 and RIn-Close\_CVC algorithms. But we still have to explain how we have incorporated the contributions proposed in \cite{Andrews2017,Andrews2018} into RIn-Close\_CVC3, which was done by means of the set $PN$ in the pseudocode. We presented the In-Close5 pseudocode with two distinct sets, $P$ and $N$, for didactic purposes, but as explained by Andrews \cite{Andrews2018}, the information of the sets $P$ and $N$ can be stored in the same set, thus saving computational resources.

Following Andrew's reasoning \cite{Andrews2018}, we have that if the current bicluster $(I,J)$ is unable to produce, in the current attribute $j$, a candidate row-set $G$ that meets the size restriction $minRow$, so will its descendants. Also, if the canonicity test fails in some attribute $k<y$ (where $y$ is the starting attribute for the current cycle) for all candidate row-set $G$ that meets the size restriction $minRow$, 
then $j$ can be inherited as a canonicity test failure. In short, the set $PN$ will be updated (line 23 of Algorithm~\ref{alg:rinclosecvc3}) if no candidate new row-set $G$ (1) meets the size restriction $minRow$, or (2) is canonical and none of the canonicity tests fails in an attribute $k \ge y$.

Our implementation of RIn-Close\_CVC3 has a structure to store the set $PN$. We could use the same strategy adopted by Andrews \cite{Andrews2018} to avoid this new structure, but it would require several modifications to our current implementation. So, as it has an insignificant impact on the memory usage (see our experimental results in Section~\ref{sec:expresults}), we chose to use this extra structure.

\textbf{Worst-case complexity} - The symbol table and the row-maximal test used in RIn-Close\_CVC are replace by the row-canonical test in RIn-Close\_CVC3, having worst-case time $O(n^2m)$. Thus, the overall worst-case time of RIn-Close\_CVC3 is $O(qn^3m^2)$.

\begin{proposition}
RIn-Close\_CVC3 is an (1) efficient, (2) complete, (3) correct, and (4) non-redundant algorithm for mining all maximal CVC biclusters, with maximum perturbation $\epsilon_j$ for each column $j$, from a numerical data matrix $\mathbf{A}_{n \times m}$.
\end{proposition}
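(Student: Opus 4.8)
The plan is to follow the same template as the proof of the corresponding proposition for RIn-Close\_CVC, reusing as much of it as possible and isolating the genuinely new ingredients: (i) the per-column residues $\epsilon_j$ together with the missing-value handling, (ii) the replacement of the symbol table and the row-maximality test by the single \emph{row-canonical} test, and (iii) the set $PN$ that merges the roles of the sets $P$ and $N$ of In-Close5. Since RIn-Close\_CVC3 inherits the incremental-closure search engine of In-Close2/In-Close5, the skeleton of the argument for completeness and correctness carries over once these three modifications are shown to be harmless.

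First I would dispatch the easy generalizations. The definition of a CVC bicluster with residues $\epsilon_j$ (Definition~\ref{def:cvcbic2}) still satisfies the (anti-)monotonicity property, as noted after that definition, so Eq.~\ref{eq:rinc_cvc3_compExt} still produces exactly the maximal subsets of $I$ that are correct on column $j$; the clause $a_{ij} \neq mv$ only removes rows on which column $j$ is undefined, which is consistent with looking for biclusters solely in regions with no missing values. Correctness is then re-checked essentially verbatim: a descendant $(G,H)$ has $G \subset I$ and $H \supset J$, and $H$ splits into the three disjoint sets $\{j\}$, the inherited $J$, and the columns gained during closure; the first and third are correct by construction (Eq.~\ref{eq:rinc_cvc3_compExt} and line~3 of Algorithm~\ref{alg:rinclosecvc3}) and the second follows by induction along the ancestor chain rooted at the supremum $(X,\emptyset)$, using anti-monotonicity for $G \subset I$.

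The core of the proof — and the main obstacle — is to show that the row-canonical test replaces both the symbol table and the row-maximality test without loss, which I would split into two claims. (a) \emph{No maximal bicluster is ever discarded.} Part~(1) of the test (Eq.~\ref{eq:cvc_row-iscan1}) fails exactly when some $g \in \Gamma$ can be added to $(G,H)$ while keeping it a CVC bicluster; by the same $\Gamma$-sufficiency argument used for RIn-Close\_CVC (the pivot construction of Algorithm~\ref{alg:ComputeRM}, with its updated line~3, together with inheritance of $\Gamma$ along the ancestor chain, retains every row that could extend the row-set), this fires only on non-row-maximal biclusters, hence never on a maximal one. Part~(2) (Eq.~\ref{eq:cvc_row-iscan2}) fails only when the \emph{same} row-set $G$ is also reachable from an ancestor whose row-set is lexicographically smaller on the columns of $J^{<j}$, so the bicluster is still generated, merely from a different parent. (b) \emph{No maximal bicluster is generated twice.} Here I would argue from the lexicographic ordering of biclusters: the canonicity test (Eq.~\ref{eq:rinc_cvc3_iscan}) already forces each bicluster to be created on the smallest admissible column $j$, and, among the possibly many parents on that fixed $j$, Eq.~\ref{eq:cvc_row-iscan2} forces creation from the parent with the lexicographically smallest row-set; since by Property~\ref{prop:maxcvc} a maximal CVC bicluster is determined by its row-set, both "smallest" choices are unique, so the (column, parent) pair producing a given maximal bicluster is unique, which is precisely non-redundancy and is what licenses dropping the symbol table. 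I expect claim~(b) to be the delicate point, since it needs a careful induction showing that $\Gamma$ is simultaneously "large enough" to certify row-maximality and "exactly right" for the lexicographic disambiguation, and that the two uses never conflict.

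Finally I would justify the pruning set $PN$ by transporting Andrews' In-Close5 reasoning through the multiple-descendants setting: $j$ is placed in $PN$ (line~23 of Algorithm~\ref{alg:rinclosecvc3}) only if, in the current cycle, either (1) no candidate row-set on column $j$ attains size $minRow$ — by anti-monotonicity no descendant's (smaller) row-set will either, a $P$-type skip — or (2) every size-admissible candidate on $j$ is canonical but owes its survival to a failure at some $k < y$, or is non-canonical with lowest failure index $k < y$ — then, exactly as in In-Close5, the biclusters that would witness $j$ have already been computed and processed in an earlier cycle, an $N$-type inherited canonicity failure; in both cases no descendant needs to re-examine $j$, so completeness and non-redundancy are preserved. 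Efficiency then follows by counting: the number of recursive calls equals the number $q$ of biclusters (one processed per call), and the per-call cost is dominated by the row-canonical test, with worst-case time $O(n^2 m)$, repeated over $O(n)$ candidate row-sets and $O(m)$ columns, yielding the $O(q n^3 m^2)$ bound stated above — polynomial in the input size $nm$ and linear in $q$.
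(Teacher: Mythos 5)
Your overall architecture matches the paper's proof: isolate the new ingredients (per-column $\epsilon_j$ and missing values, the row-canonical test, the set $PN$) and show each preserves the four properties; your correctness and efficiency parts, and the $PN$ discussion, are essentially the paper's argument. The gap is in how you describe the first part of the row-canonical test, and it propagates into both your completeness claim (a) and your non-redundancy claim (b). You assert that Eq.~\ref{eq:cvc_row-iscan1} ``fires only on non-row-maximal biclusters, hence never on a maximal one.'' This is false: the test is applied to $(G,H)$ with $H=J^{<j}\cup\{j\}$ only, not to the eventual closure of $G$, so it can and does reject candidates whose closed bicluster is maximal. The paper's Fig.~\ref{fig:rowCanon}(d) is exactly this situation: the candidate with row-set $\{4,5\}$ proposed at attribute $3$ is rejected because rows $6$ and $7$ still fit $(G,H)$ there, yet the maximal bicluster with row-set $\{4,5\}$ must still be enumerated --- it is created later, at attribute $4$, from the parent with row-set $\{4,5,6,7\}$. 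Your claim (a) therefore does not establish completeness for this case; the paper closes it by observing that the witness $g$ in Eq.~\ref{eq:cvc_row-iscan1} certifies the existence of a correct bicluster $(G\cup\{g\},\,J^{<j}\cup\{j\})$ from which the candidate will be (re)created at a column greater than $j$.

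Relatedly, in claim (b) you attribute the disambiguation among possible creating columns to the ordinary canonicity test (Eq.~\ref{eq:rinc_cvc3_iscan}), asserting each bicluster is created on the \emph{smallest} admissible column. The column-canonicity test only inspects columns $k<j$ outside the inherited intent, and since different parents pass different inherited intents to the same row-set $G$, it cannot by itself resolve which (parent, column) pair creates $G$. In RIn-Close\_CVC3 that role is played by the first part of the row-canonical test, and the rule is the opposite of yours: creation is deferred to the \emph{most posterior} column, with Eq.~\ref{eq:cvc_row-iscan2} then selecting, among parents on that fixed column, the one with lexicographically smallest row-set. Your uniqueness argument via Property~\ref{prop:maxcvc} would survive once the tie-breaking rule is stated correctly, but as written the mechanism you describe is not the one the algorithm implements, so neither ``no maximal bicluster is lost'' nor ``none is produced twice'' is actually proved. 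The remaining pieces --- the $\Gamma$-sufficiency via Algorithm~\ref{alg:ComputeRM} and its inheritance, the $PN$ pruning transported from In-Close5, and the $O(qn^3m^2)$ count --- match the paper.
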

\begin{proof}

We only need to show that the new features of RIn-Close\_CVC3 (when compared to RIn-Close\_CVC) keeps these 4 properties.

\noindent (1) Efficiency: RIn-Close\_CVC3 has time complexity linear in the number of biclusters and polynomial in the input size.

\noindent (2) Completeness: The row-canonicity will not lose a (maximal) bicluster because a new candidate bicluster will only be discarded if (1) there is a bicluster $(G \cup \{g\}, J^{<j} \cup \{j\})$, which may create it in a column greater than $j$ ($g \in \Gamma$), or (2) there is another bicluster with lower lexicographic order in its row-set, which may create it in the column $j$.

\noindent (3) Correctness: Essentially, the difference is in the computation of row-sets of possible new biclusters (see Eq.~\ref{eq:rinc_cvc3_compExt}), which also takes into account missing values and a maximum perturbation $\epsilon_j$ per column $j$. The biclusters do not contain missing values and are correct according to Definition~\ref{def:cvcbic2}.

\noindent (4) Non-Redundancy:
We have already shown that the set $\Gamma$ contains all rows that could possibly generate a CVC bicluster due to the \textit{overlapping row-sets problem}.

The first part of the row-canonicity test (see Eq.~\ref{eq:cvc_row-iscan1}) ensures that a candidate new bicluster that is not row-maximal will be discarded.

In the case of mining the same bicluster more than once, we can have two situations: a bicluster can be created by more than one bicluster (1) on different columns, and/or (2) in a same column $j$. Both situations are covered by the new row-canonicity test. Its first part ensures that a new bicluster will be created in the most posterior column by checking if there exists a CVC bicluster $(G \cup \{g\}, J^{<j} \cup \{j\})$. The second part ensures that a new bicluster will only be created in a column $j$ by the bicluster with the lowest lexicographical order in its row-set. Therefore, the test fails if there is a CVC bicluster with row-set $\{I^{<g} \cup \{g\} \cup G\} < I$ and column-set $J^{<j}$ (remark that $G \subset I$ and if $I^{<g}  = I$, then the bicluster $(I^{<g} \cup \{g\} \cup G, J^{<j})$ is incorrect by definition since $I$ is a maximal row-set).
\end{proof}

\section{Experimental results}
\label{sec:expresults}

We evaluated RIn-Close\_CVC3 on both synthetic and real-world datasets, and we will outline here the advantages of this new version when compared to its previous versions. First of all, we evaluated RIn-Close\_CVC3 on synthetic datasets and tested its performance when varying fundamental characteristics of a dataset, thus providing an overall estimative about its scalability in practice. Secondly, we tested RIn-Close\_CVC3's performance on real-world datasets by analyzing its sensitivity to the maximum perturbation $\epsilon$ allowed in a bicluster, and performed a comparison between the ability to extract information using online and a priori partitioning. The results produced by the online partitioning of RIn-Close\_CVC3 will clearly reveal that a priori partitioning, even followed by enumerative biclustering, involves loss of information. Lastly, we applied RIn-Close\_CVC3 in the analysis of mixed-attribute datasets, aiming to discover supervised descriptive rules. We again performed a comparison between the solutions provided by using a priori partitioning and RIn-Close\_CVC3. The results also show that RIn-Close\_CVC3 is able to mine interesting rules that can appropriately describe the decision variable of a labeled dataset.

The experiments were carried out on a PC Intel(R) Core(TM) i7-4770K CPU @ 3.5 GHz, 32GB of RAM, and running under Ubuntu 14.04.

\subsection{Scalability issues}

This experiment aims to test RIn-Close\_CVC3's performance when varying ($i$) the number $n$ of rows of the dataset, ($ii$) the number $m$ of columns of the dataset, ($iii$) the number of biclusters in the dataset, ($iv$) the bicluster row size, ($v$) the bicluster column size, ($vi$) the overlap among the biclusters, and ($vii$) the percentage of missing values. For this purpose, we created synthetic datasets with controlled number, size, shape and level of noise of the existing biclusters and, then, we tested how RIn-Close\_CVC3 performs when varying each one of the parameters in isolation.

The default parameters used in the synthetic data generator were: $n = 10,000$; $m = 100$; number of biclusters $= 30$; bicluster row size $= 200$; bicluster column size $= 16$; overlap $= 0.2$; percentage of missing value $= 0$; and Gaussian noise with $\mu = 0$ and $\sigma = 0.05$. The synthetic data generator creates the biclusters and assigns random values to the other regions of the dataset. Then, it adds Gaussian noise and shuffles the rows and columns of the dataset. Therefore, arbitrarily positioned overlapping biclusters are produced, so that the resulting biclusters are usually non-contiguous. The amount of noise was chosen in such a way that the original biclusters were preserved. 

For each configuration, we created 50 different synthetic datasets to compute the median runtimes and memory usage. We chose the median, not the mean, because the median is less sensitive to outliers. For the same configuration, the results may vary because the positioning of the biclusters is random. From FCA, we know that the placement of the biclusters in a dataset makes all difference in the runtime because less failures in the canonicity tests imply faster execution time \cite{CarpinetoEtAl2004}.

Remark that all versions of RIn-Close\_CVC produce exactly the same biclustering solution, finding all the planted biclusters without losing any row or column. Figure~\ref{fig:expSynDataRT} shows the runtime and the memory usage for the different configurations. We included RIn-Close\_CVCP as a baseline for comparison. Since RIn-Close\_CVCP looks for perfect biclusters, it was applied to datasets without the Gaussian noise. Figure~\ref{fig:expSynDataRT} shows the memory usage only when varying the number $n$ of rows of the dataset since the results are essentially the same for all other variables.

The memory usage of RIn-Close\_CVC3 was equivalent to the memory usage of RIn-Close\_CVCP and RIn-Close\_CVC2, being much better than the one of RIn-Close\_CVC. Therefore, the alternative presented here to the symbol table used in RIn-Close\_CVC promotes a breakthrough in terms of memory usage. Besides, the extra vector used in our implementation of RIn-Close\_CVC3 (to store the set $PN$ of Algorithm~\ref{alg:rinclosecvc3}), when compared to RIn-Close\_CVC2, had no impact in the memory usage. RIn-Close\_CVC3 also brings an overall significant runtime gain, even having a higher worst-case time-complexity than RIn-Close\_CVC.  In terms of runtime, RIn-Close\_CVC3 strongly outperforms its previous versions, and even RIn-Close\_CVCP. It indicates that an expressive gain was achieved with the incorporation of the contributions proposed in \cite{Andrews2017, Andrews2018} into the RIn-Close algorithms. Notice also that the runtime of RIn-Close\_CVC3 was even close to the runtime of RIn-Close\_CVCP*, which is RIn-Close\_CVCP updated with these contributions.

\begin{figure}[!htb]
\centering
\subfigure[]{
  \includegraphics[trim=0.2cm 0.1cm 0.6cm 0.4cm, clip, scale=0.3]{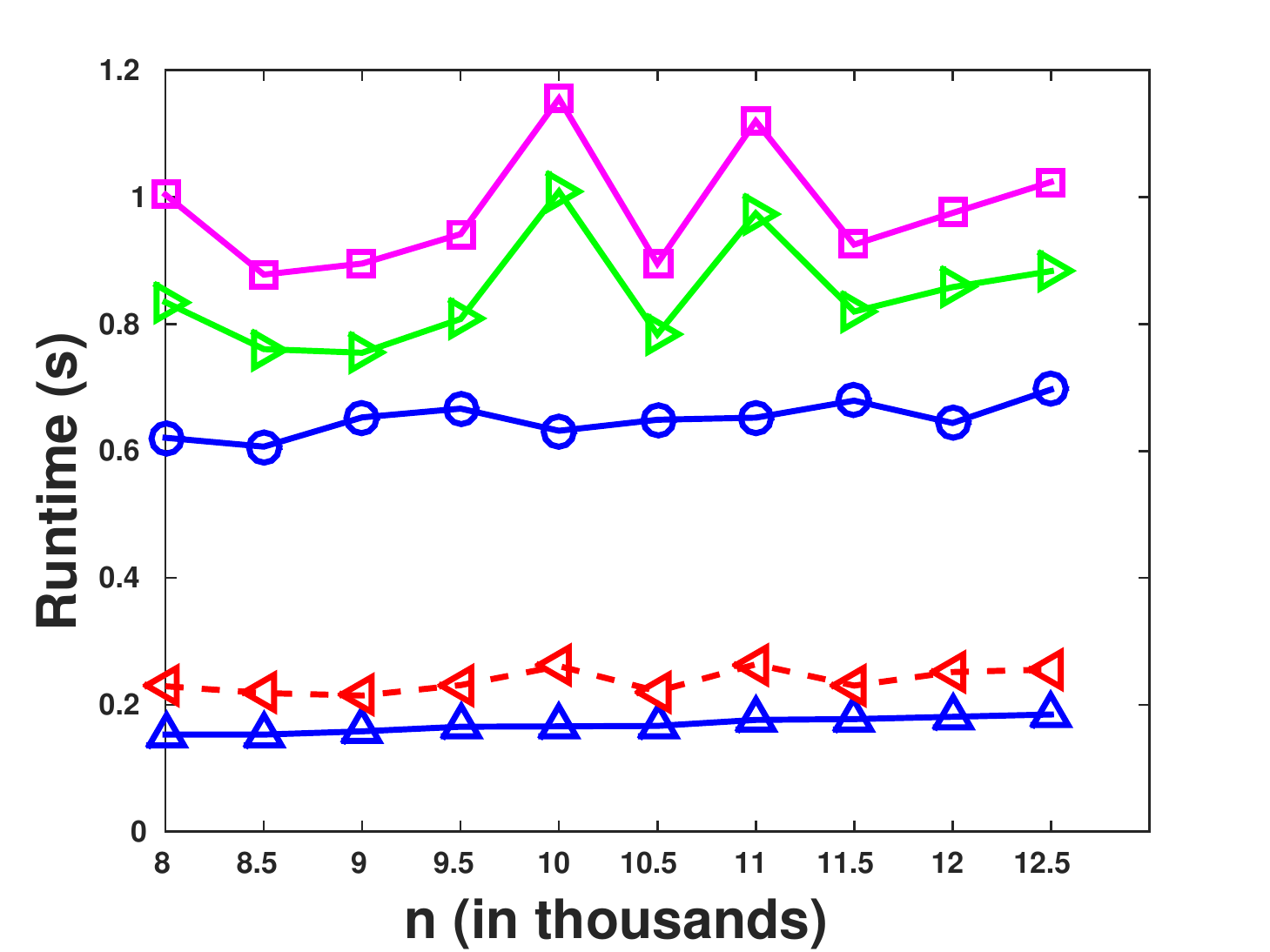}
}
\subfigure[]{
  \includegraphics[trim=0.2cm 0.1cm 0.6cm 0.4cm, clip, scale=0.3]{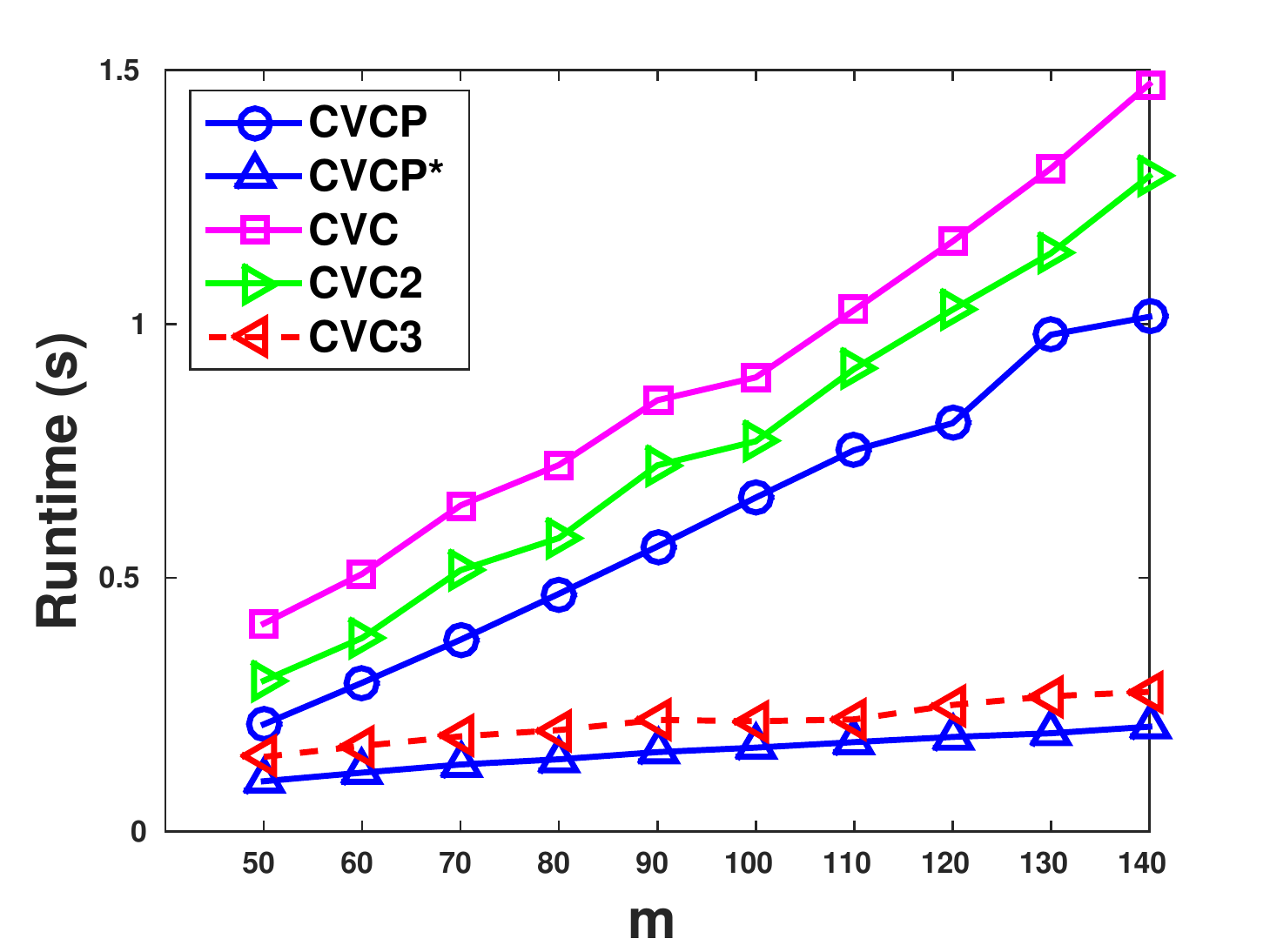}
}
\subfigure[]{
  \includegraphics[trim=0.2cm 0.1cm 0.6cm 0.4cm, clip, scale=0.3]{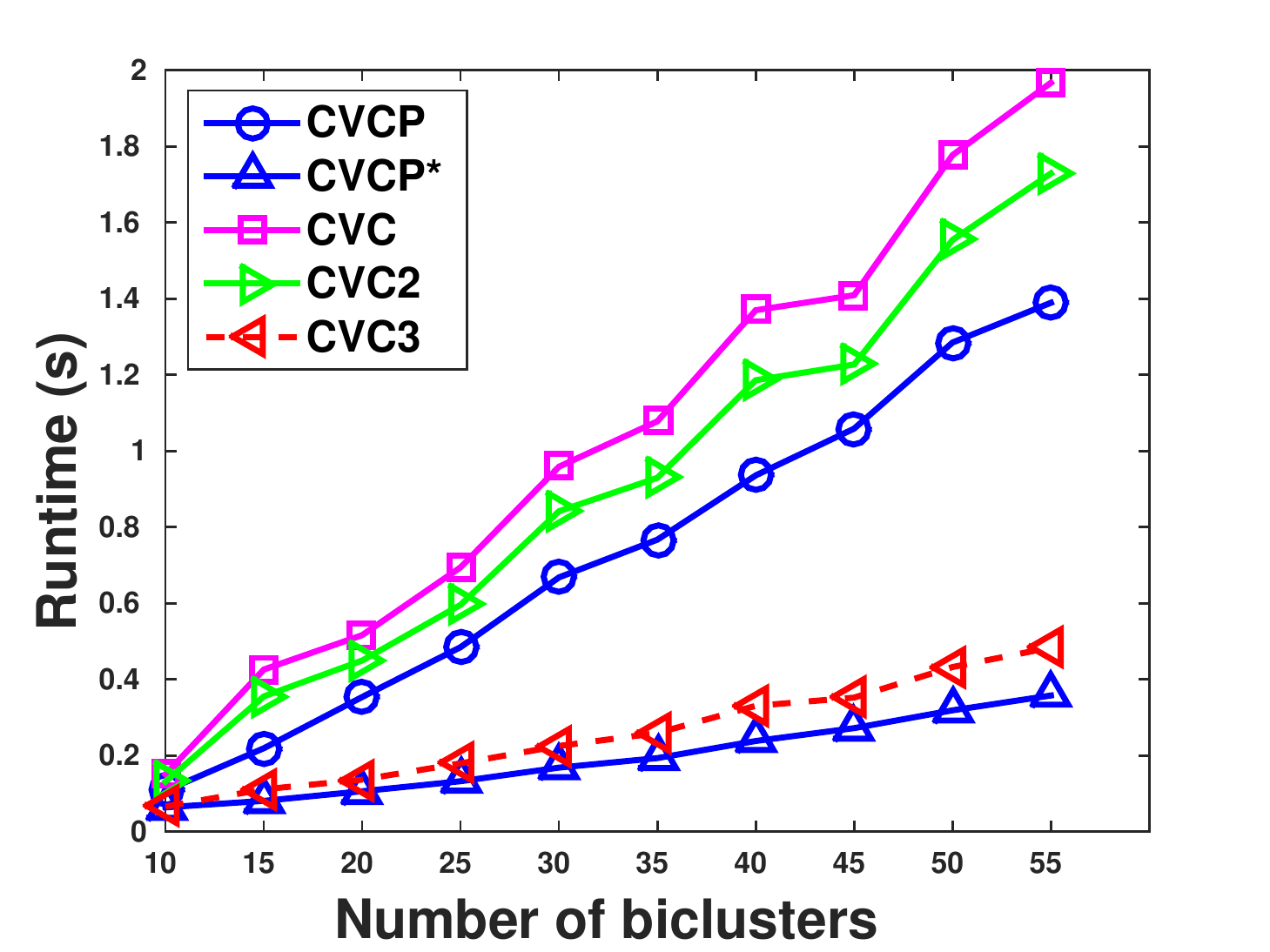}
}
\subfigure[]{
  \includegraphics[trim=0.2cm 0.1cm 0.6cm 0.4cm, clip, scale=0.3]{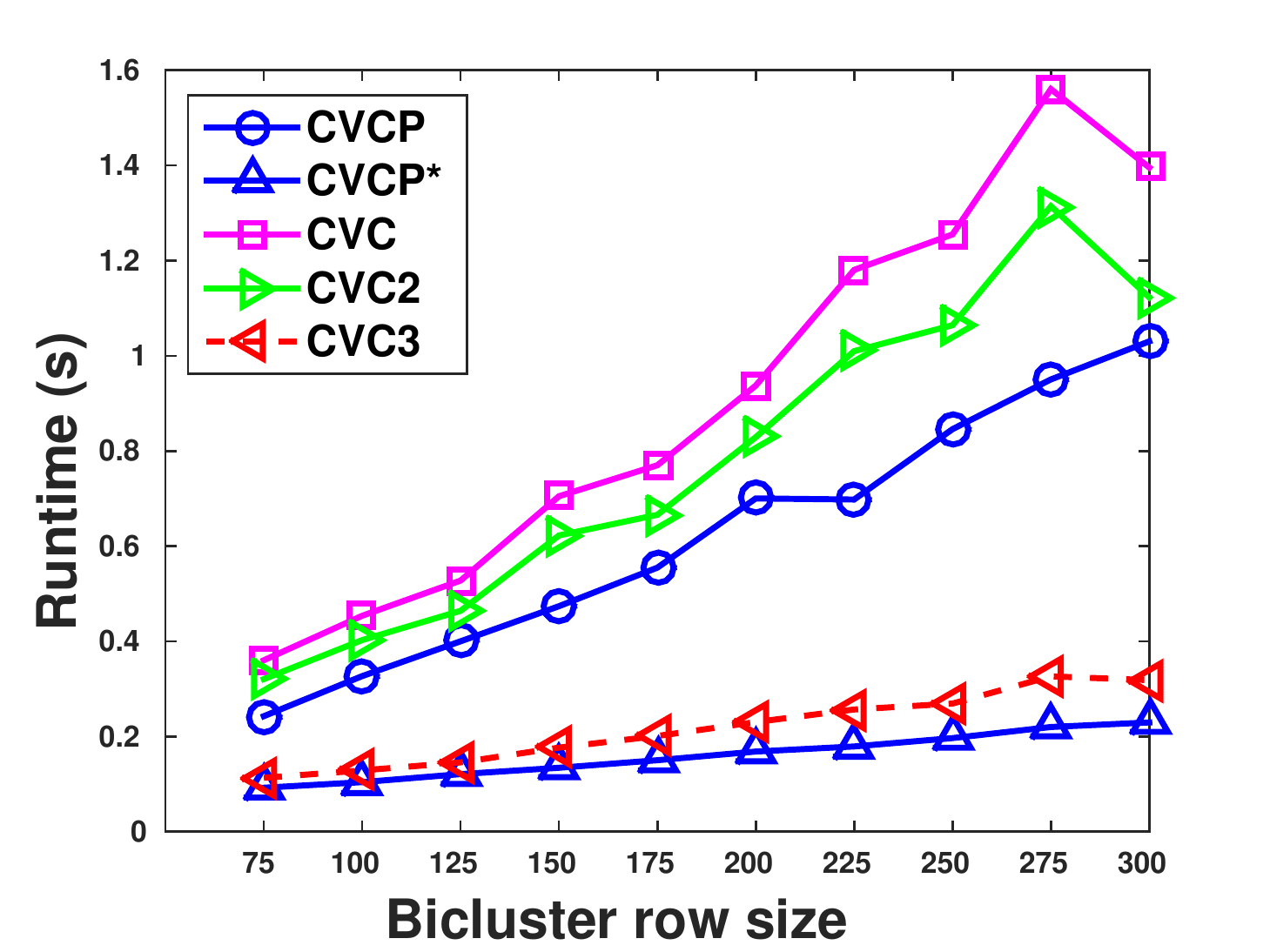}
}
\subfigure[]{
  \includegraphics[trim=0.2cm 0.1cm 0.6cm 0.4cm, clip, scale=0.3]{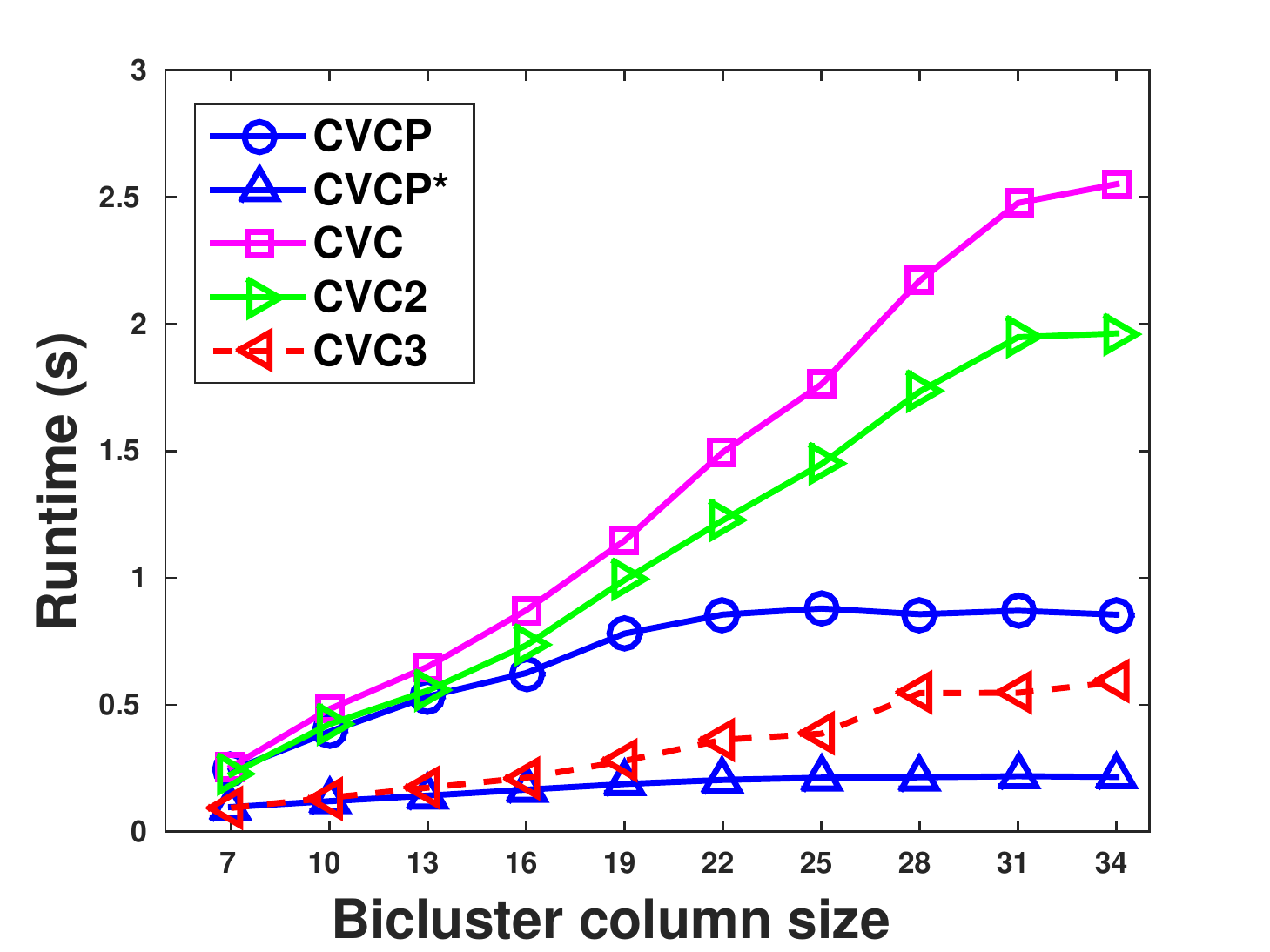}
}
\subfigure[]{
  \includegraphics[trim=0.2cm 0.08cm 0.6cm 0.4cm, clip, scale=0.3]{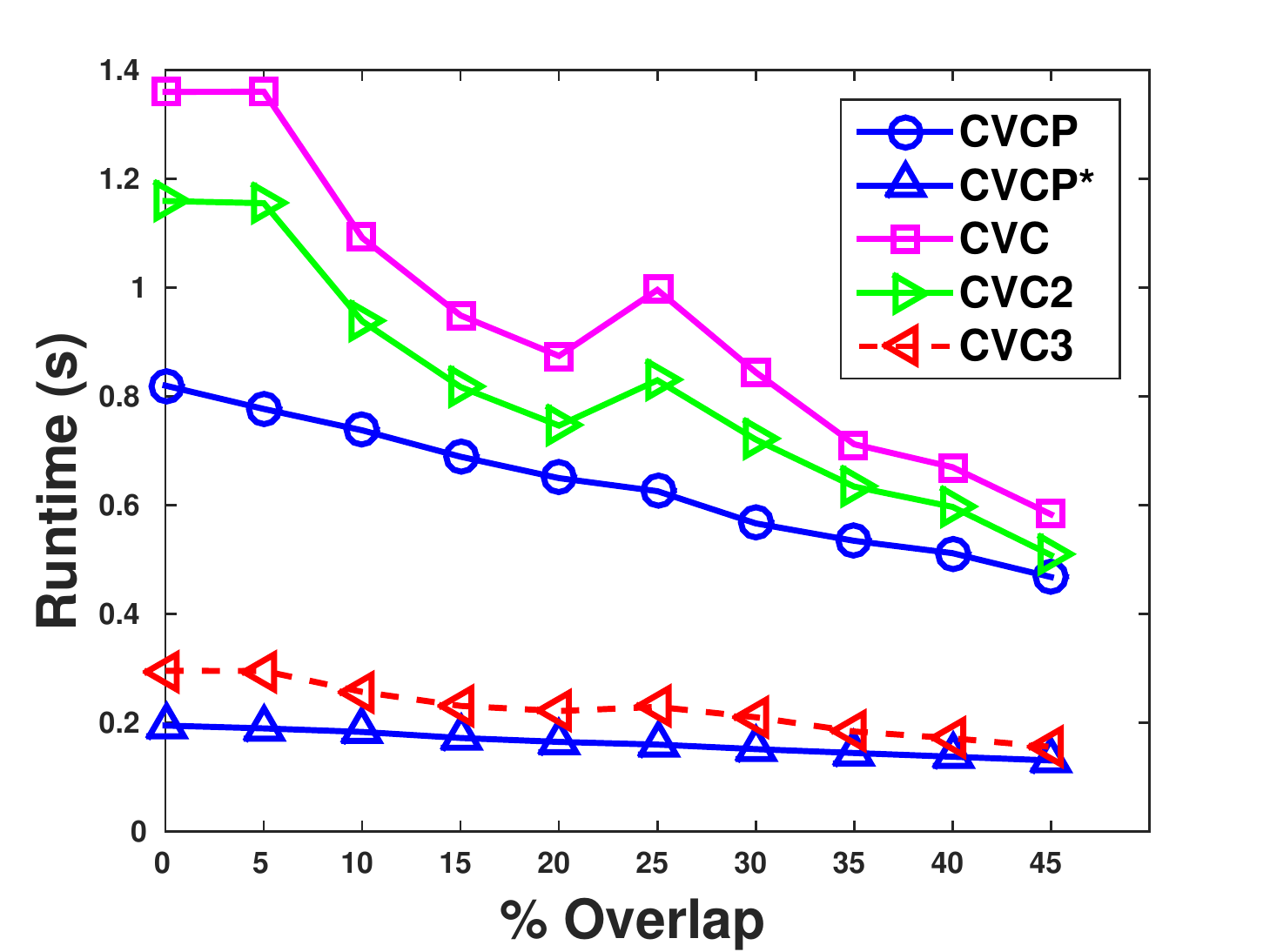}
}
\subfigure[]{
  \includegraphics[trim=0.2cm 0.1cm 0.6cm 0.4cm, clip, scale=0.3]{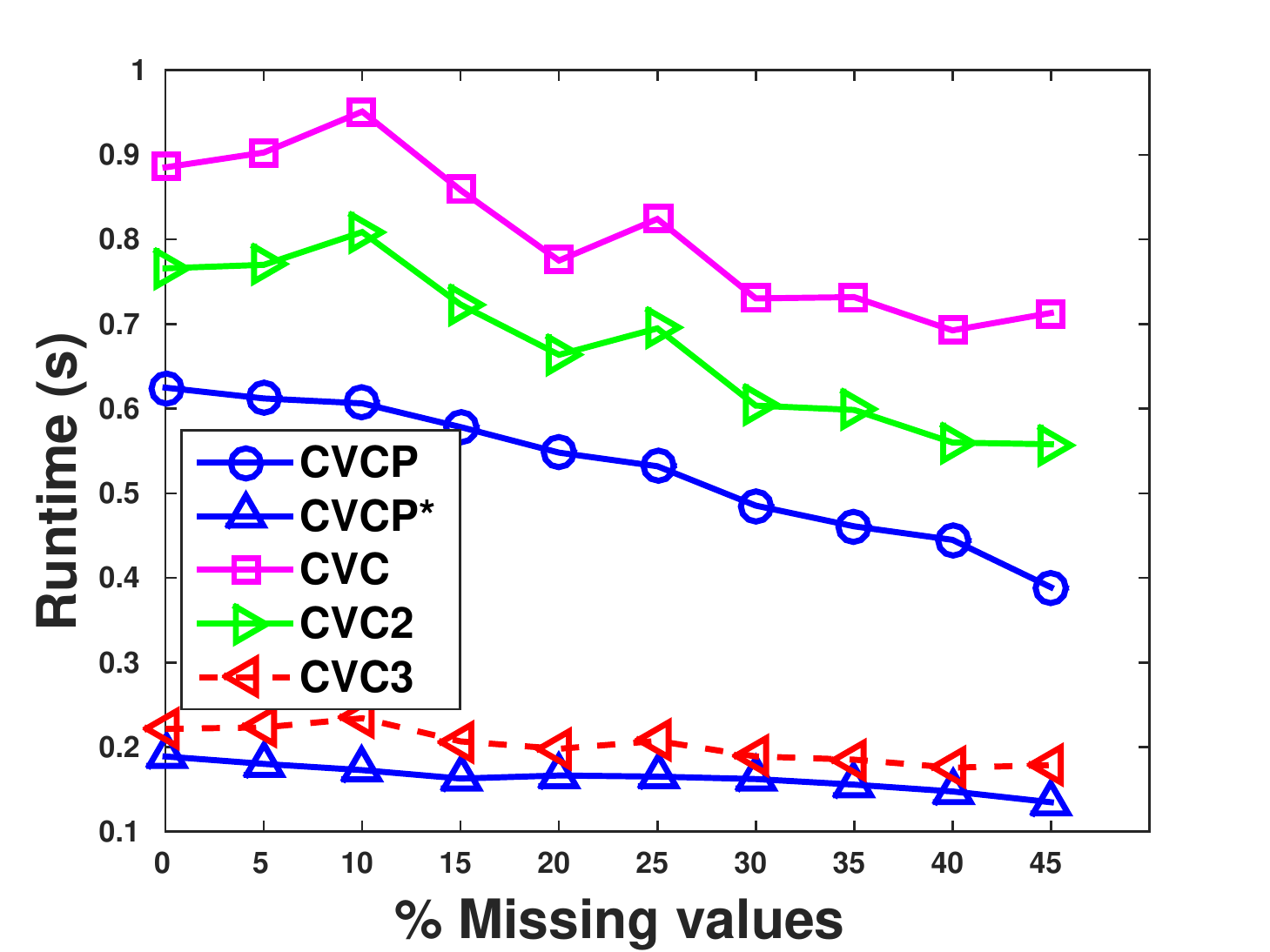}
}
\subfigure[]{
  \includegraphics[trim=0.2cm 0.1cm 0.6cm 0.35cm, clip, scale=0.3]{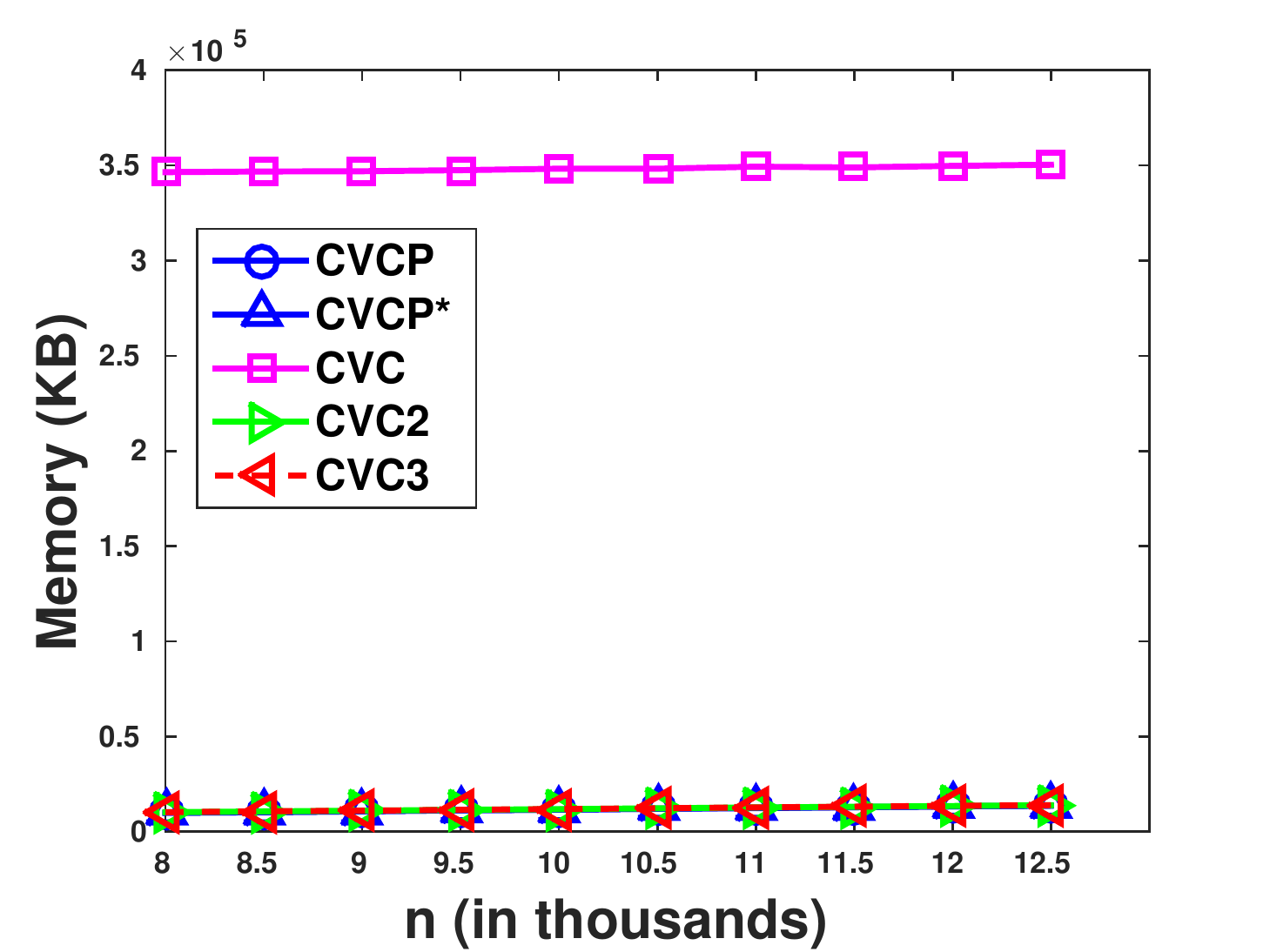}
}
\caption{Runtime of RIn-Close\_CVCP, RIn-Close\_CVCP* (RIn-Close\_CVCP updated with the contributions proposed in \cite{Andrews2017, Andrews2018}), RIn-Close\_CVC, RIn-Close\_CVC2, and RIn-Close\_CVC3 when varying (a) the number $n$ of rows of the dataset, (b) the number $m$ of columns of the dataset, (c) the number of biclusters in the dataset, (d) the bicluster row size, (e) the bicluster column size, (f) the overlap among the biclusters, and (g) the percentage of missing values. (h) Memory usage of these algorithms when varying the number $n$ of rows of the dataset.}
\label{fig:expSynDataRT}
\end{figure}


\subsection{Sensitivity analysis}

This experiment aims to test the runtime and memory usage when real-world datasets are considered, as well as the coverage of the biclustering solutions. For this, we used three gene expression datasets, and tested the algorithms' sensitivity to the parameter $\epsilon$. This parameter defines the maximum perturbation allowed in the biclusters and, therefore, it is related to the coverage and the number of biclusters that will be found in a dataset \cite{VeronezeEtAl2017}. In turn, the number of biclusters in the solution is strongly related to the runtime.

Table~\ref{tab:realdatasets} shows the main properties of the three real-world datasets that we used in our experiments. All of them was downloaded from the NCBI repository \footnote{\url{https://www.ncbi.nlm.nih.gov/}, last access April 1st, 2019}. The attributes of the datasets have a skewed distribution, so we took the logarithm of the values of these datasets. We also scaled the data of each column to real-values between 0 and 1, which enables the usage of the same value of $\epsilon$ for all attributes of a dataset. For simplicity, we also considered only 3 decimal places and multiplied the values by 1000. Thus, the values of the maximum perturbation $\epsilon$ are presented as integers.

We ran the algorithms for 50 times to compute the average runtime and memory usage. We looked for biclusters with at least 3 columns, and Table~\ref{tab:realdatasets} shows the minimum number of rows, $minRow$, used for each dataset. The parameter $minRow$ was set as 5\% of the number $n$ of rows of each dataset. We compared the efficiency of RIn-Close\_CVC algorithms with their best contender, PPS (see Section~\ref{sec:RelWorks} for details). Forcing all algorithms to yield exactly the same solution, the redundancy produced by PPS was suppressed. Our PPS's implementation is available at \url{https://sourceforge.net/projects/pps-cpp/}.

\begin{table}[!htb]
\centering
\small
\caption{Description of the real-world datasets.}
\begin{tabular}{lrr}
\toprule
Name	& Dimension	& $minRow$ \\
\midrule
GDS750	 & $6091  \times 13$ & 305 \\
GDS3035 & $9335  \times 48$ & 467 \\
GDS181  & $12626 \times 84$ & 631 \\
\bottomrule
\end{tabular}
\label{tab:realdatasets}
\end{table}

Figures~\ref{fig:expRealDataNBIC}, \ref{fig:expRealDataRT}, \ref{fig:expRealDataMEM} and \ref{fig:expRealDataCov} show, respectively, the number of biclusters, runtime, memory usage and dataset's coverage for this experiment. To reduce the time for experiments, we limit the runtime of each execution to one hour, so we do not have the PPS results in all tested scenarios.

\begin{figure}[!htb]
\centering
\subfigure[]{
  \includegraphics[trim=0.4cm 0.1cm 1.2cm 0.15cm, clip, scale=0.35]{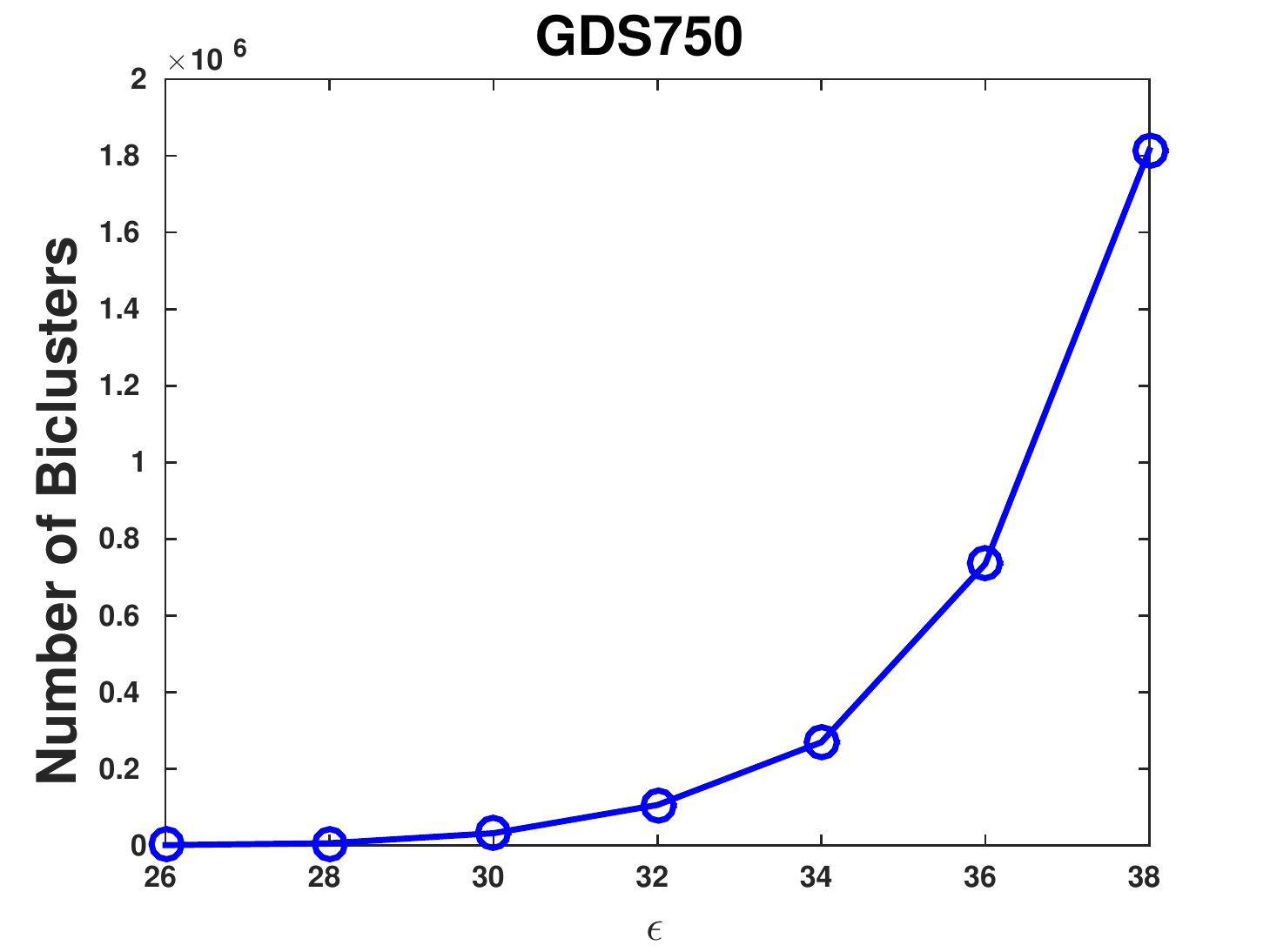}
}
\subfigure[]{
  \includegraphics[trim=0.4cm 0.1cm 1.2cm 0.15cm, clip, scale=0.35]{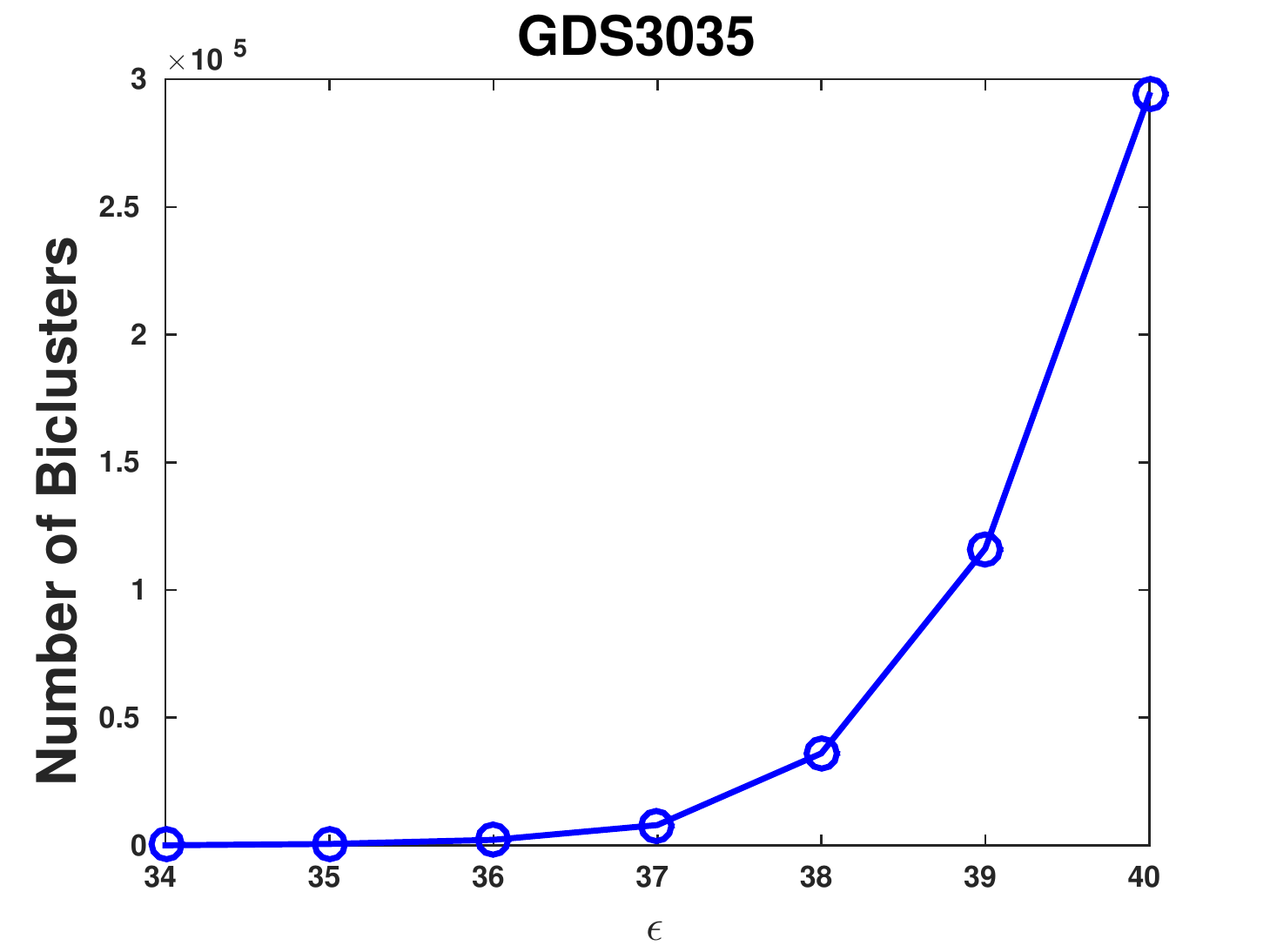}
}
\subfigure[]{
  \includegraphics[trim=0.4cm 0.1cm 1.2cm 0.15cm, clip, scale=0.38]{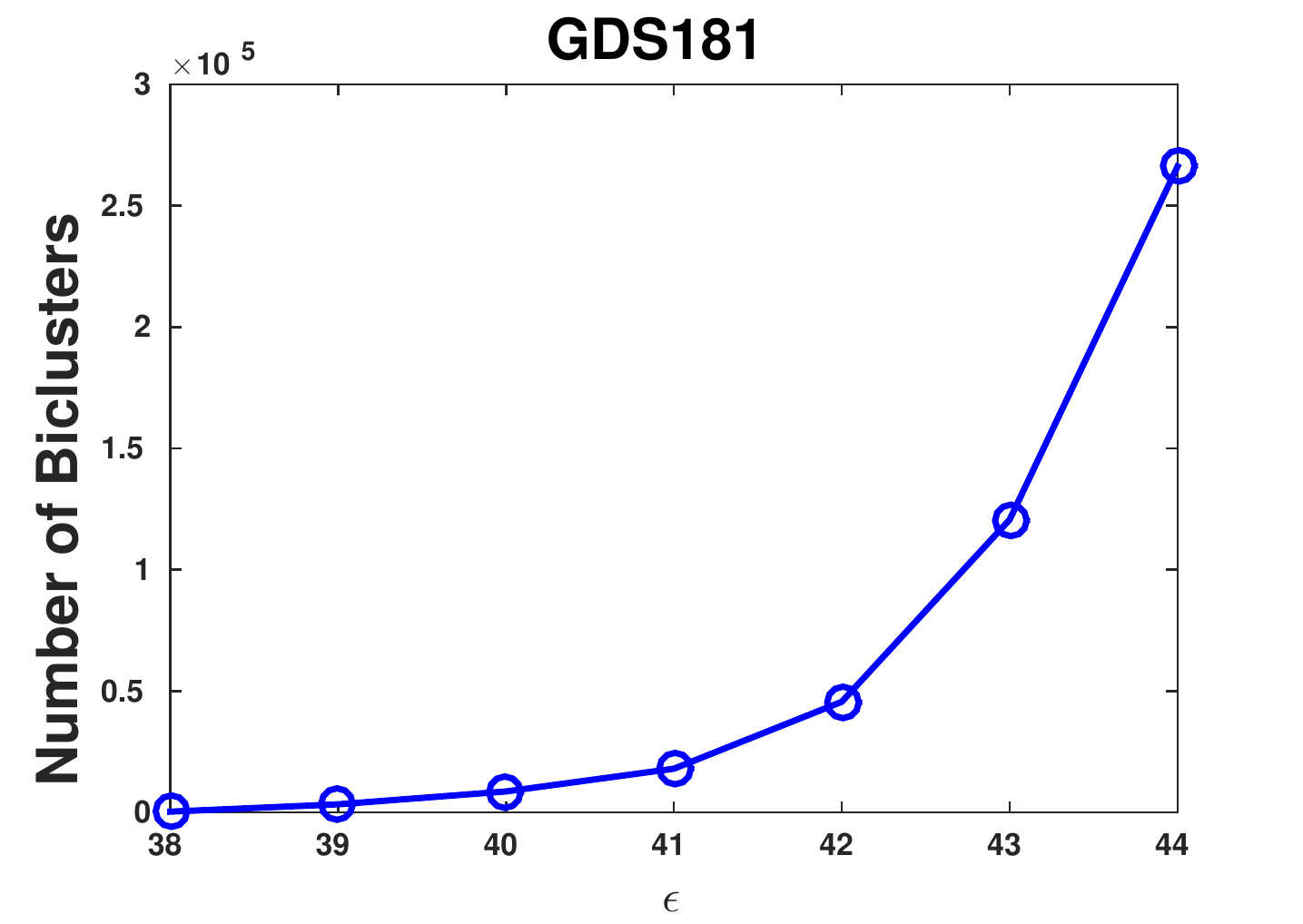}
}
\caption{Number of biclusters when varying the user-defined parameter $\epsilon$ (which controls the maximum perturbation of the biclusters) for the datasets (a) GDS750, (b) GDS3035, and (c) GDS181.}
\label{fig:expRealDataNBIC}
\end{figure}

\begin{figure}[!htb]
\centering
\subfigure[]{
  \includegraphics[trim=0.15cm 0.1cm 1.2cm 0.15cm, clip, scale=0.33]{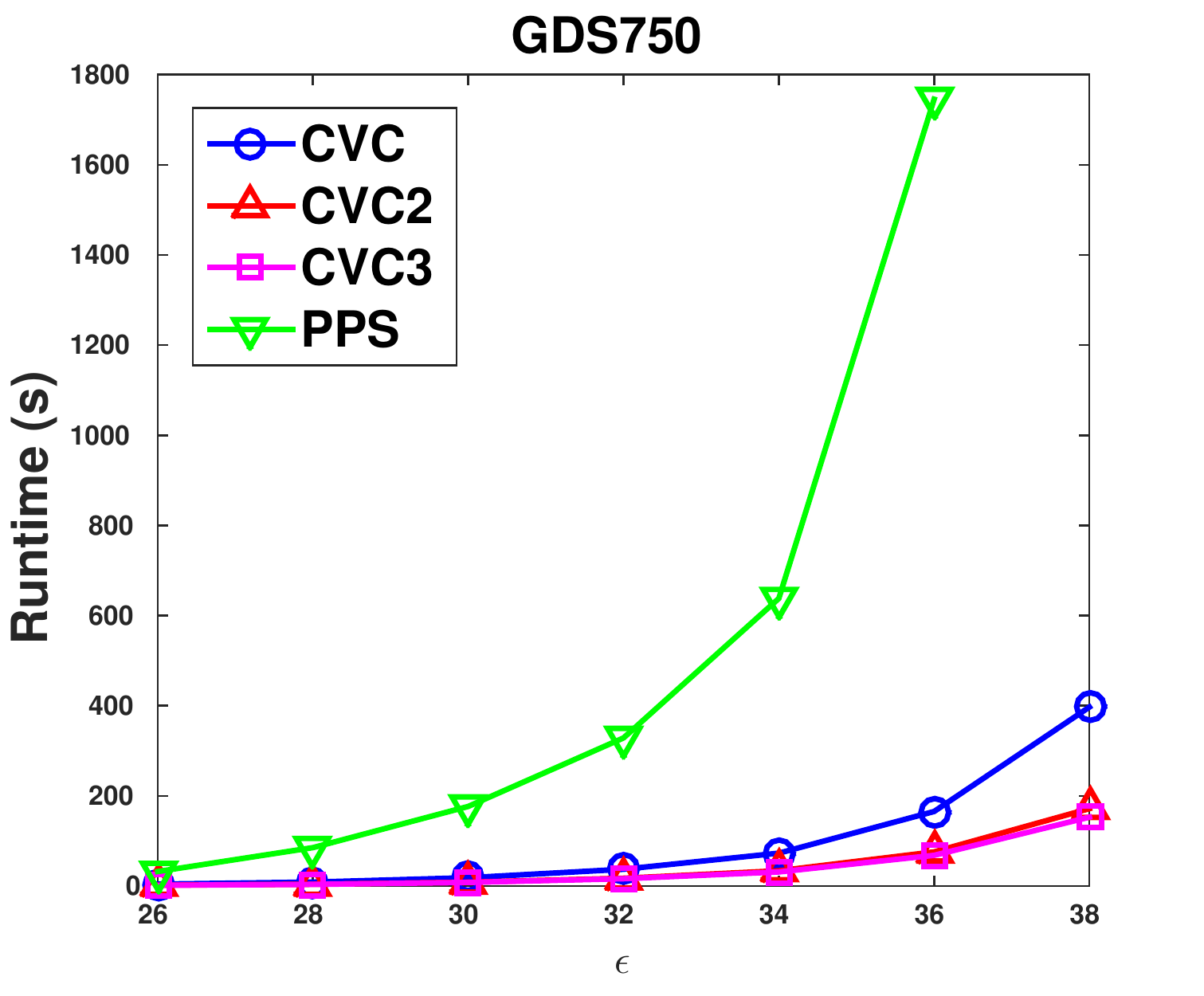}
}
\subfigure[]{
  \includegraphics[trim=0.15cm 0.1cm 1.2cm 0.15cm, clip, scale=0.35]{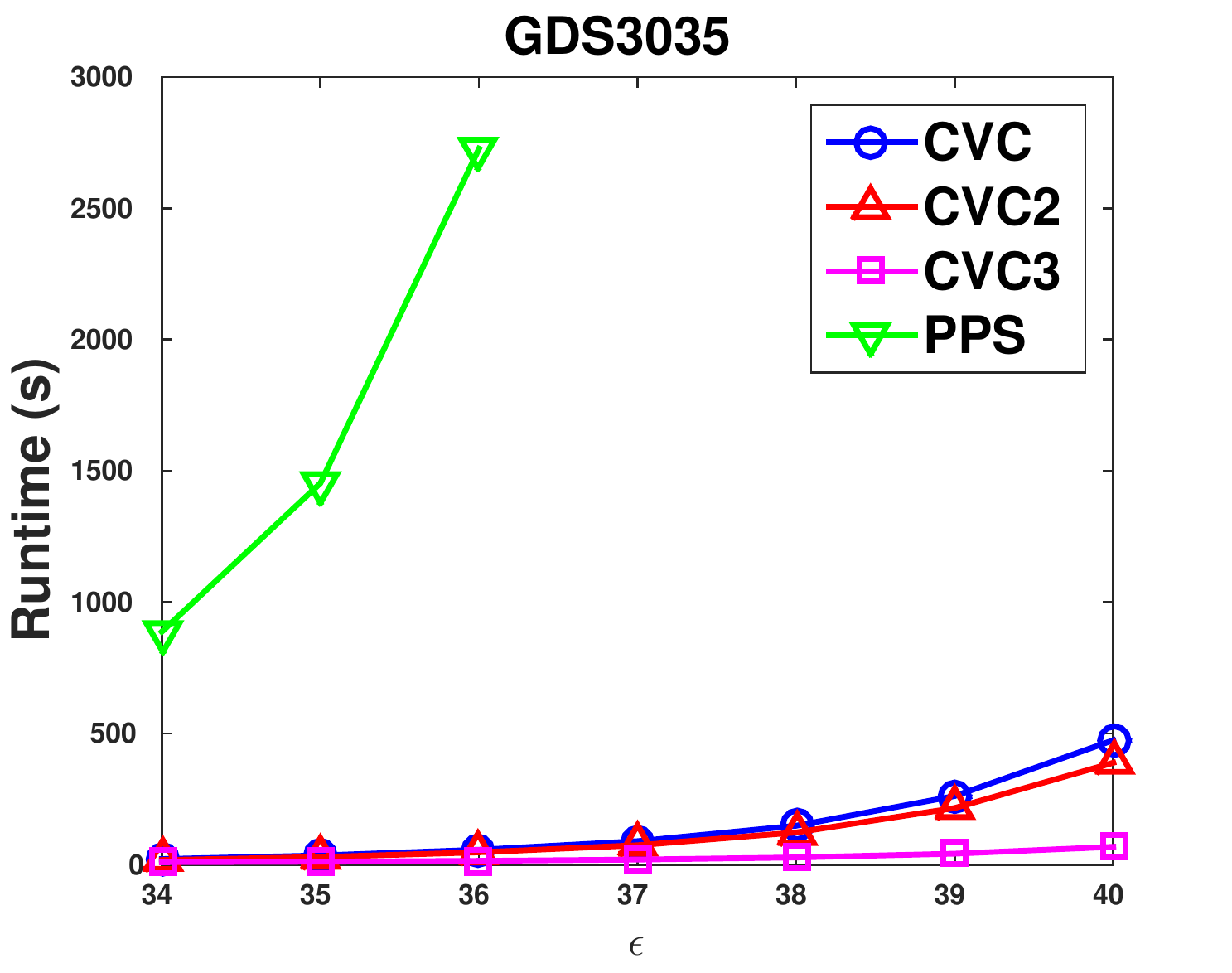}
}
\subfigure[]{
  \includegraphics[trim=0.15cm 0.1cm 1.2cm 0.15cm, clip, scale=0.38]{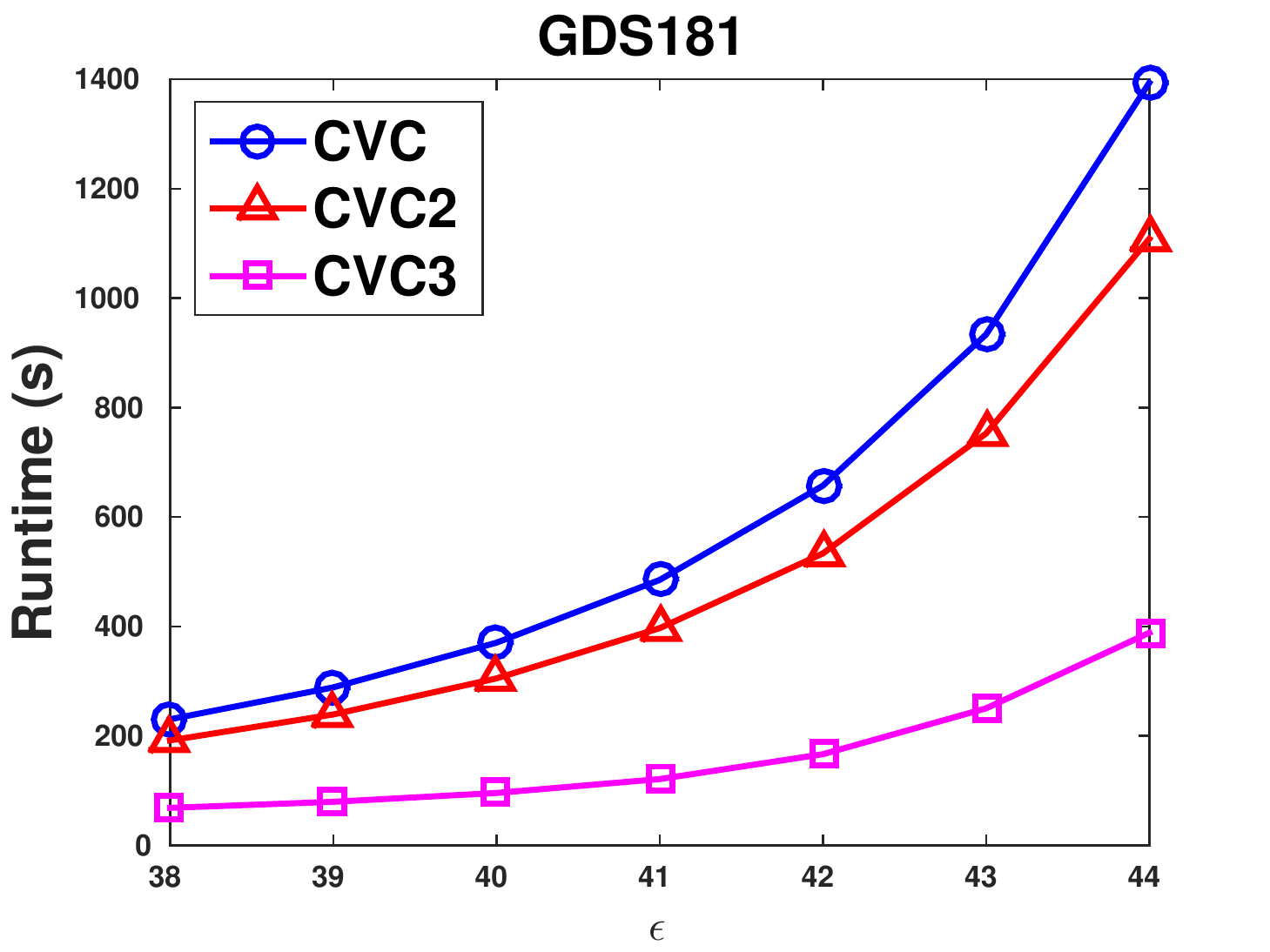}
}
\caption{Runtime when varying the user-defined parameter $\epsilon$ (which controls the maximum perturbation of the biclusters) for the datasets (a) GDS750, (b) GDS3035, and (c) GDS181.}
\label{fig:expRealDataRT}
\end{figure}

\begin{figure}[!htb]
\centering
\subfigure[]{
  \includegraphics[trim=0.4cm 0.1cm 1.2cm 0.15cm, clip, scale=0.33]{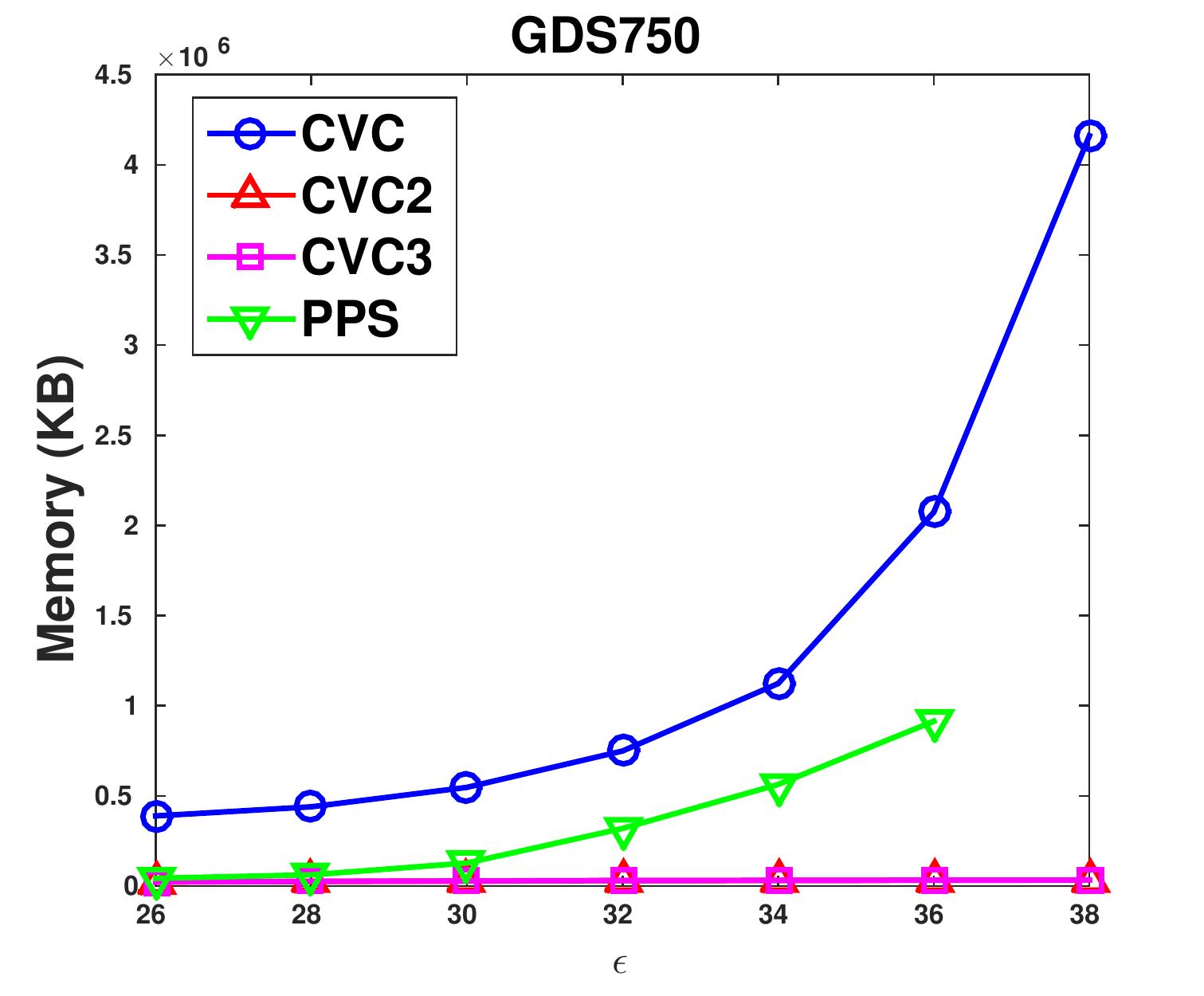}
}
\subfigure[]{
  \includegraphics[trim=0.4cm 0.1cm 1.2cm 0.15cm, clip, scale=0.35]{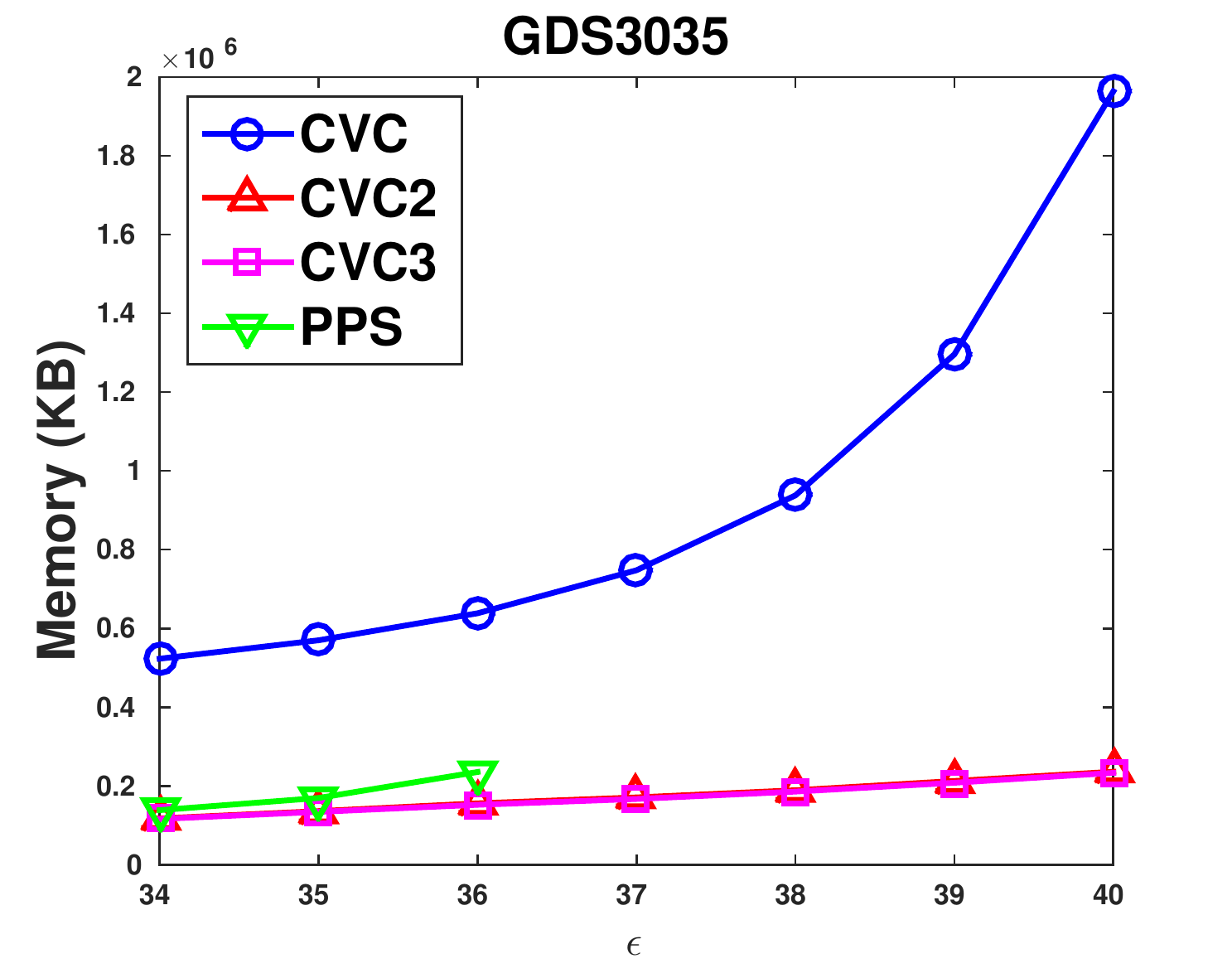}
}
\subfigure[]{
  \includegraphics[trim=0.4cm 0.1cm 1.2cm 0.15cm, clip, scale=0.38]{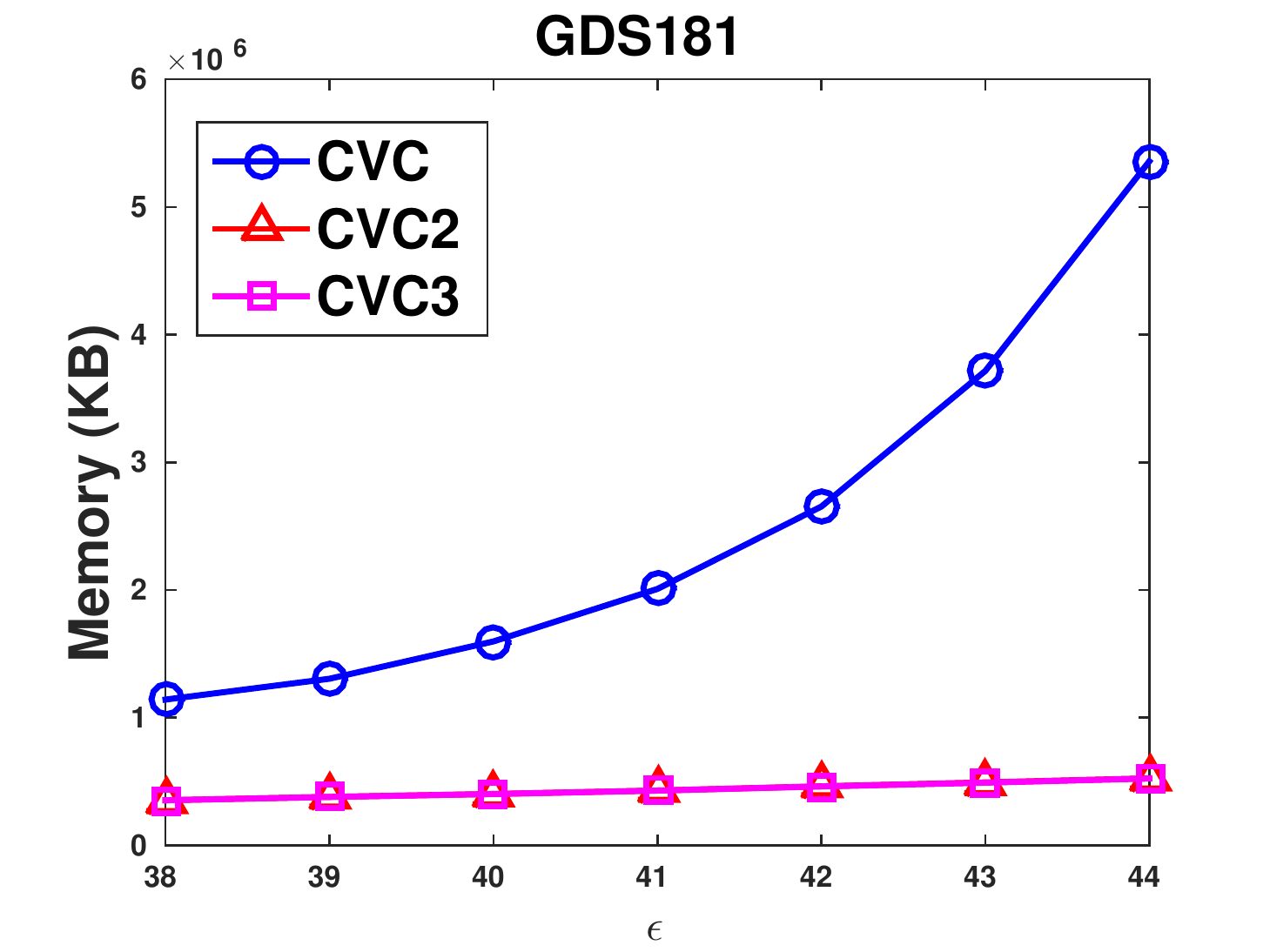}
}
\caption{Memory usage when varying the user-defined parameter $\epsilon$ (which controls the maximum perturbation of the biclusters) for the datasets (a) GDS750, (b) GDS3035, and (c) GDS181.}
\label{fig:expRealDataMEM}
\end{figure}

\begin{figure}[!htb]
\centering
\subfigure[]{
  \includegraphics[trim=0.4cm 0.1cm 1.2cm 0.15cm, clip, scale=0.35]{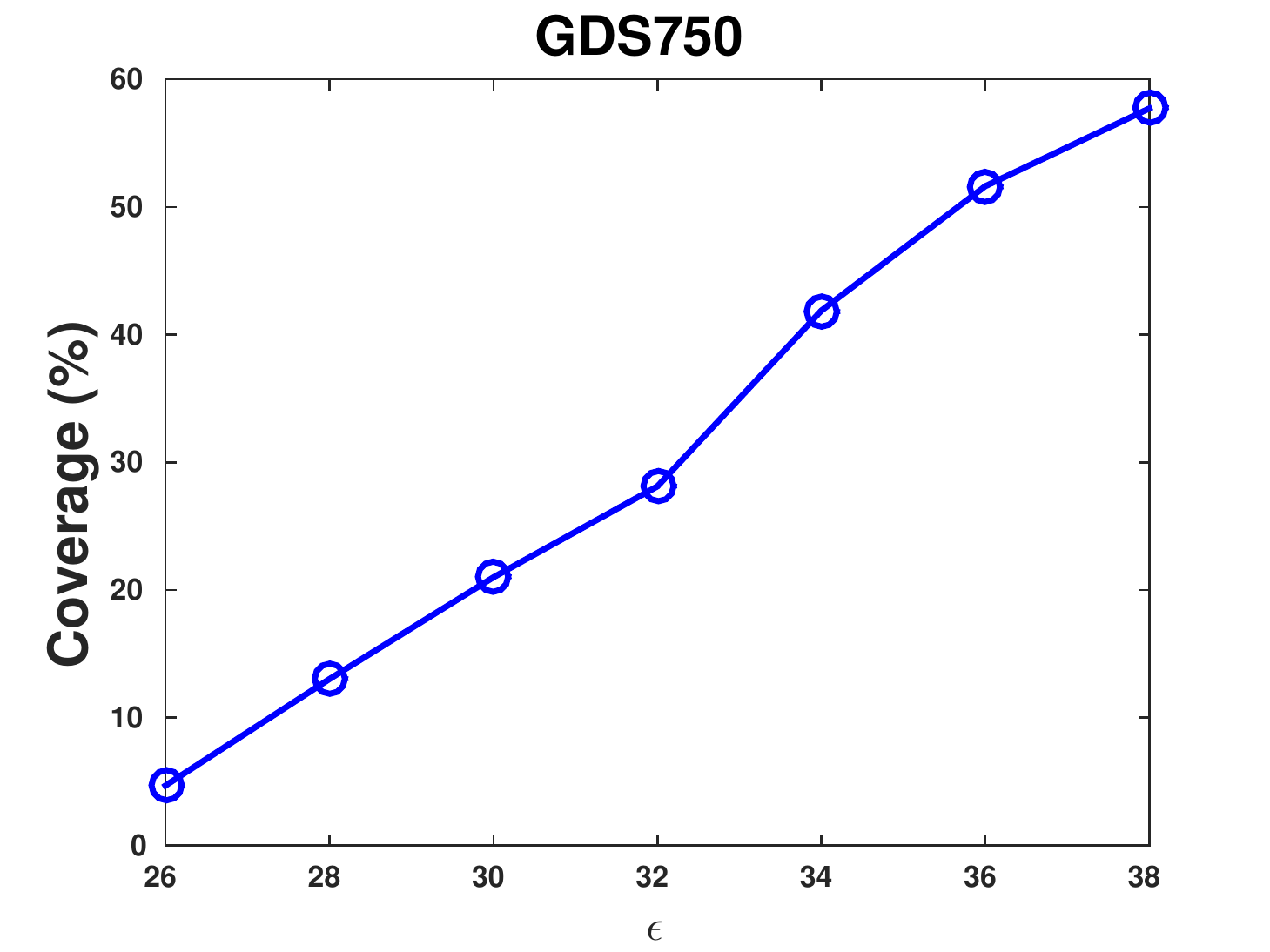}
}
\subfigure[]{
  \includegraphics[trim=0.4cm 0.1cm 1.2cm 0.15cm, clip, scale=0.35]{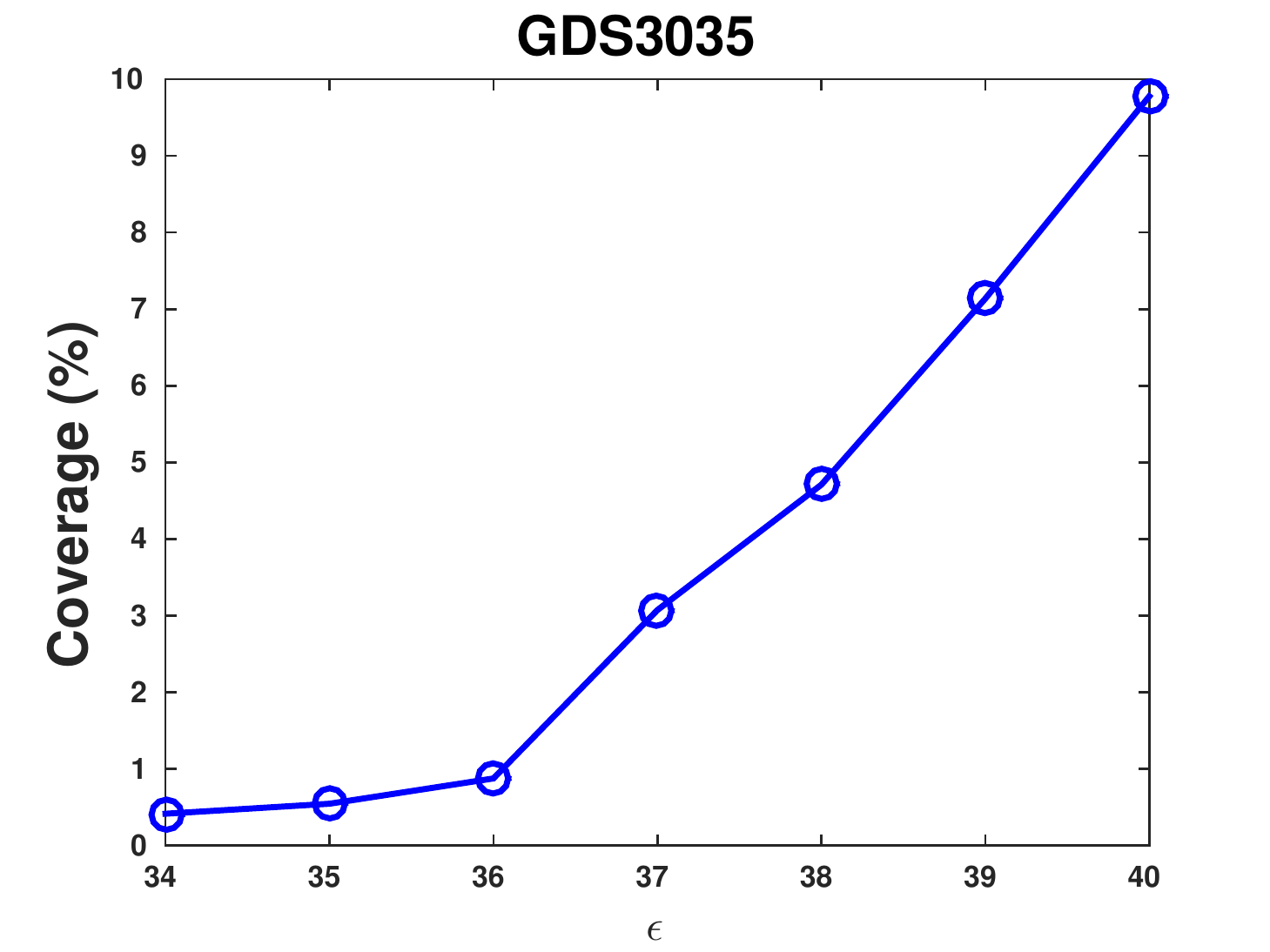}
}
\subfigure[]{
  \includegraphics[trim=0.4cm 0.1cm 1.2cm 0.15cm, clip, scale=0.35]{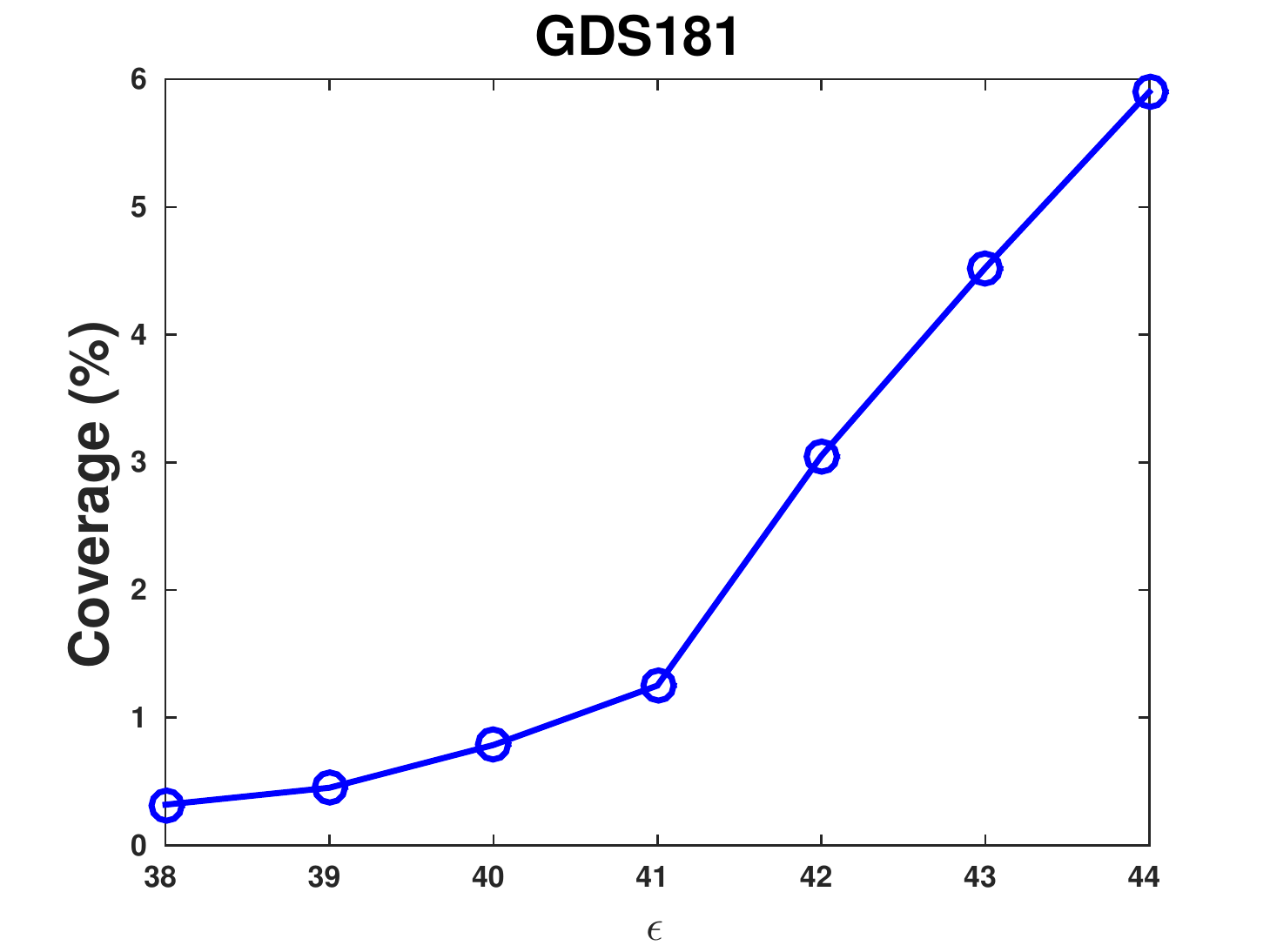}
}
\caption{Coverage of the biclustering solution when varying the user-defined parameter $\epsilon$ (which controls the maximum perturbation of the biclusters) for the datasets (a) GDS750, (b) GDS3035, and (c) GDS181.}
\label{fig:expRealDataCov}
\end{figure}

As expected, the number of biclusters increased exponentially with the increase in the value of $\epsilon$ for all datasets. The gain of RIn-Close\_CVC2 and RIn-Close\_CVC3 in the memory usage is quite expressive. They presented a linear growth in the memory usage, even though the number of biclusters exhibited an exponential growth with the value of $\epsilon$, which is in contrast with the first version of RIn-Close\_CVC. RIn-Close\_CVC2 obtained a better runtime than RIn-Close\_CVC in all scenarios, and RIn-Close\_CVC3 presented an even better runtime, specially for the datasets with more attributes. The higher the number of attributes, the greater the number of canonicity tests tends to be (and, consequently, inherited canonicity test failures). When comparing RIn-Close\_CVC algorithms against its best competitor, PPS, we notice that all versions of RIn-Close\_CVC are better than PPS in terms of runtime, but PPS is better than the first version of RIn-Close\_CVC in terms of memory usage. The coverage is a monotonically non-decreasing function of the value of $\epsilon$ \cite{VeronezeEtAl2017}. However, as we can see, the coverage does not grow as fast as the number of biclusters for these datasets. It happened due to bicluster overlapping \cite{VeronezeEtAl2017}: indicating that we are finding new biclusters in portions of the dataset explored with lower values of $\epsilon$. Remarkably, the number of biclusters will start to decrease when $\epsilon$ achieves a sufficiently high value and, indeed, the entire dataset will be considered a single bicluster if the value of $\epsilon$ is high enough \cite{VeronezeEtAl2017}. Those scenarios are not explored here and we invite the reader to see more details about the relationship between number of biclusters, coverage and overlap in our previous work \cite{VeronezeEtAl2017}.

We also performed a comparison between the dataset's coverage using online partitioning, such as done by RIn-Close\_CVC, and a priori partitioning.
For that, we set the maximum perturbation $\epsilon$ of RIn-Close\_CVC and the bin-width of the partitioning equally. We used RIn-Close\_CVCP to mine the biclusters in the partitioned datasets (but we could have itemized the datasets and used some traditional FCA / FPM algorithm as well, see \ref{sec:appendix_class1x2} for details). Table~\ref{tab:resOnAp1} shows the results. As expected, the obtained coverage with the online partitioning is much better than the obtained coverage with a priori partitioning, even resorting to enumerative biclustering algorithms in both cases. 

\begin{table}[!htb]
\centering
\small
\caption{Comparison of the dataset's coverage using online and a priori partitioning. The bin-width used in the a priori partitioning was equal to the value of maximum perturbation $\epsilon$ used in the online partitioning.}
\begin{tabular}{l|c|r|r}
\toprule
& & \multicolumn{2}{c}{Coverage (\%)}\\
Dataset	& $\epsilon$ & Online & A priori \\
\midrule
GDS750	& 40 & 62.00 & 31.36 \\
GDS3035 & 40 &  9.78 &  1.41 \\
GDS181  & 50 & 17.51 &  3.02 \\
\bottomrule
\end{tabular}
\label{tab:resOnAp1}
\end{table}

\subsection{Supervised descriptive pattern mining}

This experimental analysis first provides a comparison between biclustering enumerative solutions with a priori and online partitioning of the numerical attributes of a mixed-attribute dataset. We focused on labeled datasets with the main aim of mining QCARs, i.e, to find a set of attributes together with their values that properly describe the target variable. Thereby, we show how the biclusters yield a parsimonious set of relevant rules, providing useful and interpretable models for a dataset.  Table~\ref{tab:datasets} describes the mixed-attribute datasets used in our experiments. All of them were downloaded from the UCI Repository \footnote{\url{http://archive.ics.uci.edu/ml/}, last access April 1st, 2019}. The dataset Acute has two decision variables, and we will provide results for both of them.

\begin{table}[!htb]
\caption{Description of the mixed-attribute datasets.}
\centering
\small
\begin{tabular}{rlcccl}
\toprule
\textbf{\#} & \textbf{Name} & \textbf{\# rows} & \textbf{\# columns} & \textbf{\# labels} & \textbf{Attributes' description}\\
\midrule
1 & Acute & 120  & 6  & 2  & 1 numerical, 5 categorical\\
2 & Heart  & 270  & 13 & 2  & 5 numerical, 8 categorical \\
3 & Australian  & 690  & 14 & 2  & 6 numerical, 8 categorical \\
\bottomrule
\end{tabular}
\label{tab:datasets}
\end{table}

We use the following equal-width binning rules to partition the numerical attributes: Scott, Freedman-Diaconis (FD), Sturges, and Square Root (SQRT). All of them are provided by the binning functions of MATLAB, Python, and others. We use RIn-Close\_CVCP to enumerate the biclusters in the partitioned datasets. For the numerical attributes, we also use these binning rules to provide the value of the parameter $\epsilon$ of RIn-Close\_CVC3, and we set $\epsilon = 0$ for the categorical attributes. We set $minRow = 5$ and $minCol= 1$, for both RIn-Close\_CVC3 and RIn-Close\_CVCP. A bicluster was considered an interesting QCAR if its confidence was at least 0.95, and its minimum distance from 1 for the lift metric was at least 0.2 \cite{VenturaLuna2018sdpm}.

\begin{table}[!htb]
\caption{Number of biclusters of the solutions with a priori and online partitioning of the numerical attributes.}
\centering
\small
\begin{tabular}{clrrr}
\toprule
& & \textbf{Acute} & \textbf{Heart} & \textbf{Australian} \\
\midrule
\multirow{2}{*}{\textbf{FD}} & \textbf{A priori} & 79 & 12,661 & 50,878 \\
                             & \textbf{Online} & 205 & 58,826 & 299,751 \\
\hline
\multirow{2}{*}{\textbf{Scott}} & \textbf{A priori} & 79 & 15,048 & 66,902 \\
                                & \textbf{Online} & 205 & 90,649 & 553,704 \\
\hline
\multirow{2}{*}{\textbf{SQRT}} & \textbf{A priori} & 94 & 11,811 & 48,896 \\
                               & \textbf{Online} & 229 & 51,259 & 272,974 \\
\hline
\multirow{2}{*}{\textbf{Sturges}} & \textbf{A priori} & 79 & 16,554 & 72,649 \\
                                  & \textbf{Online} & 205 & 116,990 & 1,025,160 \\
\bottomrule
\end{tabular}
\label{tab:resul_qtdbic}
\end{table}

Table~\ref{tab:resul_qtdbic} shows the number of biclusters of the solutions with online and a priori partitioning of the numerical attributes. We notice that the number of biclusters of the results with online partitioning is always greater than the number of biclusters of the results with a priori partitioning. As we know, when we use an equal-width binning to determine the arity of the partitioning (equal to the value of the parameter $\epsilon$ of RIn-Close\_CVC3), enumerative biclustering solutions with a priori partitioning is contained in the enumerative biclustering solutions with online partitioning (missing some biclusters, and finding only parts of some others). Given that the gross number of obtained biclusters is high, it seems reasonable to apply some filter to detect only relevant biclusters. So, it remains to know if the number of relevant biclusters is also greater when not using a priori partitioning. Table~\ref{tab:resul_filters} shows the number of interesting QCARs selected from the enumerative solutions. For the Acute dataset, we are presenting the results for the two decision variables, D1 and D2, respectively. As we can see, there is loss of relevant biclusters with the a priori partitioning of the numerical attributes, leading to a lower row-coverage for the Heart and Australian datasets, which have more numerical attributes than the Acute dataset (which has only 1 numerical attribute).

\begin{table}[!htb]
\caption{Biclustering results after the selection of interesting QCARs.}
\centering
\small
\setlength\tabcolsep{1pt}
\begin{tabular}{cc|rr|rr|rr|rr}
\toprule
& & \multicolumn{2}{c}{ \textbf{FD} } & \multicolumn{2}{c}{ \textbf{Scott} } & \multicolumn{2}{c}{ \textbf{SQRT} } & \multicolumn{2}{c}{ \textbf{Sturges} } \\
& & \textbf{A priori}. & \textbf{Online} & \textbf{A priori.} & \textbf{Online} & \textbf{A priori.} & \textbf{Online} & \textbf{A priori.} & \textbf{Online} \\
\midrule
\multirow{3}{*}{ \textbf{Acute - D1} } & \textbf{\# of QCARs} & 40 & 86 & 40 & 86 & 44 & 93 & 40 & 86 \\
& \textbf{\%row-coverage} & 100.00 & 100.00 & 100.00 & 100.00 & 100.00 & 100.00 & 100.00 & 100.00 \\
\hline
\multirow{3}{*}{ \textbf{Acute - D2} } & \textbf{\# of QCARs} & 52 & 127 & 52 & 127 & 64 & 157 & 52 & 127 \\
& \textbf{\%row-coverage }& 100.0 & 100.0 & 100.0 & 100.0 & 100.0 & 100.0 & 100.0 & 100.0 \\
\hline
\multirow{3}{*}{ \textbf{Heart} } & \textbf{\# of QCARs} & 2809 & 11294 & 3337 & 17155 & 2637 & 9958 & 3555 & 21822 \\
& \textbf{\%row-coverage} & 98.15 & 99.63 & 98.89 & 99.63 & 97.04 & 98.89 & 98.89 & 99.63 \\
\hline
\multirow{3}{*}{ \textbf{Australian} } & \textbf{\# of QCARs} & 9844 & 55877 & 12980 & 88367 & 9276 & 43891 & 11830 & 95056 \\
& \textbf{\%row-coverage} & 97.25 & 98.26 & 97.54 & 98.70 & 97.10 & 98.70 & 96.09 & 98.26 \\
\bottomrule
\end{tabular}
\label{tab:resul_filters}
\end{table}

Table~\ref{tab:rules} shows some rules for the Acute (for both the 1st and 2nd decision variables), Heart, and Australian datasets. It also shows the completeness (also called recall), confidence, lift and leverage of each rule \cite{VenturaLuna2018sdpm}. We can notice how the CVC biclusters 
can provide a useful and interpretable model for a dataset by means of association rules, specifically QCARs in this case.

For instance, for the Acute dataset, we can observe that almost all the patients with inflammation of urinary bladder presented urine pushing (continuous need for urination) and micturition pains. In the same way, we can observe that almost all the patients with nephritis of renal pelvis origin presented lumbar pain and urine pushing. For the Heart dataset, we can see that 30\% of the patients without heart disease were female, with no exercise induced angina, and 0 major vessel colored by fluoroscopy. At the same time, 23\% of the patients with heart disease were male, with an asymptomatic chest pain, and 1 major vessel colored by fluoroscopy. Unfortunately, the Australian dataset does not have the description of its attributes. Anyway, we can notice that the 8th attribute of this dataset is very important to determine the approval or rejection of a credit card application. On the other hand, the 12th attribute is irrelevant, with 90\% of the customers having it equal to 2. We refrained from presenting all the QCARs and their corresponding interpretation due to space restriction.

\begin{table}[!htb]
\caption{Some rules mined by RIn-Close\_CVC3 for Acute, Heart and Australian datasets.}
\footnotesize
\centering
\small
\begin{tabular}{lp{10cm}rrrr}
\toprule
\multicolumn{6}{c}{\textbf{Acute - 1st decision variable (inflammation of urinary bladder)}} \\
\midrule
\# & \textbf{Rule} & \textbf{Comp.} & \textbf{Conf.} & \textbf{Lift} & \textbf{Lev.} \\
\midrule
1 & nausea\{no\}, lumbarPain\{yes\}, micturitionPain\{no\} $\Rightarrow$ 0  & 0.67 & 1.00  & 1.97  & 0.17 \\ 
2 & urinePushing\{no\}, urethraBurning\{no\} $\Rightarrow$ 0  & 0.66 & 1.00  & 1.97  & 0.16 \\ 
3 & urinePushing\{yes\}, micturitionPain\{yes\} $\Rightarrow$ 1  & 0.83 & 1.00  & 2.03  & 0.21 \\ 
4 & urinePushing\{yes\}, urethraBurning\{no\} $\Rightarrow$ 1  & 0.51 & 1.00  & 2.03  & 0.13 \\ 
\midrule
\multicolumn{6}{c}{\textbf{Acute - 2nd decision variable (nephritis of renal pelvis origin)}} \\
\midrule
\# & \textbf{Rule} & \textbf{Comp.} & \textbf{Conf.} & \textbf{Lift} & \textbf{Lev.} \\
\midrule
1 & nausea\{no\}, lumbarPain\{no\} $\Rightarrow$ 0  & 0.71 & 1.00  & 1.71  & 0.17 \\ 
2 & nausea\{no\}, urethraBurning\{no\} $\Rightarrow$ 0  & 0.71 & 1.00  & 1.71  & 0.17 \\ 
3 & lumbarPain\{yes\}, urinePushing\{yes\} $\Rightarrow$ 1  & 0.80 & 1.00  & 2.40  & 0.19 \\
4 & nausea\{yes\}, lumbarPain\{yes\}, micturitionPain\{yes\} $\Rightarrow$ 1  & 0.58 & 1.00  & 2.40  & 0.14 \\ 
\midrule
\multicolumn{6}{c}{\textbf{Heart (absence or presence of heart disease)}} \\
\midrule
\# & \textbf{Rule} & \textbf{Comp.} & \textbf{Conf.} & \textbf{Lift} & \textbf{Lev.} \\
\midrule
1 & sex\{F\}, exercIAngina\{no\}, vesselsColor\{0\} $\Rightarrow$ 0  & 0.30 & 0.96  & 1.72  & 0.07 \\ 
2 & chol[186.00,236.00], oldpeak[0.00,0.50], vesselsColor\{0\} $\Rightarrow$ 0  & 0.22 & 0.97  & 1.75  & 0.05 \\ 
3 & sex\{M\}, chestPain\{asymptomatic\}, vesselsColor\{1\} $\Rightarrow$ 1  & 0.23 & 0.97  & 2.17  & 0.06 \\ 
4 & sex\{M\}, chol[266.00,315.00], exercIAngina\{yes\} $\Rightarrow$ 1  & 0.19 & 1.00  & 2.25  & 0.05 \\ 
\midrule
\multicolumn{6}{c}{\textbf{Australian (approval or rejection of credit card applications)}} \\
\midrule
\# & \textbf{Rule} & \textbf{Comp.} & \textbf{Conf.} & \textbf{Lift} & \textbf{Lev.} \\
\midrule
1 & att8\{0\}, att12\{2\} $\Rightarrow$ 0  & 0.71 & 0.95  & 1.72  & 0.16 \\ 
2 & att2[29.58,34.58], att8\{0\} $\Rightarrow$ 0  & 0.15 & 0.97  & 1.74  & 0.03 \\ 
3 & att8\{1\}, att12\{2\}, att14[397,2385] $\Rightarrow$ 1  & 0.28 & 0.96  & 2.15  & 0.07 \\ 
4 & att4\{2\}, att8\{1\}, att12\{2\}, att14[197,2185] $\Rightarrow$ 1  & 0.29 & 0.96  & 2.15  & 0.07 \\ 
\bottomrule
\end{tabular}
\label{tab:rules}
\end{table}

\section{Concluding remarks}
\label{sec:conclusion}

Concerning enumerative biclustering of numerical datasets, we proposed here a new version of RIn-Close\_CVC, named RIn-Close\_CVC3, which brings a large reduction in memory usage, and also a significant runtime gain. We also showed that the new algorithm also keeps the four key properties of its predecessor: efficiency, completeness, correctness and non-redundancy. Moreover, RIn-Close\_CVC3 can handle datasets: (1) with missing values, (2) characterized by attributes with distinct distributions, and (3) characterized by mixed data types.

The results of our experiments with synthetic datasets showed that the memory usage of RIn-Close\_CVC3 was equivalent to the memory usage of RIn-Close\_CVCP, known to be very attractive. Also, the results of our experiments with real-world datasets showed that RIn-Close\_CVC3 presented a linear growth in the memory usage, even though the number of biclusters exhibited an exponential growth with the value of admissible residue $\epsilon$ (for sufficiently higher values of $\epsilon$, the exponential growth will cease and the number of biclusters will converge to a single one). Thus, RIn-Close\_CVC3 promotes a breakthrough in terms of memory usage by extending the use of lexicographic order for both rows and columns. In this way, a symbol table in memory is no more required and redundancy is avoided by further exploring the lexicographic order. Moreover, the new version also achieved better results in terms of runtime in our experiments. Therefore, the new version dominates its predecessor in all relevant scalability aspects. These are great achievements, opening the possibility of enumerating perturbed maximal CVC biclusters in more challenging scenarios.

Our experimental results also outlined the clear advantages of an online partitioning, as done by RIn-Close CVC3, over an a priori partitioning. Notice that a priori partitioning is a necessary step for biclustering approaches based on traditional enumerative algorithms of FCA / FPM, such as In-Close or LCM. Moreover, our experimental results indicated that the QCARs extracted from the biclusters produced by RIn-Close\_CVC3 are valuable and automatic means of providing useful and relevant interpretable models of a dataset, yielding a parsimonious set of relevant rules for discriminating the class labels.

In future works, the interplay between RIn-Close\_CVC3 biclustering algorithm and QCARs will be further explored in the context of supervised descriptive pattern mining, and also associative classification. We also intend to further explore data-driven parameterization for both a priori and online partitioning enumerative biclustering, making the parameter setting more user-friendly.

\section*{Acknowledgments}

R. Veroneze and F. J. Von Zuben would like to thank FAPESP (Process Number: 2017/21174-8), CAPES and CNPq (Process Number 307228/2018-5) for the financial support.

\appendix

\section{Example of how to mine biclusters in a mixed-attribute dataset}
\label{sec:appendix_ma}

\setcounter{table}{0}

A dataset may have \emph{numerical} and \emph{categorical} attributes. The numerical attributes can be \emph{discrete} (integer attributes) or \emph{continuous} (real-valued attributes). The categorical attributes can be \emph{ordinal} or \emph{nominal}. Binary attributes can be seen as nominal attributes that can assume only two values, such as \emph{Yes} or \emph{No}. A \emph{mixed-attribute dataset} is a dataset with a single type per column and possibly distinct attribute types along the columns. Table~\ref{tab:maDataEx1} shows an example of a mixed-attribute dataset. The attributes \emph{Sex}, \emph{Smoker}, and \emph{Preferred Movie Genre} are nominal attributes, with \emph{Sex} and \emph{Smoker} being binary attributes. \emph{Social Class} is an ordinal attribute, where label \emph{A} represents the wealthiest people. \emph{Age} is taken as an integer attribute. \emph{Weight} and \emph{Height} are considered real-valued attributes.

\begin{table}[!htb]
\caption{Example of a mixed-attribute dataset, with two perturbed biclusters highlighted.}
\centering
\small
\setlength\tabcolsep{1.5pt}
\begin{tabular}{cccrrccc}
\toprule
\textbf{\#} & \textbf{Sex} & \textbf{Age} & \textbf{Weight (kg)} & \textbf{Height (m)} & \textbf{Smoker} & \textbf{Preferred} & \textbf{Social Class} \\
 & & & & & & \textbf{Movie Genre} & \\
\midrule
1   & \colorbox[rgb]{0.7,0.7,0.7}{F}            & 32           & 94.87                & 1.72                & Y               & \colorbox[rgb]{0.7,0.7,0.7}{Action}         & \colorbox[rgb]{0.7,0.7,0.7}{C}                     \\
2           & F            & 33           & 124.15               & 1.66                & N               & Adventure             & C                     \\
3           & F            & 57           & 65.13                & 1.80                & N               & Adventure             & C                     \\
4           & F            & 39           & 58.71                & 1.74                & N               & Comedy          & E                     \\
5  & \colorbox[rgb]{0.7,0.7,0.7}{F}            & 39           & 67.41                & 1.56                & N               & \colorbox[rgb]{0.7,0.7,0.7}{Action}         & \colorbox[rgb]{0.7,0.7,0.7}{C}                     \\
6  & \colorbox[rgb]{0.7,0.7,0.7}{F}            & 47           & 67.19                & 1.79                & Y               & \colorbox[rgb]{0.7,0.7,0.7}{Action}         & \colorbox[rgb]{0.7,0.7,0.7}{B}                     \\
7           & M            & 58           & 42.95                & 1.48                & N               & Action         & A                     \\
8  & \colorbox[rgb]{0.9,0.9,0.9}{M}            & 17           & 109.52               & \colorbox[rgb]{0.9,0.9,0.9}{1.62}                & \colorbox[rgb]{0.9,0.9,0.9}{N}               & Action         & \colorbox[rgb]{0.9,0.9,0.9}{C}                     \\
9          & F            & 48           & 58.07                & 1.50                & N               & Drama          & D                     \\
10  & \colorbox[rgb]{0.9,0.9,0.9}{M}            & 43           & 46.69                & \colorbox[rgb]{0.9,0.9,0.9}{1.61}                & \colorbox[rgb]{0.9,0.9,0.9}{N}               & Adventure             & \colorbox[rgb]{0.9,0.9,0.9}{B}                     \\
11  & \colorbox[rgb]{0.9,0.9,0.9}{M}            & 55           & 85.38                & \colorbox[rgb]{0.9,0.9,0.9}{1.54}                & \colorbox[rgb]{0.9,0.9,0.9}{N}               & Drama          & \colorbox[rgb]{0.9,0.9,0.9}{C}                     \\
12          & M            & 34           & 39.77                & 1.70                & N               & Action         & B                     \\
13          & M            & 51           & 55.72                & 1.93                & Y               & Drama          & B                     \\
14  & \colorbox[rgb]{0.7,0.7,0.7}{F}            & 47           & 57.10                & 1.51                & N               & \colorbox[rgb]{0.7,0.7,0.7}{Action}         & \colorbox[rgb]{0.7,0.7,0.7}{C}                     \\
15          & M            & 38           & 54.01                & 1.85                & Y               & Drama          & C                     \\
16  & \colorbox[rgb]{0.9,0.9,0.9}{M}            & 45           & 73.10                & \colorbox[rgb]{0.9,0.9,0.9}{1.59}                & \colorbox[rgb]{0.9,0.9,0.9}{N}               & Drama          & \colorbox[rgb]{0.9,0.9,0.9}{C} \\
\bottomrule
\end{tabular}
\label{tab:maDataEx1}
\end{table}

Notice that categorical attributes are discrete and finite entities. The domain of a discrete attribute can be represented by a set of symbols without any loss of information. In the ordinal case, a set of integer values obeying a bijective mapping is a straightforward choice. It is the same in the nominal case if we are just focusing on detecting if the attribute value is coincident or not. Other more elaborate or application-dependent mappings are certainly admissible.

Given this mapping for the categorical attributes, our generalized definition of a CVC bicluster in Subsection~\ref{ssec:ma} allows an immediate manipulation of mixed-attribute datasets. We can consider one particular $\epsilon$ per attribute, and every time that a nominal attribute is being manipulated in a specific column of the mixed-attribute dataset, $\epsilon$ should be taken as zero. On the other hand, for categorical attributes exhibiting an ordinal relation, a suitable integer value should be adopted for $\epsilon$ (it will depend on what the user wants to accept as being part of the same group). For an integer or real-valued attribute, an option is to determine the value of $\epsilon$ using a binning algorithm (for instance, we can use the Scott's rule if the attribute is close to being normally distributed). Notice that we are free to choose different binning algorithms for different numerical attributes.

To illustrate this explanation, Table~\ref{tab:maDataEx1} has two highlighted submatrices that are examples of perturbed CVC biclusters. One of the biclusters of Table~\ref{tab:maDataEx1} could be described as the following quantitative itemset: \{Sex\{M\}, Height[1.54,1.62], Smoker\{N\}, SocialClass\{B,C\}\}. For simplicity, we omitted the symbol $\in$.

\section{Comparison between enumerative biclustering approaches with a priori and online partitioning}
\label{sec:appendix_class1x2}

\setcounter{table}{0}

Using a didactic example, by the first time in the literature we are going to illustrate the practical effects of the conceptual differences between enumerative biclustering approaches with a priori partitioning (named here Class1) and online partitioning (named here Class2).

Table~\ref{tab:ec1prep} shows an example of a dataset and its preprocessing for the usage of traditional enumerative algorithms, such as In-Close or LCM. Table~\ref{tab:ec1prep}(a) shows the original dataset. Its attributes are integers uniformly distributed between 1 and 15. We purposely chose integer attributes to highlight that the partitioning of the original dataset may be necessary even when it is already discrete. In some cases, the partitioning is mandatory, for instance when handling real-valued attributes.

Table~\ref{tab:ec1prep}(b) shows the dataset of Table~\ref{tab:ec1prep}(a) after partitioning with equal-width binning (bin size equal to 5). As the attributes of the dataset of Table~\ref{tab:ec1prep}(a) are uniformly distributed, equal-width binning is the most indicated option to perform partitioning. The itemized dataset is shown in Table~\ref{tab:ec1prep}(c), which is equivalent to the discrete dataset of Table~\ref{tab:ec1prep}(b), i.e., they have exactly the same information. The number of columns of the itemized dataset depends on the number of bins used in the partitioning. In our example, we used 3 bins for each of the 4 attributes, so the itemized dataset has 12 attributes. Having these two equivalent matrices, the problem of enumerating all maximal perfect CVC biclusters in the partitioned matrix of Table~\ref{tab:ec1prep}(b) is equivalent to the problem of enumerating all maximal CTV biclusters of ones in the itemized matrix of Table~\ref{tab:ec1prep}(c).

Table~\ref{tab:bicsT1b} lists all maximal perfect CVC biclusters from Table~\ref{tab:ec1prep}(b) with at least two rows and one column, and Table~\ref{tab:bicsT1c} lists all maximal perfect CTV biclusters of ones from Table~\ref{tab:ec1prep}(c) with at least two rows and one column. The biclusters of Tables~\ref{tab:bicsT1b} and \ref{tab:bicsT1c} have a one-to-one correspondence. For instance, the bicluster $B_4=(\{1,2,4\},\{3,6\})$ of Table~\ref{tab:bicsT1c} and the bicluster $B_4=(\{1,2,4\},\{1,2\})$ of Table~\ref{tab:bicsT1b} are equivalent. Thus, having the biclustering solution based on the itemized dataset, it is easy to get the corresponding biclusters based on the partitioned / original dataset. Notice that the biclusters are perfect w.r.t. the partitioned dataset, but they can be perturbed w.r.t. the original dataset.

Since we have enumerative biclustering algorithms such as RIn-Close\_CVCP (which is specialized in enumerating all maximal and perfect CVC biclusters), we can obtain the biclustering solution of Table~\ref{tab:bicsT1b} by directly applying RIn-Close\_CVCP in the dataset of Table~\ref{tab:ec1prep}(b), thus avoiding the necessity of (1) itemization and (2) mapping the biclustering solution to the partitioned / original dataset. However, the worst-case time-complexity of RIn-Close\_CVCP is slightly worse than the worst-case time-complexity of some efficient enumerative biclustering algorithms for binary datasets. For instance, the worst-case time-complexity of In-Close algorithms and RIn-Close\_CVCP are, respectively, $O(qnm^2)$ and $O(qnm(\log n + m))$. At the same time, notice that the user is not totally comfortable to choose the number of bins when using itemization since it dictates the number $m$ of columns of the itemized dataset. Working with the original value of $m$, a reduced value when compared with the one achieved by itemization, RIn-Close\_CVCP is systematically more parsimonious in terms of computational resources.

Other important aspect is that the biclustering solutions shown in Tables~\ref{tab:bicsT1b} and \ref{tab:bicsT1c} are complete and all their biclusters are maximal when we consider the partitioned / itemized datasets (Tables~\ref{tab:ec1prep}(b) and \ref{tab:ec1prep}(c), respectively). It is not true when we consider the original dataset (Table~\ref{tab:ec1prep}(a)) and $\epsilon = 5$ as the maximum perturbation allowed in the partitioning. Table~\ref{tab:bicsT1a} lists all maximal CVC biclusters from Table~\ref{tab:ec1prep}(a) with maximum perturbation $\epsilon = 5$ and at least two rows and one column. Note that the biclustering solutions of Tables~\ref{tab:bicsT1b} and \ref{tab:bicsT1c} are contained in the biclustering solution of Table~\ref{tab:bicsT1a}. For instance: bicluster $B_8=(\{1,2,4,5\},\{1,2\})$ of Table~\ref{tab:bicsT1a} is a maximal version of bicluster $B_4=(\{1,2,4\},\{1,2\})$ of Table~\ref{tab:bicsT1b}, bicluster $B_5=(\{5,7,8\},\{1,3\})$ of Table~\ref{tab:bicsT1a} is a maximal version of bicluster $B_3=(\{5,8\},\{1\})$ of Table~\ref{tab:bicsT1b}, bicluster $B_6=(\{5,7\},\{1,2,3,4\})$ of Table~\ref{tab:bicsT1a} is not listed in Table~\ref{tab:bicsT1b}, and so on. In fact, in this example, only 4 biclusters are in their maximal versions and 7 biclusters are missed when the a priori partition is applied.

A proposal to mitigate these drawbacks of Class1 algorithms 
is the assignment of multiple items for values near the boundary of the bins \cite{HenriquesMadeira2014}. Table~\ref{tab:ec1prep}(d) shows an example of itemization with multiple items assignments. Table~\ref{tab:bicsT1d} lists all maximal perfect CTV biclusters of ones from Table~\ref{tab:ec1prep}(d) with at least two rows and one column. Notice that this solution is not complete when compared to the one in Table~\ref{tab:bicsT1a}. 
Moreover, we have an additional shortcoming: the enumeration is non-redundant from the point of view of the itemized dataset
, but it is redundant from the point of view of the partitioned / original matrix. For instance, the biclusters $B_{20}=(\{1,3,5\},\{8,11,12\})$ and $B_{21}=(\{3,5\},\{8,9,11,12\})$ of Table~\ref{tab:bicsT1d} are mapped, respectively, to the biclusters $(\{1,3,5\},\{3,4\})$ and $(\{3,5\},\{3,4\})$ considering the original / partitioned datasets (Tables~\ref{tab:ec1prep}(a) and \ref{tab:ec1prep}(b), respectively). The bicluster $(\{3,5\},\{3,4\})$ is clearly non-maximal, being a redundant information. Enumerative solutions usually have a huge number of biclusters, so the post-processing of the solution to eliminate redundancy can be very costly.

\begin{table}[!htb]
\caption{Example of a dataset and its preprocessing for the usage of traditional FPM / FCA / GT enumerative algorithms.}
\begin{minipage}{.5\linewidth}
\centering
\small
\captionof{subtable}{Original dataset (its attributes are integers uniformly distributed between 1 and 15).}
\begin{tabular}{r|rrrr}
\hline
&$y_1$&$y_2$&$y_3$&$y_4$\\
\hline
$x_1$&13&15&7&11\\
$x_2$&14&15&14&12\\
$x_3$&2&3&12&12\\
$x_4$&14&15&15&6\\
$x_5$&10&15&10&10\\
$x_6$&2&8&1&3\\
$x_7$&5&13&13&11\\
$x_8$&9&3&15&1
\label{tab:ec1prep_a}
\end{tabular}
\end{minipage}%
\begin{minipage}{.5\linewidth}
\centering
\small
\captionof{subtable}{Dataset of Table~\ref{tab:ec1prep}(a) after partitioning with equal-width binning (bin size = 5).}
\begin{tabular}{r|rrrr}
\hline
&$y_1$&$y_2$&$y_3$&$y_4$\\
\hline
$x_1$&3&3&2&3\\
$x_2$&3&3&3&3\\
$x_3$&1&1&3&3\\
$x_4$&3&3&3&2\\
$x_5$&2&3&2&2\\
$x_6$&1&2&1&1\\
$x_7$&1&3&3&3\\
$x_8$&2&1&3&1
\label{tab:ec1prep_b}
\end{tabular}
\end{minipage}
\begin{minipage}{.5\linewidth}
\centering
\small
\setlength\tabcolsep{3pt} 
\captionof{subtable}{Dataset of Table~\ref{tab:ec1prep}(b) after itemization.}
\begin{tabular}{l|ccc|ccc|ccc|ccc}
\hline
& \multicolumn{3}{c}{$y1$} & \multicolumn{3}{c}{$y2$} & \multicolumn{3}{c}{$y3$} & \multicolumn{3}{c}{$y4$} \\
& 1 & 2 & 3 & 1 & 2 & 3 & 1 & 2 & 3 & 1 & 2 & 3\\
\hline
$x_1$& & &$\times$& & &$\times$& &$\times$& & & &$\times$\\
$x_2$& & &$\times$& & &$\times$& & &$\times$& & &$\times$\\
$x_3$&$\times$& & &$\times$& & & & &$\times$& & &$\times$\\
$x_4$& & &$\times$& & &$\times$& & &$\times$& &$\times$&\\
$x_5$& &$\times$& & & &$\times$& &$\times$& & &$\times$&\\
$x_6$&$\times$& & & &$\times$& &$\times$& & &$\times$& &\\
$x_7$&$\times$& & & & &$\times$& & &$\times$& & &$\times$\\
$x_8$& &$\times$& &$\times$& & & & &$\times$&$\times$& &
\label{tab:ec1prep_c}
\end{tabular}
\end{minipage}
\begin{minipage}{.5\linewidth}
\centering
\small
\setlength\tabcolsep{3pt} 
\captionof{subtable}{Dataset of Table~\ref{tab:ec1prep}(b) after itemization with multiple item assignments. The symbol $+$ indicates the new assignments when compared with Table~\ref{tab:ec1prep}(c).}
\begin{tabular}{l|ccc|ccc|ccc|ccc}
\hline
& \multicolumn{3}{c}{$y1$} & \multicolumn{3}{c}{$y2$} & \multicolumn{3}{c}{$y3$} & \multicolumn{3}{c}{$y4$} \\
& 1 & 2 & 3 & 1 & 2 & 3 & 1 & 2 & 3 & 1 & 2 & 3\\
\hline
$x_1$& & &$\times$& & &$\times$&$+$&$\times$& & &$+$&$\times$\\
$x_2$& & &$\times$& & &$\times$& & &$\times$& &$+$&$\times$\\
$x_3$&$\times$& & &$\times$& & & &$+$&$\times$& &$+$&$\times$\\
$x_4$& & &$\times$& & &$\times$& & &$\times$&$+$&$\times$&\\
$x_5$& &$\times$&$+$& & &$\times$& &$\times$&$+$& &$\times$&$+$\\
$x_6$&$\times$& & & &$\times$& &$\times$& & &$\times$& &\\
$x_7$&$\times$&$+$& & & &$\times$& & &$\times$& &$+$&$\times$\\
$x_8$& &$\times$&$+$&$\times$& & & & &$\times$&$\times$& &
\end{tabular}
\label{tab:ec1prep_d}
\end{minipage}
\label{tab:ec1prep}
\end{table}

\begin{table}[!htb]
\caption{List of all maximal perfect CVC biclusters, $B_i = (I_i, J_i)$, from Table~\ref{tab:ec1prep}(b) with at least two rows and one column.}
\centering
\small
\begin{tabular}{l|l|l}
\toprule
$B_1=(\{3,6,7\},\{1\})$ & $B_2=(\{3,7\},\{1,3,4\})$ & $B_3=(\{5,8\},\{1\})$\\
$B_4=(\{1,2,4\},\{1,2\})$ & $B_5=(\{2,4\},\{1,2,3\})$ & $B_6=(\{1,2\},\{1,2,4\})$\\
$B_7=(\{3,8\},\{2,3\})$ & $B_8=(\{1,2,4,5,7\},\{2\})$ & $B_9=(\{1,5\},\{2,3\})$\\
$B_{10}=(\{2,4,7\},\{2,3\})$ & $B_{11}=(\{2,7\},\{2,3,4\})$ & $B_{12}=(\{4,5\},\{2,4\})$\\
$B_{13}=(\{1,2,7\},\{2,4\})$ & $B_{14}=(\{2,3,4,7,8\},\{3\})$ & $B_{15}=(\{2,3,7\},\{3,4\})$\\
$B_{16}=(\{6,8\},\{4\})$ & $B_{17}=(\{1,2,3,7\},\{4\})$ &\\
\bottomrule
\end{tabular}
\label{tab:bicsT1b}
\end{table}

\begin{table}[!htb]
\caption{List of all maximal perfect CTV biclusters of ones, $B_i = (I_i, J_i)$, from Table~\ref{tab:ec1prep}(c) with at least two rows and one column.}
\centering
\small
\begin{tabular}{l|l|l}
\toprule
$B_1=(\{3,6,7\},\{1\})$ & $B_2=(\{3,7\},\{1,9,12\})$ & $B_3=(\{5,8\},\{2\})$\\
$B_4=(\{1,2,4\},\{3,6\})$ & $B_5=(\{2,4\},\{3,6,9\})$ & $B_6=(\{1,2\},\{3,6,12\})$\\
$B_7=(\{3,8\},\{4,9\})$ & $B_8=(\{1,2,4,5,7\},\{6\})$ & $B_9=(\{1,5\},\{6,8\})$\\
$B_{10}=(\{2,4,7\},\{6,9\})$ & $B_{11}=(\{2,7\},\{6,9,12\})$ & $B_{12}=(\{4,5\},\{6,11\})$\\
$B_{13}=(\{1,2,7\},\{6,12\})$ & $B_{14}=(\{2,3,4,7,8\},\{9\})$ & $B_{15}=(\{2,3,7\},\{9,12\})$\\
$B_{16}=(\{6,8\},\{10\})$ & $B_{17}=(\{1,2,3,7\},\{12\})$ &\\
\bottomrule
\end{tabular}
\label{tab:bicsT1c}
\end{table}

\begin{table}[!htb]
\caption{List of all maximal CVC biclusters, $B_i = (I_i, J_i)$, from Table~\ref{tab:ec1prep}(a) with maximum perturbation $\epsilon = 5$ and at least two rows and one column.}
\centering
\small
\begin{tabular}{l|l|l}
\toprule
$B_1=(\{3,6,7\},\{1\})$ & $B_2=(\{3,6\},\{1,2\})$ & $B_3=(\{6,7\},\{1,2\})$\\
$B_4=(\{3,7\},\{1,3,4\})$ & $B_5=(\{5,7,8\},\{1,3\})$ & $B_6=(\{5,7\},\{1,2,3,4\})$\\
$B_7=(\{1,2,4,5,8\},\{1\})$ & $B_8=(\{1,2,4,5\},\{1,2\})$ & $B_9=(\{1,5\},\{1,2,3,4\})$\\
$B_{10}=(\{2,4,5\},\{1,2,3\})$ & $B_{11}=(\{4,5\},\{1,2,3,4\})$ & $B_{12}=(\{2,5\},\{1,2,3,4\})$\\
$B_{13}=(\{1,4,5\},\{1,2,4\})$ & $B_{14}=(\{1,2,5\},\{1,2,4\})$ & $B_{15}=(\{2,4,5,8\},\{1,3\})$\\
$B_{16}=(\{4,8\},\{1,3,4\})$ & $B_{17}=(\{3,6,8\},\{2\})$ & $B_{18}=(\{3,8\},\{2,3\})$\\
$B_{19}=(\{6,8\},\{2,4\})$ & $B_{20}=(\{1,2,4,5,7\},\{2\})$ & $B_{21}=(\{2,4,5,7\},\{2,3\})$\\
$B_{22}=(\{4,5,7\},\{2,3,4\})$ & $B_{23}=(\{2,5,7\},\{2,3,4\})$ & $B_{24}=(\{1,4,5,7\},\{2,4\})$\\
$B_{25}=(\{1,2,5,7\},\{2,4\})$ & $B_{26}=(\{1,3,5\},\{3,4\})$ & $B_{27}=(\{2,3,4,5,7,8\},\{3\})$\\
$B_{28}=(\{2,3,5,7\},\{3,4\})$ & $B_{29}=(\{4,6,8\},\{4\})$ & $B_{30}=(\{1,2,3,5,7\},\{4\})$\\
\bottomrule
\end{tabular}
\label{tab:bicsT1a}
\end{table}

\begin{table}[!htb]
\caption{List of all maximal perfect CTV biclusters of ones, $B_i = (I_i, J_i)$, from Table~\ref{tab:ec1prep}(d) with at least two rows and one column.}
\centering
\small
\begin{tabular}{l|l|l}
\toprule
$B_1=(\{3,6,7\},\{1\})$ & $B_2=(\{3,7\},\{1,9,11,12\})$ & $B_3=(\{5,7,8\},\{2,9\})$\\
$B_4=(\{5,8\},\{2,3,9\})$ & $B_5=(\{5,7\},\{2,6,9,11,12\})$ & $B_6=(\{1,2,4,5,8\},\{3\})$\\
$B_7=(\{1,2,4,5\},\{3,6,11\})$ & $B_8=(\{1,5\},\{3,6,8,11,12\})$ & $B_9=(\{2,4,5\},\{3,6,9,11\})$\\
$B_{10}=(\{2,5\},\{3,6,9,11,12\})$ & $B_{11}=(\{1,2,5\},\{3,6,11,12\})$ & $B_{12}=(\{2,4,5,8\},\{3,9\})$\\
$B_{13}=(\{4,8\},\{3,9,10\})$ & $B_{14}=(\{3,8\},\{4,9\})$ & $B_{15}=(\{1,2,4,5,7\},\{6,11\})$\\
$B_{16}=(\{2,4,5,7\},\{6,9,11\})$ & $B_{17}=(\{2,5,7\},\{6,9,11,12\})$ & $B_{18}=(\{1,2,5,7\},\{6,11,12\})$\\
$B_{19}=(\{1,6\},\{7\})$ & $B_{20}=(\{1,3,5\},\{8,11,12\})$ & $B_{21}=(\{3,5\},\{8,9,11,12\})$\\
$B_{22}=(\{2,3,4,5,7,8\},\{9\})$ & $B_{23}=(\{2,3,4,5,7\},\{9,11\})$ & $B_{24}=(\{2,3,5,7\},\{9,11,12\})$\\
$B_{25}=(\{4,6,8\},\{10\})$ & $B_{26}=(\{1,2,3,4,5,7\},\{11\})$ & $B_{27}=(\{1,2,3,5,7\},\{11,12\})$\\
\bottomrule
\end{tabular}
\label{tab:bicsT1d}
\end{table}


\singlespacing

\section*{References}

{\footnotesize
\bibliography{tese}

\vspace{5pt}

\textbf{Rosana Veroneze} is a postdoctoral researcher at the Department of Computer Engineering and Industrial Automation, School of Electrical and Computer Engineering, University of Campinas (Unicamp). Her research interests include computational intelligence, data mining and machine learning areas.

\vspace{5pt}

\textbf{Fernando J. Von Zuben} is a Full Professor at the Department of Computer Engineering and Industrial Automation, School of Electrical and Computer Engineering, University of Campinas (Unicamp). The main topics of his research are computational intelligence, bioinspired computing, multivariate data analysis, and machine learning.
}
\end{document}